\newcommand\DoToC{%
  \startcontents
  \printcontents{}{1}{\hrulefill\vskip0pt}
  \vskip0pt \noindent\hrulefill
  }
\theoremstyle{plain}
\newtheorem{theorem}{Theorem}[section]
\newtheorem{proposition}[theorem]{Proposition}
\newtheorem{lemma}[theorem]{Lemma}
\theoremstyle{definition}
\newtheorem{definition}[theorem]{Definition}
\newtheorem{assumption}[theorem]{Assumption}
\newtheorem{remark}[theorem]{Remark}
\newtheorem{example}[theorem]{Example}
\def\Vbar{{\perp\!\!\!\perp}}
\def\NotVbar{\not{\perp\!\!\!\perp}}
\def\bR{{\mathbb R}}
\def\bE{{\mathbb E}}
\def\x{\mathbf{x}}
\def\u{\mathbf{u}}
\def\s{\mathbf{s}}
\def\X{\mathbf{X}}
\def\U{\mathbf{U}}
\def\S{\mathbf{S}}
\def\cX{{\mathcal X}}
\def\cU{{\mathcal U}}
\def\cS{{\mathcal S}}
\def\cR{{\mathcal R}}
\def\cF{{\mathcal F}}
\title{Long-term Causal Inference via Modeling Sequential Latent Confounding}
\author{%
  Weilin Chen \\
  School of Computer Science\\
  Guangdong University of Technology\\
  Guangzhou, China\\
  \texttt{chenweilin.chn@gmail.com}
  \And
  Ruichu Cai\thanks{Corresponding author. Ruichu Cai is also with the Peng Cheng Laboratory, Shenzhen, China.} \\
  School of Computer Science\\
  Guangdong University of Technology\\
  Guangzhou, China\\
  \texttt{cairuichu@gmail.com}
  \AND
  Yuguang Yan \\
  School of Computer Science\\
  Guangdong University of Technology\\
  Guangzhou, China\\
 \texttt{ygyan@gdut.edu.cn}
  \And
  Zhifeng Hao \\
  College of Science \\
  Shantou University\\
  Shantou, China\\
   \texttt{haozhifeng@stu.edu.cn}
  \AND
  José Miguel Hernández-Lobato\\
  Department of Engineering\\
  University of Cambridge\\
  Cambridge CB2 1PZ, United Kingdom\\
  \texttt{jmh233@cam.ac.uk}
}
\begin{document}

\maketitle

\begin{abstract}
Long-term causal inference is an important but challenging problem across various domains.
To solve the latent confounding problem in long-term observational studies, existing methods leverage short-term experimental data.
\citet{ghassami2022combining} propose an approach based on the Conditional Additive Equi-Confounding Bias (CAECB) assumption, 
which asserts that the confounding bias in the short-term outcome is equal to that in the long-term outcome, so that the long-term confounding bias and the causal effects can be identified. 
While effective in certain cases, this assumption is limited to scenarios where there is only one short-term outcome with the same scale as the long-term outcome.
In this paper, we introduce a novel assumption that extends the CAECB assumption to accommodate temporal short-term outcomes.
Our proposed assumption states a functional relationship between sequential confounding biases across temporal short-term outcomes,
under which we theoretically establish the identification of long-term causal effects.
Based on the identification result, we develop an estimator and conduct a theoretical analysis of its asymptotic properties.
Extensive experiments validate our theoretical results and demonstrate the effectiveness of the proposed method.
\end{abstract}

\section{Introduction}
\label{intro}

Long-term causal inference is an important but challenging problem across various scientific fields, such as education \cite{athey2019surrogate}, medicine \cite{fleming1994surrogate}, and marketing  \cite{hohnhold2015focusing}.
While in many real-world scenarios, long-term observational data are readily available, a major challenge for long-term causal inference is the presence of latent confounding in observational studies.
A common way to mitigate this issue is to incorporate short-term experimental data,
which raises a fundamental question:
how can short-term experimental data be leveraged to address latent confounding in observational data for long-term causal inference, as shown in Fig. \ref{fig: causal graph}(a) and \ref{fig: causal graph}(b)?

Existing works explore various methods to mitigate latent confounding by utilizing observational and experimental data based on different assumptions \footnote{Please kindly refer to Appendix \ref{app: discuss assums} for detailed discussion on more existing assumptions.}. 
One widely used assumption is the Latent Unconfoundedness assumption \cite{athey2020combining, chen2023semiparametric, meza2021nested}, 
which posits the short-term potential outcomes mediate the long-term potential outcome in the observational data, i.e., $Y(a) \Vbar A \mid  \S(a), \X, G=O$.
These methods are effective when short-term outcomes $\S$ contain substantial information about the latent confounder $\U$, 
such as in studies on the lifetime effects of youth employment and training programs in the United States \cite{aizer2024lifetime}.
However, this assumption essentially restricts that the latent confounders can not affect the long-term outcome, i.e., ruling out the causal edge $\U \rightarrow Y$.
To address this limitation, a follow-up work \cite{ghassami2022combining} proposes a novel assumption called Conditional Additive Equi-Confounding Bias (CAECB), allowing for the causal link $\U \rightarrow Y$.
The CAECB assumption states that the confounding bias in the short-term outcome is equal to that in the long-term outcome, enabling the identification of long-term causal effects.
This assumption may be more reasonable and more aligned with practical settings where confounding biases share a similar manner over short- and long-term outcomes conditional on covariates $\X$,
such as in research studying the impact of school finance reforms on student outcomes \cite{jackson2016effects}.

Following the work of \citet{ghassami2022combining}, we focus on the setting where the causal link $\U \rightarrow Y$ exists.
Beyond this, we further consider temporal short-term outcomes, as shown in Figure \ref{fig: causal graph}(c), a more common scenario where the method under the CAECB assumption \cite{ghassami2022combining} is not applicable.
In many real-world applications, 
short-term outcomes exhibit temporal dependencies, and capturing these relationships is essential for inferring long-term causal effects.
For example, in evaluating the long-term (e.g. year-long) effectiveness of the medication, patients undergo regular follow-up visits (e.g., weekly or monthly). 
During these visits, temporal short-term health indicators are recorded as short-term outcomes.
These sequential measurements capture the progression of patients’ conditions over time, providing valuable information on the long-term outcome of interest.
Consequently, the absence of temporal considerations in existing methods limits their effectiveness and constrains the potential of long-term causal inference in practice.

\begin{figure}[!t] 
	\centering
    \includegraphics[width=1.\textwidth]{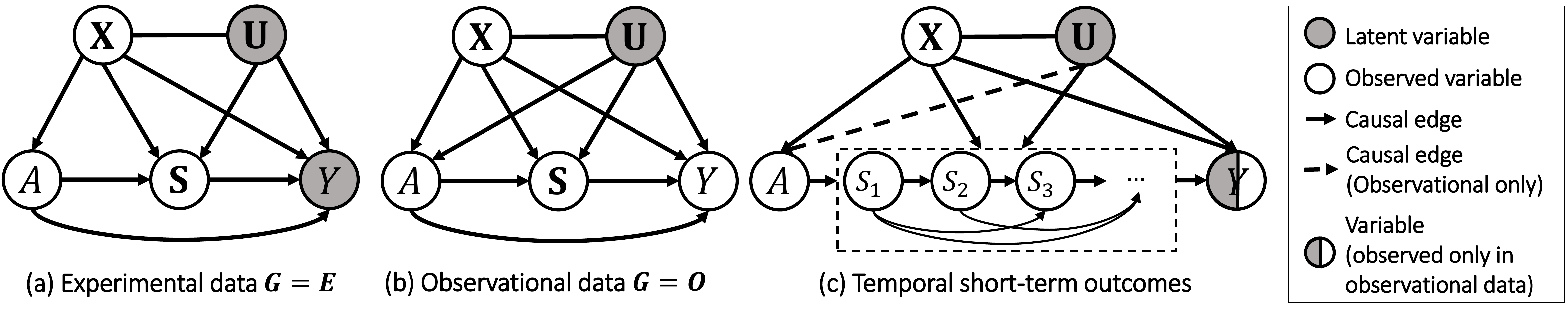}
	\caption{
    Causal graphs for experimental and observational data with $\X$ as covariates, $\U$ as latent confounders, $A$ as treatment, $\S$ as short-term outcomes, and $Y$ as the long-term outcome.
    Fig. \ref{fig: causal graph}(a) shows the experimental data, where $A$ is unaffected by $\U$ and $Y$ cannot be unobserved.
    Fig. \ref{fig: causal graph}(b) depicts the observational data, where $\U$ affects $A$, $\S$, and $Y$, and $Y$ can be observed.
    Fig. \ref{fig: causal graph}(c) illustrates the full causal graph with temporally extended short-term outcomes.
    }
    \label{fig: causal graph} 
\end{figure}

In this paper, we introduce a novel assumption called \textbf{Functional CAECB (FCAECB)}, which extends the existing CAECB assumption to capture the temporal dependencies among temporal short-term outcomes.
Roughly speaking, the proposed FCAECB assumption posits a functional relationship between sequential latent confounding biases.
Under this assumption, we establish the theoretical identification of long-term causal effects.
Correspondingly, we devise an algorithm for the long-term effect estimation. 
Additionally, we analyze the convergence rates of our proposed estimator in the asymptotic and finite sample setting within a generic nonparametric regression framework, 
with the ultimate goal of deepening the understanding of how sequential short-term confounding biases contribute to inferring long-term effects.
Overall, our contribution can be summarized as follows:
\begin{itemize}[leftmargin=8pt]
    \item \textbf{Novel Assumption for Long-Term Causal Inference:} We study the problem of long-term causal inference in the presence of temporal short-term outcomes.
    We propose a novel assumption named FCAECB for identifying long-term causal effects,
    which enables capturing the time-dependent relationships between sequential latent confounding biases. Note that the existing CAECB assumption can be seen as our special case.
    \item \textbf{Estimator for Heterogeneous Long-Term Effects:} We devise an estimator for estimating heterogeneous long-term effects under the proposed FCAECB assumption, 
    which can be implemented using any machine learning regression method. 
    Theoretically, we analyze the asymptotic properties of our estimator.
    \item \textbf{Empirical Validation:} We conduct extensive experiments to validate our theoretical findings and demonstrate the effectiveness of the proposed estimator.
\end{itemize}

\section{Related Work}
\label{related work}

\textbf{Long-term causal inference}
For years, researchers have investigated which short-term outcomes can reliably predict long-term causal effects \cite{prentice1989surrogate, frangakis2002principal, lauritzen2004discussion, gilbert2008evaluating, chen2007criteria,ju2010criteria,yin2020novel}.
Recently, there has been growing interest in estimating long-term causal effects using surrogates, which is also the focus of this paper.
\underline{One line of work} assumes the unconfoundedness assumption.
Under the unconfoundedness assumption, LTEE \cite{cheng2021long} and Laser \cite{cai2024long} are based on specifically designed neural networks for long-term causal inference.
EETE \cite{kallus2020role} explores the data efficiency from the surrogate in several settings and proposes an efficient estimator for treatment effects.
ORL \cite{tran2023inferring} introduces a doubly robust estimator for average treatment effects using only short-term experiments, additionally assuming stationarity conditions between short-term and long-term outcomes.
\citet{singh2024double} proposes a kernel ridge regression-based estimator for long-term effect under continuous treatment.
Additionally, \citet{yang2024learning, wu2024policy} develop a policy learning method for balancing short-term and long-term rewards.
Our work is \textbf{different} from theirs. We do not rely on the unconfoundedness assumption, and we use the data combination technique to solve the problem of unobserved confounders.
\underline{Another line of work}, which avoids the unconfoundedness assumption, tackles the latent confounding problem by combining experimental and observational data.
This setting is first introduced by \citet{athey2019surrogate}, 
which, under surrogacy assumption, constructs the so-called Surrogate Index (SInd) as the substitutions for long-term outcomes in the experimental data for effect identification.
As follow-up work, \citet{athey2020combining} introduce the latent unconfoundedness assumption, which assumes that short-term potential outcomes can mediate the long-term potential outcomes,  thereby enabling long-term causal effect identification.
Under this assumption, \citet{meza2021nested, yang2024estimating, chen2023semiparametric} propose several estimators for effect estimation.
The alternative feasible assumptions \cite{ghassami2022combining} are proposed to replace the latent unconfoundedness assumption, e.g., the Conditional Additive Equi-Confounding Bias (CAECB) assumption.
Based on proximal methods \cite{tchetgen2024introduction}, \citet{imbens2022long} propose considering the short-term outcomes as proxies of latent confounders, thereby achieving effect identification.
Among the existing literature, the most closely related work is the work of \citet{ghassami2022combining}.
Our work can be viewed as a \textbf{significant extension} of theirs, as we consider a more practical scenario where short-term outcomes exhibit temporal dependencies, and theoretically, their CAECB assumption is a special case of our proposed assumption.
Our work is also closely related to the work of \citet{imbens2022long}, which also explores the sequential short-term outcomes and assumes them as proxies of latent confounders. Different from that, we explore the sequential information from short-term confounding bias, which allows for inferring long-term confounding bias and results in the final effect identification. 

\textbf{Modeling Latent Confounding Bias}
An effective way to solve the latent confounding problem in the data combination setting is to model latent confounding bias.
\citet{kallus2018removing} proposes modeling confounding bias under a linearity assumption.
\citet{hatt2022combining} introduce to model the nonlinear confounding bias using the representation learning technique.
\citet{wu2022integrative} propose the integrative R-learner via a regularization for the conditional effects and confounding bias with the Neyman orthogonality.
\citet{zhou2025two} propose a two-stage representation learning strategy to model such a confounding bias.
\textbf{Different} from these works, we focus on the long-term causal inference setting, and rather than focus on how to model the confounding bias, we concentrate more on the relationship between sequential confounding biases.

\section{Preliminaries}
\label{Notations}

\subsection{Notations, Problem Definition, Basic Assumptions}

Let 
$A \in \{0,1\}$ be the treatment variable, 
$\X \in \cX \subseteq  \bR^{d}$ be the observed covariates where $d$ is the dimension of $\X$,
$\U \in \cU \subseteq \bR^{d_U}$ be the latent confounders where  $d_U$ is the dimension of $\U$.
Let $\S = [ S_1, S_2, \dots , S_T ] $ be the short-term outcome variable where $S_t \in \cS \subseteq \bR $ is the short-term outcome measured at time step $t$,
and $Y=S_{T+\mu} \in \cR \subseteq \bR $ be the long-term outcome.
Further, we leverage the potential outcome framework \cite{rubin1978bayesian}.
We denote
$\S(a)$ as the potential short-term outcome, 
$S_t(a)$ as the potential short-term outcome at time step $t$,
and $Y(a)$ as the potential long-term outcome.
Following existing work on long-term inference \cite{athey2020combining, ghassami2022combining, imbens2022long, chen2023semiparametric}, 
we denote $G\in \{E,O\}$ as the indicator of data group, where $G=E$ indicates experimental data and $G=O$ indicates observational data. 
Let lowercase letters (e.g., $a, \x, \u, \s, y, s(a), y(a)$) denote the value of the previously described random variables.
Let the superscript $(i)$ denote a specific unit, e.g., $\x^{(i)}$ is the covariate value of unit $i$. 
Then, the experimental data and the observational data are denoted as $\mathbb D_e =\{a^{(i)}, \x^{(i)}, \s^{(i)}, g^{(i)}=E\}_{i=1}^{n_e}$ and $\mathbb D_o = \{a^{(i)}, \x^{(i)}, \s^{(i)}, y^{(i)}, G^{(i)}=O\}_{i=n_e+1}^{n_e+n_o}$, where $n_e$ and $n_o$ are the size of experimental data and the observational data respectively. 

For ease of convenience, we denote the following nuisance functions and confounding bias:
\begin{equation} \label{defined nuisance} 
    \begin{aligned}
      &  \mu_{S_t}^E(A,\X) = \bE [\S_t\mid A,\X,G=E], \quad \mu_{S_t}^O(A,\X) = \bE [\S_t\mid A,\X,G=O], \\
      &  \mu_Y^E(A,\X) = \bE [Y\mid A,\X,G=E], \quad  \mu_Y^O(A,\X) = \bE [Y\mid A,\X,G=O], \\
      & \omega_t(\X) =  \mu_{S_t}^E(1,\X) - \mu_{S_t}^E(0,\X) + 
      \mu_{S_t}^O(0,\X) - \mu_{S_t}^O(1,\X),
    \end{aligned}
\end{equation}
where $\omega_t(\X)$ is known as the \textit{confounding bias},  the discrepancy between the conditional mean outcome differences derived from the experimental data and the observational data (Please see Sections \ref{sec: CAECB assumption} and \ref{sec: extension Identification} for more details on how it serves to the identification of long-term effects).

Moreover, we denote stochastic boundedness with $O_p$ and convergence in probability with $o_p$. We denote $X_1 \Vbar X_2$ as the independence between $X_1$ and $X_2$. We use $a_n \asymp b_n$ to denote both $a_n / b_n$ and $b_n / a_n$ are bounded. We use $a_n \lesssim b_n$ to denote both $a_n \leq C b_n$ for some constant $C > 0$.

\textbf{Task}: Given a short-term experimental dataset $\mathbb D_e $ and a long-term observational dataset $\mathbb D_o $, the goal in this paper is to identify and estimate the Heterogeneous Long-term Causal Effects (HLCE), i.e.,
\begin{equation}
    \tau (\x) = \bE [Y(1)-Y(0)\mid \X=\x]. 
\end{equation}
However, HLCE $\tau (\x)$ is not identifiable from the experimental data alone, since $Y$ is missing in that dataset.
Also it is not identifiable from the observational data alone, since the observational data suffers from the latent confounding problem, i.e., $A \NotVbar \{Y(a),\S(a)\}\mid \X, G=O$. 
Furthermore, the information regarding the causal effects in the experimental data is not necessarily relevant to that in the observational data without further assumptions.
To ensure the identification, we first make the following assumptions that are commonly assumed in long-term causal inference \cite{  athey2019surrogate, athey2020combining, chen2023semiparametric, ghassami2022combining, imbens2022long}: 

\begin{assumption}[Consistency] \label{assum: consist}
If $A=a$, then $Y=Y(a)$, and $\S=\S(a)$.
\end{assumption}

\begin{assumption}[Positivity] \label{assum: positi}
$\forall a,\x$, $0<P(A=a\mid \X=\x);P(G=O\mid A=a, \X=\x)<1$.
\end{assumption}

\begin{assumption} [Observational Data]
\label{assum: internal validity of obs}
$\forall a$, $A \Vbar \{Y(a),\S(a)\}\mid \X, \U, G=O$. 
\end{assumption}

\begin{assumption} [Experimental Data]
\label{assum: internal validity of exp}
$\forall a$, $A\Vbar \{Y(a),\S(a)\}\mid \X, G=E$.
\end{assumption}

\begin{assumption} [Data Combination] 
\label{assum: external validity of exp}
 $\forall a$, $G\Vbar \{Y(a),\S(a)\}\mid \X$.
\end{assumption}

Assumptions \ref{assum: consist} and \ref{assum: positi} are standard assumptions in causal inference \cite{rubin1974estimating, imbens2000role}. 
Assumptions \ref{assum: internal validity of obs}, \ref{assum: internal validity of exp} and \ref{assum: external validity of exp} are mild and widely assumed in data combination settings \cite{shi2023data, imbens2022long, athey2019surrogate, athey2020combining, hu2023identification}. 
Specifically, Assumption \ref{assum: internal validity of obs} allows the existence of latent confounders in observational data, thus it is much weaker than the traditional unconfoundedness assumption.
Assumption \ref{assum: internal validity of exp} is reasonable and can hold since the treatment assignment mechanism is under control in the experiments.
Assumption \ref{assum: external validity of exp} connects the potential outcome distributions between observational and experimental data. 

However, Assumptions \ref{assum: consist}, \ref{assum: positi}, \ref{assum: internal validity of obs}, \ref{assum: internal validity of exp} and \ref{assum: external validity of exp} are still not sufficient to identify the causal estimand of interest. 
The root cause is that, even though the assumptions above link the experimental and observational data, the (long-term) latent confounding problem remains unsolved.
In the following section, we first review a method proposed by \citet{ghassami2022combining}, which poses an extra assumption, called the Conditional Additive Equi-Confounding Bias (CAECB) assumption, to achieve the identification of HLCE. 
Then, we propose our approach that generalizes the method of \citet{ghassami2022combining} to allow temporal short-term outcomes.

\begin{figure*}[!t]
    \centering
    \subfigure[Schematic representation of CAECB assumption \cite{ghassami2022combining}.]
    {\includegraphics[width=.44\textwidth]{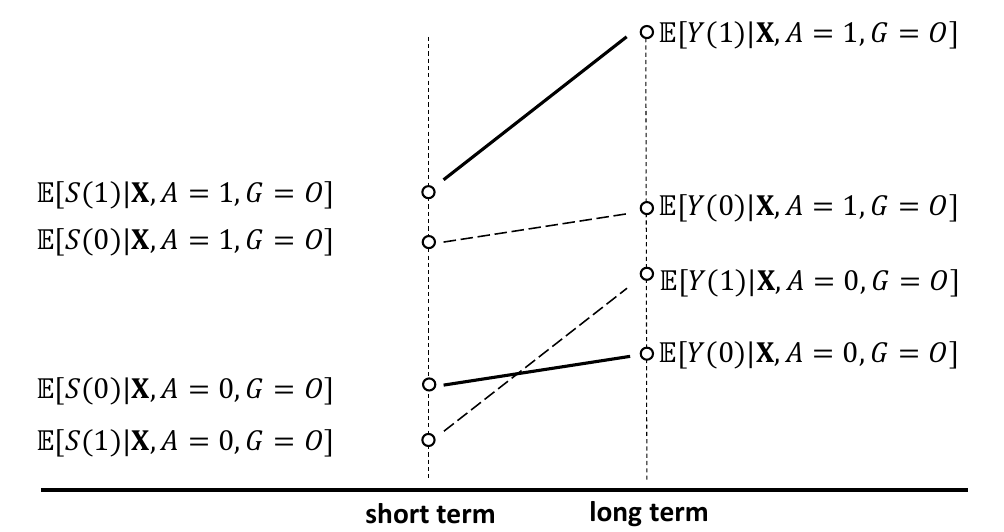}
     \label{fig: existing assumption}
    }
    \hspace{2pt}
    \subfigure[Schematic representation of our proposed FCAECB assumption.]{\includegraphics[width=.53\textwidth]{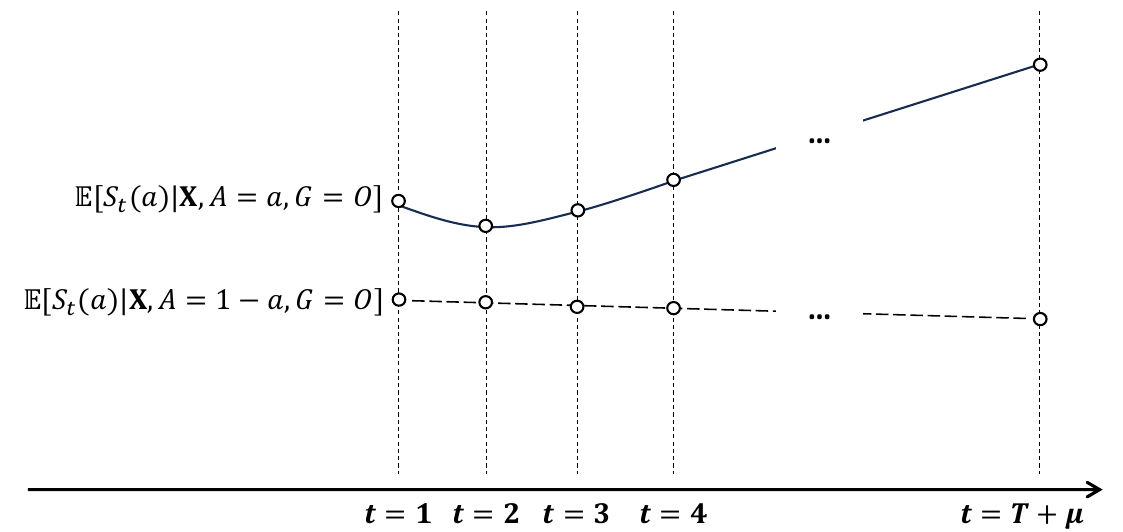}
        \label{fig: our assumption}
    }
    \caption{Schematic representations of CAECB assumption proposed by \citet{ghassami2022combining} and our FCAECB assumption. As shown in Figure \ref{fig: existing assumption}, the CAECB assumption requires that the confounding bias in the short-term outcome is equal to that in the long-term outcome. As shown in Figure \ref{fig: our assumption}, the FCAECB assumption relaxes this constraint by allowing for temporal short-term outcomes and only requiring that confounding biases across different time steps follow a specific pattern rather than remaining equal.}
    \label{fig: assumption}
\end{figure*}

\subsection{CAECB Assumption \cite{ghassami2022combining}}
\label{sec: CAECB assumption}

\citet{ghassami2022combining} introduced a method for combining experimental and observational data to identify the HLCE under an extra CAECB assumption.
The method under CAECB assumption is only applicable to scenarios with a one-dimensional short-term outcome, which we denote as $S$ (and its corresponding potential outcome $S(a)$) with slightly abusing notation.

To begin with, we first restate the CAECB assumption:

\begin{assumption} [Conditional Additive Equi-Confounding Bias, CAECB \cite{ghassami2022combining}] \label{assum: equ bias}
The difference in conditional expected values of short-term potential outcomes across treated and control groups is the same as that of the long-term potential outcome variable.
Formally, $\forall a$, we have 
\begin{equation} \label{eq: equ bias}
\begin{aligned}
    & \bE [S(a)\mid \X,A=0,G=O]-\bE[S(a)\mid \X,A=1,G=O]
    \\ = & \bE [Y(a)\mid \X,A=0,G=O]-\bE[Y(a)\mid \X,A=1,G=O].
\end{aligned}
\end{equation}

\end{assumption}

\begin{theorem} \label{theo: identifi} Suppose Assumptions \ref{assum: consist}, \ref{assum: positi}, \ref{assum: internal validity of obs}, \ref{assum: internal validity of exp}, \ref{assum: external validity of exp} and \ref{assum: equ bias} hold, then $\tau(\x)$ can be identified: 
    \begin{equation} \label{eq: identi} 
      \begin{aligned}
       \tau(\x) 
            = & \mathbb E[Y(1) - Y(0)\mid \X=\x]
            = \mu_Y^O(1,\x) - \mu_Y^O(0,\x)  + \omega(\x),
      \end{aligned}
    \end{equation}
    where $\omega(\x) = \mu_S^E(1,\x) -  \mu_S^E(0,\x)
             +  \mu_S^O(0,\x) - \mu_S^O(1,\x)$
             is the short-term confounding bias.
\end{theorem}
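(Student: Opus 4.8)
The plan is to rewrite each of the two potential-outcome means in the definition $\tau(\x)=\bE[Y(1)\mid\X=\x]-\bE[Y(0)\mid\X=\x]$ entirely in terms of the observable nuisances $\mu_S^E,\mu_S^O,\mu_Y^O$. First I would use the data-combination Assumption~\ref{assum: external validity of exp}, which makes the conditional law of $\{Y(a),\S(a)\}$ given $\X$ identical across the two groups, to write $\bE[Y(a)\mid\X=\x]=\bE[Y(a)\mid\X=\x,G=O]$ and $\bE[S(a)\mid\X=\x]=\bE[S(a)\mid\X=\x,G=E]$. Chaining the second identity with Assumption~\ref{assum: internal validity of exp} (internal validity of the experiment) and Assumption~\ref{assum: consist} (consistency) yields the \emph{anchor identity} $\bE[S(a)\mid\X=\x,G=O]=\mu_S^E(a,\x)$: the short-term potential-outcome mean in the observational world is pinned down by the experiment.

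Next I would decompose each conditional potential-outcome mean inside the observational population by conditioning on $A$. Let $e_O(\x):=P(A=1\mid\X=\x,G=O)\in(0,1)$ (Assumption~\ref{assum: positi}) and, for a generic outcome $W\in\{S,Y\}$, write the confounding gap within the observational world as $\delta^a_W(\x):=\bE[W(a)\mid\X=\x,A=0,G=O]-\bE[W(a)\mid\X=\x,A=1,G=O]$. The law of total expectation together with consistency then gives
\[
\bE[W(1)\mid\X=\x,G=O]=\mu_W^O(1,\x)+(1-e_O(\x))\,\delta^1_W(\x),\qquad \bE[W(0)\mid\X=\x,G=O]=\mu_W^O(0,\x)-e_O(\x)\,\delta^0_W(\x).
\]
Taking $W=S$ and equating with the anchor identity identifies the two products $(1-e_O(\x))\,\delta^1_S(\x)=\mu_S^E(1,\x)-\mu_S^O(1,\x)$ and $e_O(\x)\,\delta^0_S(\x)=\mu_S^O(0,\x)-\mu_S^E(0,\x)$ — precisely the propensity-weighted gaps that appear in the same display when $W=Y$, so $e_O$ never needs to be estimated.

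Finally, I would take the $W=Y$ instance of the display and invoke the CAECB Assumption~\ref{assum: equ bias}, which is exactly the equality $\delta^a_Y(\x)=\delta^a_S(\x)$ for $a\in\{0,1\}$, to replace the unobserved long-term gaps by the identified short-term ones. This gives $\bE[Y(1)\mid\X=\x,G=O]=\mu_Y^O(1,\x)+\mu_S^E(1,\x)-\mu_S^O(1,\x)$ and $\bE[Y(0)\mid\X=\x,G=O]=\mu_Y^O(0,\x)+\mu_S^E(0,\x)-\mu_S^O(0,\x)$; subtracting and collecting the four short-term terms into $\omega(\x)$ delivers \eqref{eq: identi}. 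The main obstacle, modest as it is, is the bookkeeping around $e_O(\x)$: one must check that CAECB is phrased at the level of the raw gaps $\delta^a_W$, not of gaps already reweighted by the propensity, so that substituting $\delta^a_Y=\delta^a_S$ lands precisely on the combinations $(1-e_O)\,\delta^1_S$ and $e_O\,\delta^0_S$ that the anchor pins down; once this alignment is verified the remaining algebra is linear and the propensity cancels of its own accord.
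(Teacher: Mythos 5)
Your proposal is correct and follows essentially the same route as the paper's proof: both decompose the observational conditional means of the potential outcomes by treatment status (via consistency and positivity), invoke CAECB to equate the long- and short-term confounding gaps, and use Assumptions \ref{assum: external validity of exp}, \ref{assum: internal validity of exp}, and \ref{assum: consist} to pin the short-term gaps down with the experimental means. The only difference is bookkeeping — you identify the propensity-weighted gaps $(1-e_O)\delta^1_S$ and $e_O\delta^0_S$ arm by arm before substituting, whereas the paper cancels them by subtracting the analogous decomposition of the short-term effect — which yields the same algebra and the same cancellation of the propensity.
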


Proof can be found in Appendix \ref{app: proof of iden}. 
A similar identification result in terms of long-term average causal effects has been established by \citet{ghassami2022combining}. For completeness, we present the identification result of the HLCE $\tau(\x)$ in Theorem \ref{theo: identifi}.

\begin{remark} [Interpretation on Assumption \ref{assum: equ bias}]
Assumption \ref{assum: equ bias} is intuitively illustrated in Figure \ref{fig: existing assumption}.
Assumption \ref{assum: equ bias} can be seen as a generalization of the parallel assumption in the difference-in-differences (DiD) framework \cite{angrist2009mostly,ashenfelter1984using},
Note that Assumption \ref{assum: equ bias} essentially implies that the short-term confounding bias is the same as the long-term one.
In this way, $\omega(\x)$ can be interpreted as the long-term confounding bias, resulting in the identification result in Eq. \eqref{eq: identi}.
\end{remark}

Considering Eq. \eqref{eq: equ bias} in Assumption \ref{assum: equ bias}, $S$ and $Y$ should be on the same scale, which restricts the practical application of the method under this assumption.
In the next section, we extend this assumption to allow for temporal short-term outcomes, which significantly improve the practical utility.

\section{Long-term Identification under Functional CAECB Assumption}
\label{sec: extension Identification}

In this section, we introduce a novel assumption considering the temporal information between temporal short-term outcomes, enabling the identification of long-term causal effects. 

Specifically, we formalize our proposed assumption:

\begin{assumption} [Functional Conditional Additive Equi-Confounding Bias, FCAECB] 
\label{assum: time series equ bias}
The difference in conditional expected values of short-term potential outcomes across treated and control groups between two time steps follows the learnable function form $f: \bR \rightarrow \bR$.
Formally, 
$\forall a$, we have 
\begin{equation} \label{eq: time series equ bias}
\begin{aligned}
    b_{t+1}(a,\X) = f(\X) b_t(a,\X) ,
\end{aligned}
\end{equation}
where $b_t(a,\X)=\bE [S_t(a)\mid \X,A=0,G=O]-\bE[S_t(a)\mid \X,A=1,G=O]$.
\end{assumption}

\begin{remark} [Interpretation on Assumption \ref{assum: time series equ bias}]
We illustrate our proposed FCAECB assumption in Figure \ref{fig: our assumption}. 
We relax the CAECB assumption \cite{ghassami2022combining} to allow for the temporal short-term outcomes, instead of restricting that the short-term outcome should be the same scale as the long-term outcome.
Additionally, the existing CAECB assumption can be seen as our special case when $f(\X)$ in the FCAECB assumption satisfies $f(\X)=1$.
\end{remark}

To provide a better understanding of our Assumption \ref{assum: time series equ bias}, we provide the insight in term of the functional form $f(\X)$ in the following proposition.

\begin{proposition} \label{propo: confounding bias function}
    Under Assumption \ref{assum: time series equ bias},  $\forall t$ the confounding biases between times $t$ and $t+1$ follow
\begin{equation}\label{eq: propo}
\begin{aligned}
    \omega_{t+1}(\X) = f(\X) \omega_t(\X) ,
\end{aligned}
\end{equation}
where $\omega_t(\X)$ is the confounding bias at time step $t$, defined as $\omega_t(\X) =  \mu_{S_t}^E(1,\X) - \mu_{S_t}^E(0,\X) + 
      \mu_{S_t}^O(0,\X) - \mu_{S_t}^O(1,\X)$.
\end{proposition}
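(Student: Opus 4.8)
The plan is to connect the observational confounding biases $b_t(a,\X)$ appearing in Assumption~\ref{assum: time series equ bias} with the full confounding biases $\omega_t(\X)$ defined in Eq.~\eqref{defined nuisance}, and then push the functional recursion \eqref{eq: time series equ bias} through that connection. The first step is to expand $\omega_t(\X)$ using its definition and the experimental-data assumptions. Under Assumption~\ref{assum: internal validity of exp} (internal validity of the experiment), $\mu_{S_t}^E(a,\X) = \bE[S_t(a)\mid \X, A=a, G=E] = \bE[S_t(a)\mid \X, G=E]$; under Assumption~\ref{assum: external validity of exp} (data combination), this equals $\bE[S_t(a)\mid \X, G=O]$. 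Hence the experimental part of $\omega_t(\X)$, namely $\mu_{S_t}^E(1,\X)-\mu_{S_t}^E(0,\X)$, equals $\bE[S_t(1)\mid\X,G=O]-\bE[S_t(0)\mid\X,G=O]$, a quantity expressible purely in terms of observational potential-outcome means.

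The second step is to rewrite the observational part $\mu_{S_t}^O(0,\X)-\mu_{S_t}^O(1,\X)$ using consistency (Assumption~\ref{assum: consist}): $\mu_{S_t}^O(a,\X) = \bE[S_t\mid A=a,\X,G=O] = \bE[S_t(a)\mid A=a,\X,G=O]$. Combining the two steps and regrouping the four terms, I expect $\omega_t(\X)$ to collapse exactly into $b_t(1,\X)+\big(\text{something}\big)$ — more precisely, writing everything as differences of $\bE[S_t(a)\mid A=a',\X,G=O]$, the telescoping should yield $\omega_t(\X) = \sum_{a\in\{0,1\}} \big(\bE[S_t(a)\mid A=0,\X,G=O] - \bE[S_t(a)\mid A=1,\X,G=O]\big)$ after using external validity to swap the $A=a$ conditioning in the observational terms for the experimental means; i.e. $\omega_t(\X) = b_t(0,\X)+b_t(1,\X)$. (I would double-check the signs and the exact index bookkeeping here, since this algebraic identification of $\omega_t$ with $b_t(0,\cdot)+b_t(1,\cdot)$ is the crux.)

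The third and final step is then immediate: apply Assumption~\ref{assum: time series equ bias} to each term, $b_{t+1}(a,\X) = f(\X)\,b_t(a,\X)$ for $a=0$ and $a=1$, and sum:
\begin{equation*}
\omega_{t+1}(\X) = b_{t+1}(0,\X) + b_{t+1}(1,\X) = f(\X)\big(b_t(0,\X)+b_t(1,\X)\big) = f(\X)\,\omega_t(\X),
\end{equation*}
which is exactly Eq.~\eqref{eq: propo}. The linearity of the recursion in $a$ is what makes this work cleanly — because $f(\X)$ does not depend on $a$, it factors out of the sum over treatment arms.

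The main obstacle I anticipate is purely the bookkeeping in step two: the definition of $\omega_t$ mixes experimental conditional means $\mu^E_{S_t}(a,\X)$ (which condition on $A=a$ within the experiment) with observational ones $\mu^O_{S_t}(a,\X)$, whereas $b_t(a,\X)$ is written entirely in terms of observational potential-outcome means $\bE[S_t(a)\mid\X,A=a',G=O]$ with a \emph{fixed} intervention index $a$ and a \emph{varying} conditioning arm $a'$. Getting from one representation to the other requires carefully invoking consistency, experimental internal validity, and the data-combination assumption in the right order, and making sure no term is double-counted or sign-flipped. Once that identity $\omega_t(\X)=b_t(0,\X)+b_t(1,\X)$ (or whatever the correct combination turns out to be) is established, the proposition follows in one line.
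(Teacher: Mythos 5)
Your route is the same as the paper's (Appendix \ref{app: proof propo}): express $\omega_t(\X)$ as a linear combination of the observational biases $b_t(a,\X)$ by invoking consistency, Assumption \ref{assum: internal validity of exp} and Assumption \ref{assum: external validity of exp}, and then push the recursion $b_{t+1}(a,\X)=f(\X)b_t(a,\X)$ through that combination, with $f(\X)$ factoring out because it does not depend on $a$. The one concrete slip is the identity you guessed at the crux: $\omega_t(\X)$ is \emph{not} $b_t(0,\X)+b_t(1,\X)$. Carrying out the bookkeeping you deferred — write $\mu_{S_t}^E(1,\X)-\mu_{S_t}^E(0,\X)=\bE[S_t(1)\mid\X,G=O]-\bE[S_t(0)\mid\X,G=O]$ (consistency, Assumptions \ref{assum: internal validity of exp}, \ref{assum: external validity of exp}), expand each of these by the law of total expectation over $A$ in the observational arm, and subtract $\mu_{S_t}^O(1,\X)-\mu_{S_t}^O(0,\X)=\bE[S_t(1)\mid A=1,\X,G=O]-\bE[S_t(0)\mid A=0,\X,G=O]$ — and you obtain the propensity-weighted combination
\begin{equation*}
\omega_t(\X)\;=\;P(A=0\mid\X,G=O)\,b_t(1,\X)\;+\;P(A=1\mid\X,G=O)\,b_t(0,\X),
\end{equation*}
which equals the unweighted sum only in degenerate cases. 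This is a correctable slip rather than a fatal one, for the reason you yourself identified plus one you use implicitly: $f(\X)$ is free of the treatment index $a$, and the weights $P(A=a\mid\X,G=O)$ are free of $t$, so applying Assumption \ref{assum: time series equ bias} termwise immediately gives $\omega_{t+1}(\X)=f(\X)\,\omega_t(\X)$. With the identity corrected, your argument coincides with the paper's proof of Proposition \ref{propo: confounding bias function}, which derives exactly this weighted representation before factoring out $f(\X)$.
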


The proof is given in Appendix \ref{app: proof propo}.
\begin{remark} [Insight of Assumption \ref{theo: time series identifi}]
As stated in Proposition \ref{propo: confounding bias function}, Assumption \ref{assum: time series equ bias} essentially states the confounding biases $\omega_t(\X)$ between adjacent time steps follow the functional form $f(\X)$.
Proposition \ref{propo: confounding bias function} also illustrates the way of how to learn the function $f(\X)$ using the observed variables, unlike the definition in Eq. \eqref{eq: time series equ bias} defining $f(\X)$ using the potential outcomes (See Section \ref{Estimation} for the estimation based on Proposition \ref{propo: confounding bias function}).
\end{remark}

\begin{remark} [Testing Assumption \ref{theo: time series identifi} over Short Term Duration]
It is possible to empirically test Assumption \ref{assum: time series equ bias} over the short-term duration, which provides insights into whether it can be extrapolated to long-term confounding bias. 
This is particularly reasonable when short- and long-term outcomes share the same practical meaning, making it more likely that the confounding bias mechanism remains stable.
Specifically, suppose the short-term outcomes are observed at $3$ time steps. 
Then, given $S_1,S_2,S_3$, we can estimate $f(\mathbf{x})$ using $S_1$ and $S_2$, and evaluate the $R^2$ measure on $S_2$ and $S_3$ where $R^2 = 1- \frac{\sum_i^N (\omega_3(\mathbf{x}^{(i)}) -\hat{f}(\mathbf{x}^{(i)})\omega_2(\mathbf{x}^{(i)}) )^2}{\sum_i^N(\omega_3(\mathbf{x}^{(i)}) - \frac{1}{N}\sum_j^N\omega_3(\mathbf{x}^{(j)}) )^2}$. 
If $R^2\approx 1$, it suggests that $f$ well explains the relationships of confounding biases between adjacent time steps. 
Conversely, if $R^2 \ll 1$, then the assumption may not hold, or $f$ is poorly fitted.
\end{remark}

\begin{example}
In many real-world scenarios, Assumption \ref{assum: time series equ bias} is reasonable.
In the motivating example from our introduction, AIDs patients undergo regular follow-up visits, and their short-term health indicators, such as viral loads or CD4 counts, are recorded as short-term outcomes. Since CD4 counts are critical indicators of health status, they serve as both short- and long-term outcomes. In such cases, Assumption \ref{assum: time series equ bias} can be tested over short-term durations and is likely to hold.
\end{example}

The key to identifying long-term causal effect under Assumption \ref{assum: time series equ bias} is using the temporal information in $f(\X)$ to extrapolate the long-term confounding bias $\omega_{T+\mu}(\X)$.
Using Eq. \eqref{eq: propo}, the long-term confounding bias $\omega_{T+\mu}(\X)$ can be expressed as $\omega_{T+\mu}(\X)=f^\mu(\X)\omega_{T}(\X)$, which results in the following theorem of long-term effect identification.

\begin{theorem} \label{theo: time series identifi} 
Suppose Assumptions \ref{assum: consist}, \ref{assum: positi}, \ref{assum: internal validity of obs}, \ref{assum: internal validity of exp}, \ref{assum: external validity of exp} and \ref{assum: time series equ bias} hold, then the heterogeneous long-term effects $\tau(\x)$ can be identified as follows:
    \begin{equation} \label{eq: time series identi}  
      \begin{aligned}
     \tau(\x) 
            = & \mathbb E[Y(1)-Y(0)\mid \X=\x] 
            =  \mu_Y^O(1,\x) - \mu_Y^O(0,\x) +  f^\mu(\x) \omega_T(\x),
      \end{aligned}
    \end{equation}
    where $\omega_T(\x)=\mu_{S_T}^E(1,\x) -  \mu_{S_T}^E(0,\x)
             +  \mu_{S_T}^O(0,\x) - \mu_{S_T}^O(1,\x) $ is the short-term confounding bias at time step $T$.
\end{theorem}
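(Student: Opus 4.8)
The plan is to follow the identification argument behind Theorem~\ref{theo: identifi}, but to replace the long-term confounding bias — which is not identifiable on its own, since $Y$ is absent from $\mathbb D_e$ — by the functional-recursive expression $f^\mu(\x)\omega_T(\x)$ supplied by Assumption~\ref{assum: time series equ bias} and Proposition~\ref{propo: confounding bias function}. Step~1 is a bias decomposition for the long-term effect. Treating $Y=S_{T+\mu}$ as the last outcome in the sequence, write $\omega_{T+\mu}(\x):=\mu_Y^E(1,\x)-\mu_Y^E(0,\x)+\mu_Y^O(0,\x)-\mu_Y^O(1,\x)$, the long-term analogue of the confounding bias in Eq.~\eqref{defined nuisance}. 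Using Assumptions~\ref{assum: consist}, \ref{assum: internal validity of exp} and~\ref{assum: external validity of exp}, chain $\bE[Y(a)\mid\X=\x]=\bE[Y(a)\mid\X=\x,G=E]=\bE[Y(a)\mid\X=\x,A=a,G=E]=\mu_Y^E(a,\x)$, so that $\tau(\x)=\mu_Y^E(1,\x)-\mu_Y^E(0,\x)$; adding and subtracting $\mu_Y^O(1,\x)-\mu_Y^O(0,\x)$ then gives $\tau(\x)=\mu_Y^O(1,\x)-\mu_Y^O(0,\x)+\omega_{T+\mu}(\x)$, the direct long-term counterpart of Eq.~\eqref{eq: identi}. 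As in the proof of Theorem~\ref{theo: identifi}, I would also record that the same decomposition applied to each $S_t$ (now invoking Assumptions~\ref{assum: positi} and~\ref{assum: internal validity of obs}, since $S_t$ is observed in $\mathbb D_o$ only under $G=O$) realizes $\omega_t(\x)$ as the fixed-weight combination $(1-e(\x))\,b_t(1,\x)+e(\x)\,b_t(0,\x)$ of the potential-outcome contrasts $b_t(a,\x)$ of Assumption~\ref{assum: time series equ bias}, with $e(\x)=P(A=1\mid\X=\x,G=O)$; this is the bridge between the observational-versus-experimental form of $\omega_t$ and the potential-outcome form that Assumption~\ref{assum: time series equ bias} constrains, and is exactly what makes Proposition~\ref{propo: confounding bias function} applicable.

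Step~2 propagates the bias across the horizon. By Proposition~\ref{propo: confounding bias function}, $\omega_{t+1}(\x)=f(\x)\,\omega_t(\x)$ for every $t$; iterating from $t=T$ to $t=T+\mu-1$ gives $\omega_{T+\mu}(\x)=f^\mu(\x)\,\omega_T(\x)$, and substituting this into the display from Step~1 yields Eq.~\eqref{eq: time series identi}. Finally I would verify that the right-hand side is identifiable from $\mathbb D_e\cup\mathbb D_o$: $\mu_Y^O(1,\cdot)-\mu_Y^O(0,\cdot)$ and $\omega_T(\cdot)$ are regression functionals estimable from $\mathbb D_o$ and $\mathbb D_e$, while $f(\cdot)$ is recovered from the observed short-term biases $\omega_t(\cdot)$, $t\le T$, again via Proposition~\ref{propo: confounding bias function} (the same route used by the estimator in Section~\ref{Estimation}).

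The argument is short once these pieces are assembled; the one delicate point is the extrapolation across the unobserved window $(T,T+\mu]$. Proposition~\ref{propo: confounding bias function} is stated for adjacent time steps, and iterating it up to $T+\mu$ silently uses that the functional relation of Assumption~\ref{assum: time series equ bias} continues to hold on a window where neither $S_t$ nor $\omega_t$ is observed. I would therefore make explicit that Assumption~\ref{assum: time series equ bias} is imposed over the whole index range $\{1,\dots,T+\mu-1\}$, and stress that it is precisely this extrapolation that converts the non-identifiable $\omega_{T+\mu}(\x)$ into the identifiable $f^\mu(\x)\,\omega_T(\x)$ — the step that replaces, and strictly generalizes, the equality-of-biases assumption behind Theorem~\ref{theo: identifi}.
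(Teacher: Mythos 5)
Your proposal is correct and delivers exactly Eq.~\eqref{eq: time series identi}, but it is organized differently from the paper's appendix proof. The paper never introduces $\mu_Y^E$: it decomposes $\tau(\x)$ entirely inside the observational group, writes the long-term bias as a propensity-weighted combination of the potential-outcome contrasts $b_{T+\mu}(a,\x)$, applies Assumption~\ref{assum: time series equ bias} directly at that level via $b_{T+\mu}(a,\x)=f^\mu(\x)\,b_T(a,\x)$, and then converts the weighted $b_T$-combination into $\omega_T(\x)$ through a second, analogous decomposition of the time-$T$ effect combined with Assumptions~\ref{assum: internal validity of exp} and~\ref{assum: external validity of exp}. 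You instead use the experimental-side identity $\tau(\x)=\mu_Y^E(1,\x)-\mu_Y^E(0,\x)$ (legitimate: $\mu_Y^E$ is a well-defined population functional listed in Eq.~\eqref{defined nuisance}, even though $Y$ is unrecorded in $\mathbb D_e$), so that $\tau(\x)=\mu_Y^O(1,\x)-\mu_Y^O(0,\x)+\omega_{T+\mu}(\x)$ becomes a one-line identity, and you then propagate $\omega_{T+\mu}=f^\mu\omega_T$ by iterating Proposition~\ref{propo: confounding bias function}; this is essentially the sketch given in the main text before the theorem, and it buys brevity by reusing the proposition rather than re-deriving the weighted decompositions for $Y$ and $S_T$, while the paper's version makes explicit where each assumption and the propensity weights enter. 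Two small remarks: your bridge formula $\omega_t(\x)=(1-e(\x))\,b_t(1,\x)+e(\x)\,b_t(0,\x)$ carries the correct signs (the appendix displays a minus sign on the second term, an inconsequential slip since the recursion is applied to each $b_t(a,\cdot)$ separately, and Assumption~\ref{assum: internal validity of obs} is likewise not actually used in that algebra); and your closing caveat is exactly right and worth stating — Assumption~\ref{assum: time series equ bias}, and hence Proposition~\ref{propo: confounding bias function}, must be read as holding for all $t\le T+\mu-1$, i.e.\ across the unobserved window, since that extrapolation is precisely what turns the unidentified $\omega_{T+\mu}(\x)$ into the identified $f^\mu(\x)\,\omega_T(\x)$.
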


The proof is given in Appendix \ref{app: proof of time series iden}. Theorem \ref{theo: time series identifi} provides the identification result of the heterogeneous long-term effects $\tau (\X)$.
The identification result consists of two parts: 1. long-term outcome differences in observational data $\mu_{S_T}^O(0,\x) - \mu_{S_T}^O(1,\x)$, and 2. long-term confounding bias $f^\mu(\x) \omega_T(\x)$. The long-term confounding bias is identified by the extrapolated result using short-term confounding bias $ \omega_ T(\x)$ under Assumption \ref{assum: time series equ bias}. 
The identification result also illustrates a way to estimate the long-term effects $\tau(\x)$ via modeling sequential latent confounding, which inspires our estimator as shown in Section \ref{Estimation}.

\section{Long-term Causal Effect Estimation}
\label{Estimation}

In this section, we first introduce our estimator $\hat \tau (\X)$ for heterogeneous long-term effects under our proposed Assumption \ref{assum: time series equ bias}, and provide the corresponding theoretical analysis of the proposed estimator.

\subsection{Estimator} \label{sec: alg}

Our estimator $\hat \tau (\X)$ directly follows the identification result in Theorem \ref{theo: time series identifi}.
As shown in Eq. \eqref{eq: time series identi}, the estimators consist of three nuisance components, outcome mean difference between treated and control group in observational data $\mu_Y^O(1,\x) - \mu_Y^O(0,\x) $, confounding bias $\omega_T(\x)$, and the function between confounding bias $f(\x)$.
The first and the second terms can be directly estimated by fitting nuisance functions, and the third term $f(\x)$ should be estimated based on the fitted confounding biases.
Accordingly, we design our heterogeneous long-term effect estimator within a two-stage regression framework, which are model-agnostic algorithms that decompose the estimation task into multiple sub-problems, each solvable using any supervised learning/regression methods.
 
Specifically, our proposed estimator follows:

\begin{itemize}[leftmargin=23px]
    \item [S0.] (Optional) Selecting subsets of short-term outcomes $\S$, yielding appropriate $T$ and $\mu$; 
    \item [S1.] Fitting the following nuisance functions: 
    $\hat\mu_Y^O(a,\x)$, $\hat\mu_{S_t}^E(a,\x)$, and $\hat\mu_{S_t}^O(a,\x)$ for all $a$ and $t$;
    \item [S2.] Constructing the confounding bias  $\hat \omega_t(\x)=\hat \mu_{S_t}^E(1,\x) - \hat  \mu_{S_t}^E(0,\x)
             +  \hat \mu_{S_t}^O(0,\x) - \hat \mu_{S_t}^O(1,\x) $
    for all $t$, and fitting the function $\hat f(\x)$ by minimizing 
    \begin{equation} \label{eq: s2}
        arg\!\min _{f \in \cF } \Sigma_{t=1}^{T-1} \Sigma_{i=1}^n\left( \hat \omega_{t+1}(\x_i) -  f (\x_i) \hat \omega_t(\x_i) \right)^2;
    \end{equation}
    \item [S3.] Constructing the final HLCE estimator as
    \begin{equation}
      \begin{aligned}
        & \hat \tau(\x) 
        =  \hat \mu_Y^O(1,\x) - \hat\mu_Y^O(0,\x) +  \hat f^\mu(\x)  \left( \hat \mu_{S_T}^E(1,\x) -  \hat \mu_{S_T}^E(0,\x)
             +  \hat \mu_{S_T}^O(0,\x) - \hat \mu_{S_T}^O(1,\x) \right) .
    \end{aligned}
    \end{equation}
\end{itemize}

Note that, in addition to the fitting/constructing steps S1-S3, we also introduce an optional step S0 to select appropriate subsets of short-term outcomes $\S$.
This step is motivated by the identification result in Eq. \eqref{eq: time series identi} that allows for different choices of $\mu$ and $T$.
For example, suppose we can observe $6$-step short-term outcome $\S=[S_1,S_2,\dots,S_6]$ and the long-term outcome of interest is $Y=S_9$. Then we have multiple choices of $T$ and $\mu$, e.g.,
using all short-term outcomes $\S=[S_1,S_2,\dots,S_6]$ with $T=6$ and $\mu=3$,
or using $\S=[S_1, S_3, S_5]$ with $T=3$ and $\mu=2$, 
or using $\S=[S_1, S_5]$ with $T=2$ and $\mu=1$.
We will discuss how to choose $T$ and $\mu$ in the next section.

\subsection{Convergence Rate Analyses}
\label{Theoretical Analyses}

In this paper, we assume the smoothness assumption of the estimated functions, where $s$-smooth functions are contained in the H\"older ball $\mathcal H _d(s)$, estimable with the minimax rate \cite{stone1980optimal} of $n^{\frac{1}{2+d/s}}$ where $d$ is the dimension of $\mathcal X$. Formally, we provide the following definition.

\begin{definition}[H\"older ball]  \label{def: holder ball}
The H\"older ball $\mathcal H _d(s)$ is the set of $s$-smooth functions $f: \mathbb R^d \rightarrow \mathbb R$ supported on $\mathcal X \subseteq \mathbb R^d$ that are $\lfloor s \rfloor $-times continuously differentiable with their multivariate partial derivatives up to order $\lfloor s \rfloor $ bounded, and for which
\begin{equation*}
    | \frac{\partial ^m f }{\partial^{m_1} \cdots \partial^{m_d} }(x) - \frac{\partial ^m f }{\partial^{m_1} \cdots \partial^{m_d} }(x^\prime) | \lesssim \| x - x^\prime\|^{s- \lfloor s \rfloor }_2,
\end{equation*}
$\forall x, x^\prime$ and $m=(m1, \cdots, m_d)$ such that $\Sigma_{j=1}^d m_j = \lfloor s \rfloor$. 
\end{definition}

Accordingly, we assume the smoothness assumption of the nuisance functions, as well as the boundedness assumption of the nuisance functions and their estimates as follows.

\begin{assumption}[Smoothness Assumption] \label{asmp: smooth}
    We assume that the nuisance functions and the mapping $f$ 
    defined in Assumption \ref{assum: time series equ bias} 
    satisfy: (1) $\mu_{S_t}^E$, $\mu_{S_t}^O$, and $\mu_Y^O$ are $\alpha$-smooth, $\beta$-smooth, and $\gamma$-smooth, respectively, and all are estimable at \citet{stone1980optimal}'s minimax rate of $n^{\frac{-p}{2p+d}}$ for a $p$-smooth function; (2) $f(x)$ is $\eta$-smooth.
\end{assumption}

\begin{assumption}[Boundness Assumption] \label{asmp: bound}
    We assume that the nuisance functions $\mu_{S_t}^O(a,x),\mu_{S_t}^E(a,x)$ and their estimates are bounded, i.e., $\forall t$, $|\mu_{S_t}^O(a,x)| < C_1$, $|\hat \mu_{S_t}^O(a,x)| < C_2$, $|\mu_{S_t}^E(a,x)| < C_3$ and $|\hat \mu_{S_t}^E(a,x)| < C_4$ hold for some $C_1, C_2, C_3, C_4 >0$.   
\end{assumption}


We now state our main theoretical results.
To obtain our error bounds of the proposed estimator, we leverage the same sample splitting technique from \cite{kennedy2023towards}, which randomly splits the datasets into two independent sets and applies them to the regressions of the first step and second step respectively. Such a technique is originally used to analyze the convergence rate of the double robust conditional average treatment effect estimation in the traditional setting \cite{kennedy2023towards} and later is adapted to several other methods \cite{curth2021nonparametric, frauenestimating}. Different from them, we use such a technique for the sequential latent confounding modeling, which is then adapted for the long-term effect estimation.

To begin with, we first provide the rate of $\hat f(\x)$ as follows:

\begin{lemma} \label{lemma: convergence of f} 
    Suppose the training steps S1 and S2 are train on two independent datasets of size $n$ respectively, and suppose Assumptions \ref{assum: consist}, \ref{assum: positi}, \ref{assum: internal validity of obs}, \ref{assum: internal validity of exp}, \ref{assum: external validity of exp}, \ref{assum: time series equ bias}, \ref{asmp: smooth}, and \ref{asmp: bound} hold, then we have
        \begin{equation} \label{eq: f rate}   
      \begin{aligned}
       \hat f(\x) - & f(\x) 
         =  O_p \left( 
            (\frac{1}{(T-1)n})^{\frac{\eta}{2\eta+d}} 
            +  (\frac{1}{(T-1)n})^{\frac{\alpha}{2\alpha+d}}
            +  (\frac{1}{(T-1)n})^{\frac{\beta}{2\beta+d}}
            \right),
      \end{aligned}
    \end{equation}
    which attains the oracle rate if 
     $\min \{\alpha, \beta \} \geq \eta$.
\end{lemma}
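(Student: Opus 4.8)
The plan is to analyze the second-stage regression \eqref{eq: s2} as a nonparametric least-squares problem whose ``response'' $\hat\omega_{t+1}(\x)$ and ``regressor-coefficient'' $\hat\omega_t(\x)$ are themselves estimated, and to track how the first-stage errors in the nuisances $\hat\mu^E_{S_t},\hat\mu^O_{S_t}$ propagate into the estimated $\hat\omega_t$. First I would write $\hat\omega_t(\x)-\omega_t(\x)$ as a sum of four nuisance errors (the terms $\hat\mu^E_{S_t}(1,\x)-\mu^E_{S_t}(1,\x)$, etc.), each of which is $O_p(n^{-\alpha/(2\alpha+d)})$ or $O_p(n^{-\beta/(2\beta+d)})$ by Assumption \ref{asmp: smooth}; since there are $T-1$ time steps and the regression pools all of them, the effective sample size is $(T-1)n$, giving first-stage error $\delta_n := O_p\bigl((\tfrac{1}{(T-1)n})^{\alpha/(2\alpha+d)}+(\tfrac{1}{(T-1)n})^{\beta/(2\beta+d)}\bigr)$. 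Because steps S1 and S2 are trained on independent samples, conditioning on the S1 data makes $\hat\omega_t,\hat\omega_{t+1}$ fixed functions, so the S2 regression is an honest nonparametric regression of a noisy signal with a bias offset of size $\delta_n$.

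The core step is a bias–variance decomposition for the second-stage estimator. Following the sample-splitting/``pseudo-outcome'' analysis of \cite{kennedy2023towards}, I would decompose $\hat f(\x)-f(\x)$ into (i) the \emph{oracle} term: the error of regressing the true ratio response built from $\omega_{t+1},\omega_t$ onto $\x$, which is an $\eta$-smooth target estimable at rate $(\tfrac{1}{(T-1)n})^{\eta/(2\eta+d)}$; and (ii) a \emph{plug-in bias} term capturing the difference between the oracle regression and the feasible one, which is controlled by the first-stage error $\delta_n$ entering multiplicatively against the (bounded, by Assumption \ref{asmp: bound}) quantities $\hat\omega_t,\omega_t$ and against $\hat f$ itself. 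The boundedness assumption is exactly what is needed to ensure the map from $(\hat\omega_t,\hat\omega_{t+1})$ to $\hat f$ is stable — one must keep $\hat\omega_t$ bounded away from degeneracy in the denominator role it plays, or equivalently argue that the least-squares projection is Lipschitz in the perturbation of its inputs; I would handle this by working with the normal equations for \eqref{eq: s2} and bounding $\|\hat f-f\|$ in $L_2(P_\x)$ by $\|$oracle error$\| + C\,\delta_n$. Combining (i) and (ii) gives the stated rate, and the ``oracle rate'' claim follows because $\delta_n$ is dominated by the $\eta$-term exactly when $\min\{\alpha,\beta\}\ge\eta$.

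The main obstacle I anticipate is step (ii): making the plug-in bias genuinely $O_p(\delta_n)$ rather than something worse. The subtlety is that \eqref{eq: s2} is \emph{not} a regression of $\hat\omega_{t+1}/\hat\omega_t$ on $\x$ but a weighted regression with weights $\hat\omega_t(\x_i)^2$, so the estimated $\hat f$ depends on $\hat\omega_t$ both through the ``response'' and through the design; a naive Taylor expansion produces a term like $\delta_n \cdot \|\hat f - f\|$, which must be absorbed into the left-hand side (requiring $\delta_n\to 0$, already implied by Assumption \ref{asmp: smooth}), plus a genuinely second-order term $O_p(\delta_n^2)$ that is harmless. I would make this rigorous by writing the normal equations $\sum_{t,i}\hat\omega_t(\x_i)\bigl(\hat\omega_{t+1}(\x_i)-\hat f(\x_i)\hat\omega_t(\x_i)\bigr)\phi(\x_i)=0$ for basis functions $\phi$ spanning the H\"older class approximation space, substituting $\hat\omega=\omega+(\hat\omega-\omega)$ throughout, and collecting terms by order in $\delta_n$; the first-order cross terms reduce to an empirical-process/approximation-bias bound handled exactly as in the oracle case (yielding the $\eta$-rate), while all remaining terms are $O_p(\delta_n)$ or smaller by Assumption \ref{asmp: bound}. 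A secondary technical point is ensuring the pooling over $t=1,\dots,T-1$ actually delivers the $(T-1)n$ effective-sample-size improvement and not merely an $n$-rate with a $T$-dependent constant; this requires that the $T-1$ confounding-bias pairs carry non-redundant information about $f$, which I would take as implicit in the regression framework (or note as a mild regularity condition on the design).
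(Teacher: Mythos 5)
Your proposal follows essentially the same route as the paper's proof: sample splitting plus the pseudo-outcome/oracle decomposition of \citet{kennedy2023towards}, with the first-stage error of $\hat\omega_t$ (rate $n^{-\alpha/(2\alpha+d)}+n^{-\beta/(2\beta+d)}$, from the four nuisance terms) transferred to the target via the ratio representation $f(\x)=\omega_{t+1}(\x)/\omega_t(\x)$ and the boundedness assumption, and the oracle rate $((T-1)n)^{-\eta/(2\eta+d)}$ coming from the $\eta$-smoothness of $f$ over the pooled $(T-1)n$ observations. The subtleties you flag (the weighted-least-squares form of the step-S2 objective versus a plain regression of $\hat\omega_{t+1}/\hat\omega_t$ on $\x$, and keeping $\hat\omega_t$ nondegenerate in the denominator) are treated more lightly in the paper, which directly bounds the plug-in error of the ratio and invokes boundedness, so your plan is, if anything, more careful on exactly the points the paper glosses over.
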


The proof is given in Appendix \ref{app: proof f conver}.
Since $\hat f (\x)$ is a time-series model, its effective sample is $(T-1)n$, thus it might achieve a faster rate if we observed a longer duration of short-term outcomes.
Moreover, the rate of $\hat f(\x)$ consist two part: the oracle rate $(\frac{1}{(T-1)n})^{\frac{\eta}{2\eta+d}} $, and the rate of fitting nuisance functions $(\frac{1}{(T-1)n})^{\frac{\alpha}{2\alpha+d}} + (\frac{1}{(T-1)n})^{\frac{\beta}{2\beta+d}}$. If the nuisance functions $\mu_{S_t}^E$ and $\mu_{S_t}^O$ is smooth enough such that $\min \{\alpha, \beta \} \geq \eta$, then $\hat f (\x)$ will attain the oracle rate.

Based on Lemma \ref{lemma: convergence of f}, we provide the rate of $\hat \tau(\x)$ in the following theorem.

\begin{theorem} \label{theo: convergence} 
Suppose Lemma \ref{lemma: convergence of f} hold, then we have
    \begin{equation} \label{eq: tau rate}   
      \begin{aligned}
       & \hat \tau(\x) -   \tau(\x)  \\ 
          = &  O_p \left(  
          n^{-\frac{\gamma}{2\gamma+d}} 
          + n^{-\frac{\alpha}{2\alpha+d}} 
          + n^{-\frac{\beta}{2\beta+d}}
           + \frac{\mu}{ ((T-1)n)^{\frac{\eta}{2\eta+d}} }  
           + \frac{\mu}{ ((T-1)n)^{\frac{\alpha}{2\alpha+d}} }  + 
           \frac{\mu}{ ((T-1)n)^{\frac{\beta}{2\beta+d}} }  
            \right).
      \end{aligned}
    \end{equation}
\end{theorem}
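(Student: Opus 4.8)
The plan is to prove Theorem~\ref{theo: convergence} by directly decomposing the error $\hat\tau(\x) - \tau(\x)$ along the three nuisance components appearing in the identification formula \eqref{eq: time series identi}, and then controlling each piece separately. Writing $\Delta_Y^O(\x) = \hat\mu_Y^O(1,\x) - \hat\mu_Y^O(0,\x)$ and similarly for the true quantity, and recalling $\hat\omega_T(\x)$ and $\omega_T(\x)$, the difference splits as
\begin{equation*}
\hat\tau(\x) - \tau(\x) = \bigl(\Delta_Y^O(\x) - \Delta_Y^{O,*}(\x)\bigr) + \bigl(\hat f^\mu(\x)\hat\omega_T(\x) - f^\mu(\x)\omega_T(\x)\bigr).
\end{equation*}
The first bracket is $O_p(n^{-\gamma/(2\gamma+d)})$ directly from Assumption~\ref{asmp: smooth} (the minimax rate for the $\gamma$-smooth $\mu_Y^O$). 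For the second bracket, I would add and subtract $f^\mu(\x)\hat\omega_T(\x)$ to get $(\hat f^\mu(\x) - f^\mu(\x))\hat\omega_T(\x) + f^\mu(\x)(\hat\omega_T(\x) - \omega_T(\x))$. The term $\hat\omega_T(\x) - \omega_T(\x)$ is a sum of four nuisance errors and hence $O_p(n^{-\alpha/(2\alpha+d)} + n^{-\beta/(2\beta+d)})$; multiplying by the bounded $f^\mu(\x)$ and using boundedness of $\hat\omega_T$ (Assumption~\ref{asmp: bound}) keeps this controlled.

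The core step is handling $\hat f^\mu(\x) - f^\mu(\x)$. I would use the algebraic factorization $a^\mu - b^\mu = (a-b)\sum_{j=0}^{\mu-1} a^j b^{\mu-1-j}$ with $a = \hat f(\x)$, $b = f(\x)$. Under Assumption~\ref{asmp: bound}, the relevant functions $\hat f$ and $f$ are bounded (this follows since the $\hat\omega_t$ and $\omega_t$ are bounded away from degeneracy, or one assumes $f$ bounded as part of the smoothness/regularity setup), so the sum of $\mu$ products of bounded quantities is $O(\mu)$. Therefore $\hat f^\mu(\x) - f^\mu(\x) = O_p(\mu \cdot |\hat f(\x) - f(\x)|)$, and plugging in the rate from Lemma~\ref{lemma: convergence of f} gives
\begin{equation*}
\hat f^\mu(\x) - f^\mu(\x) = O_p\!\left( \frac{\mu}{((T-1)n)^{\eta/(2\eta+d)}} + \frac{\mu}{((T-1)n)^{\alpha/(2\alpha+d)}} + \frac{\mu}{((T-1)n)^{\beta/(2\beta+d)}} \right).
\end{equation*}
Multiplying this by the bounded $\hat\omega_T(\x)$ preserves the same order. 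Collecting the first bracket's rate, the $f^\mu(\x)(\hat\omega_T - \omega_T)$ rate, and this last rate, and absorbing the faster terms, yields exactly the bound in \eqref{eq: tau rate}.

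The main obstacle I anticipate is making rigorous the boundedness needed for the $a^\mu - b^\mu$ factorization to contribute only a factor $\mu$ rather than something exponential in $\mu$: one needs $|\hat f(\x)|$ and $|f(\x)|$ bounded by a constant (ideally close to or below $1$ for the argument to be clean), which requires either that Assumption~\ref{asmp: bound} be read as also bounding $f$ and $\hat f$ (plausibly via the confounding biases $\omega_t$ being bounded and bounded away from zero, so that ratios $\omega_{t+1}/\omega_t$ are controlled), or an explicit regularity condition on $\cF$ in step S2. A secondary subtlety is that $\hat f$ is estimated on the \emph{constructed} targets $\hat\omega_t$ rather than the true $\omega_t$, but this has already been absorbed into Lemma~\ref{lemma: convergence of f}, so here I only need to invoke that lemma and combine rates. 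The remaining work — propagating $O_p$ bounds through finite sums and products of bounded sequences — is routine, so the write-up will be short once the boundedness bookkeeping is pinned down.
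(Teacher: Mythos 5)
Your proposal follows essentially the same route as the paper's proof: the same decomposition into the $\mu_Y^O$ difference plus the product term, the same add-and-subtract/boundedness argument for $\hat f^\mu(\x)\hat\omega_T(\x) - f^\mu(\x)\omega_T(\x)$ (the paper phrases it as $(ab-\hat a\hat b)^2 \lesssim (a-\hat a)^2+(b-\hat b)^2$ under Assumption~\ref{asmp: bound}), and the same amplification of Lemma~\ref{lemma: convergence of f} by a factor $\mu$ to handle the $\mu$-th power. The boundedness caveat you flag for the telescoping factorization of $\hat f^\mu - f^\mu$ (so that it costs only a factor $\mu$ rather than something exponential in $\mu$) is a genuine subtlety, but it is equally implicit in the paper's own proof, which passes from the rate for $\hat f - f$ to the $\mu$-scaled rate for $\hat f^\mu - f^\mu$ without further justification.
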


Proof can be found in Appendix \ref{app: proof tau conver}. 
Theorem \ref{theo: convergence} follows directly from Lemma \ref{lemma: convergence of f}.
The rate of $\hat \tau (\x)$ consist of three components: (a) $n^{-\frac{\gamma}{2\gamma+d}} $ represents the rate of $\mu_Y^O(a,\x)$; (b) $n^{-\frac{\alpha}{2\alpha+d}} + n^{-\frac{\beta}{2\beta+d}}$ corresponds the rate of confounding bias $\omega_T(\x)$; (c) and the remaining term is the rate of $\hat f^\mu (\x)$. 
Theorem \ref{theo: convergence} indicates that the overall convergence rate is largely governed by the smoothness parameters $\alpha, \beta, \gamma, \eta$ and the temporal information $T$ and $\mu$, where $T$ denotes the maximum observed duration of short-term outcomes $\S$, and $\mu$ is the time horizon of the long-term outcome of interest.
This result aligns with intuition: both $T$ and $\mu$ affect the convergence behavior. More accurate effect estimation benefits from longer short-term outcome durations $T$, while requiring the long-term horizon $\mu$ not to be too distant.

\begin{remark}[Choosing $\mu$ and $T$]
    Note that, the bound in Eq. \eqref{eq: tau rate} contains the term $\frac{\mu}{(T-1)^\frac{\alpha}{2\alpha+d}}$ (similar for $\beta$ and $\eta$).
    If we have prior knowledge of the smoothness of nuisance functions, i.e., knowing $\alpha$ ($\beta$ and $\eta$), then we can optimally select $\mu$ and $T$ to obtain a faster rate.
    For instance, if all estimated functions are sufficiently smooth, i.e., $\alpha,\beta,\eta \rightarrow \infty$, then $\mu$ and $T$ should be chosen to minimize $\frac{\mu}{\sqrt{T-1}}$, since $\frac{\alpha}{2\alpha+d}, \frac{\beta}{2\beta+d}, \frac{\eta}{2\eta+d}  \rightarrow 1/2$.
    As a concrete example, suppose we can observe $6$-step short-term outcome $\S=[S_1,S_2,\dots,S_6]$ and the long-term outcome of interest is $Y=S_9$. The optimal choice is to use $\S=[S_1,S_5]$ only to estimate $\tau(\x)$, since it result in a minimum $\frac{\mu}{\sqrt{T-1}}=1$ where $\mu=1$ and $T=2$.
    In practice, when no prior knowledge is available, it may be advisable to choose $\mu$ as small as possible, since a larger $\mu$-power in $\hat f ^\mu (\x)$ could lead to significant errors.
\end{remark}

\section{Experiments}
\label{Experiments}

In this section, we perform a series of experiments to evaluate the effectiveness and validity of our proposed estimator.
We first provide the experimental setup in Section \ref{sec: setup}, and provide the experimental results and corresponding analyses in Section \ref{sec: result}.


\subsection{Experimental Setup}
\label{sec: setup}

In this section, we describe the data generation process for the synthetic datasets, the baselines, the implementation details, and the metrics used in the experiments.

\textbf{Synthetic Datasets}
We partially follow the data generation setup in \cite{kallus2018removing} to induce a specific form of confounding bias and define nuisance functions. Details are provided in Appendix \ref{app: add exp}.
The confounding bias at time $t$ is designed as $\omega_t(\x) = -2^t \x$, aligning with our assumptions.
By default, we set the observational sample size $n_o = 4000$, experimental sample size $n_e = 2000$, and $T = 6$, $\mu = 3$.
In variations, we change $\mu \in {1,2,3,4,5}$, $T \in {4,5,6,7,8}$, and fix the ratio $n_o:n_e = 2:1$ while varying $n_e \in \{1000,2000,3000,4000,5000,10000\}$ (implying $n_o \in \{2000,4000,6000,8000,10000,2000\}$).


\textbf{Baselines and Implementation}
We compare our method against several baselines for long-term causal effect estimation, including LTEE \cite{cheng2021long}, Latent Unconfoundedness \cite{athey2020combining}, Sequential Outcomes \cite{imbens2000role}, NNPIV \cite{meza2021nested}, and CAECB \cite{ghassami2022combining}. For the Sequential Outcomes method, which requires three groups of short-term outcomes ($S_1, S_2, S_3$ in their paper), we test three grouping strategies: (1) $S_{\lceil T/2 \rceil -1}, S_{\lceil T/2 \rceil}, S_{\lceil T/2 \rceil +1}$, (2) $S_1, S_{\lceil T/2 \rceil}, S_T$, and (3) $S_{1:\lceil T/2 \rceil-1}, S_{\lceil T/2 \rceil}, S_{\lceil T/2 \rceil+1:T}$.
NNPIV is applied under both the latent unconfoundedness assumption (NNPIV\_LU) \cite{hatt2022combining} and the surrogacy assumption (NNPIV\_SInd) \cite{athey2019surrogate}.
For CAECB, which uses a single short-term outcome, we implement four variants using the first, middle, last, and a randomly selected outcome.
We also include the T-learner \cite{kunzel2019metalearners} trained on observational $Y$ and an idealized version trained on experimental $Y$ (infeasible in practice, as $Y$ is unobserved in experimental data), serving as an idealized benchmark denoted as $\hat \tau_{exp}$.
All nuisance functions are estimated using correctly specified regression. Our estimator is denoted as $\hat{\tau}$.
Additional details are provided in Appendix \ref{app: add exp}.

\textbf{Metrics} As for heterogeneous effect estimation, we report Precision in the Estimation of Heterogeneous Effect (PEHE) $\varepsilon_{PEHE}= \sqrt{\frac{1}{n} \Sigma_{i=1}^{n} (\tau(\x_i) - \hat \tau(\x_i))^2}$. As for average long-term causal effect estimation, we report the absolute error $\varepsilon_{ATE}= | \Sigma_{i=1}^{n} \tau(\x_i) -\Sigma_{i=1}^{n}  \hat \tau(\x_i)) |$. 
For all metrics, we report the mean values and standard deviation (in tables) or standard errors (in figures) by $50$ running.

\begin{table}[t]
\vspace{-.5cm}
\caption{The results on synthetic data are presented as the mean and standard deviation. Lower values indicate better performance. The best-performing methods are bolded, with the exception of the idealized estimator $\hat{\tau}_{exp}$, which is excluded from the comparison. (See Appendix \ref{app: add exp} for Full results.)}
\label{sample-table}
\vspace{-.3cm}
\begin{center}
\resizebox{\textwidth}{!}{ 
\begin{tabular}{l|cc|cc|cc}
\toprule
Data 
& \multicolumn{2}{|c|}{$\mu=1$, $T=6$} 
& \multicolumn{2}{|c|}{$\mu=3$, $T=6$} 
& \multicolumn{2}{|c}{$\mu=5$, $T=6$} 
\\
\midrule
Metrics
& \multicolumn{1}{|c}{${\varepsilon_{PEHE}}$} 
& \multicolumn{1}{c|}{$\varepsilon_{ATE}$} 
& \multicolumn{1}{|c}{${\varepsilon_{PEHE}}$} 
& \multicolumn{1}{c|}{$\varepsilon_{ATE}$} 
& \multicolumn{1}{|c}{${\varepsilon_{PEHE}}$} 
& \multicolumn{1}{c}{$\varepsilon_{ATE}$} 
\\
\midrule
$\hat{\tau}_{exp}$ (idealized)
& 3.0033 $\pm$ 1.8226 & 2.0722 $\pm$ 1.7681
& 12.0142 $\pm$ 7.2895 & 8.2901 $\pm$ 7.0709
& 48.0590 $\pm$ 29.1580 & 33.1620 $\pm$ 28.2845  \\
\midrule
T-learner (using obs. $Y$)
& 71.8431 $\pm$ 3.1612 & 6.8211 $\pm$ 2.8357
& 287.3693 $\pm$ 12.6434 & 27.2795 $\pm$ 11.3424
& 1149.4769 $\pm$ 50.5746 & 109.1205 $\pm$ 45.3685   \\
LTEE
& 73.0421 $\pm$ 7.1531 & 10.0977 $\pm$ 6.1071
& 287.3264 $\pm$ 28.5202 & 32.4417 $\pm$ 19.5739
& 1168.7612 $\pm$ 122.7687 & 137.0583 $\pm$ 107.5567   \\
Latent Unconfoundedness
& - & 2.6322 $\pm$ 2.0120
& - & 10.5293 $\pm$ 8.0471
& - & 42.1179 $\pm$ 32.1874   \\
Sequential Outcomes ($S_{\lceil T/2 \rceil -1}, S_{\lceil T/2 \rceil }, S_{\lceil T/2 \rceil +1}$)
& - & 67.5881 $\pm$ 3.0107
& - & 270.3607 $\pm$ 12.0428
& - & 1081.4361 $\pm$ 48.1759    \\
Sequential Outcomes ($S_1, S_{\lceil T/2 \rceil }, S_T$)
& - & 67.5790 $\pm$ 3.0138
& - & 270.3237 $\pm$ 12.0553
& - & 1081.2884 $\pm$ 48.2252
\\
Sequential Outcomes ($S_{1:\lceil T/2 \rceil -1}, S_{\lceil T/2 \rceil }, S_{\lceil T/2 \rceil +1:T}$)
& - & 67.5797 $\pm$ 3.0151
& - & 270.3266 $\pm$ 12.0607
& - & 1081.3001 $\pm$ 48.2472
\\
NNPIV\_LU
& 70.7124 $\pm$ 2.5320 & 6.1302 $\pm$ 2.8888
& 282.8595 $\pm$ 10.1320 & 24.5293 $\pm$ 11.5558
& 1131.4331 $\pm$ 40.5209 & 98.1118 $\pm$ 46.2263   \\
NNPIV\_SInd
& 70.7155 $\pm$ 2.5329 & 6.1306 $\pm$ 2.8884
& 282.8655 $\pm$ 10.1314 & 24.5240 $\pm$ 11.5558
& 1131.4602 $\pm$ 40.5252 & 98.0989 $\pm$ 46.2232   \\
CAECB (using first $S$)
& 70.7257 $\pm$ 3.1153 & 6.7160 $\pm$ 2.7984
& 286.2518 $\pm$ 12.5973 & 27.1744 $\pm$ 11.3048
& 1148.3595 $\pm$ 50.5286 & 109.0154 $\pm$ 45.3308   \\
CAECB (using middle $S$)
& 67.3754 $\pm$ 2.9827 & 6.3982 $\pm$ 2.6884
& 282.9014 $\pm$ 12.4626 & 26.8566 $\pm$ 11.1910
& 1145.0089 $\pm$ 50.3933 & 108.6976 $\pm$ 45.2162  \\
CAECB (using last $S$)
& 36.1202 $\pm$ 2.0211 & 3.5778 $\pm$ 1.8603
& 251.6244 $\pm$ 11.2383 & 23.8972 $\pm$ 10.2153
& 1113.7291 $\pm$ 49.1287 & 105.7381 $\pm$ 44.1729  \\
CAECB (using random $S$)
& 59.8772 $\pm$ 11.9929 & 5.7194 $\pm$ 2.6871
& 275.4020 $\pm$ 16.0775 & 26.1778 $\pm$ 10.9897
& 1137.5094 $\pm$ 50.1542 & 108.0188 $\pm$ 44.9689  \\
Ours $\hat \tau$
&\textbf{ 3.1110 $\pm$ 1.8230 } & \textbf{ 2.1470 $\pm$ 1.7826} 
&\textbf{ 13.9945 $\pm$ 7.1682 } & \textbf{ 9.296 $\pm$ 7.0820} 
&\textbf{ 64.3076 $\pm$ 32.9868 } & \textbf{ 40.5425 $\pm$ 29.7264 }  \\
\bottomrule
\end{tabular}
}
\end{center}
\label{table:vary mu}
\vspace{-.3cm}
\end{table}

\subsection{Experimental Results}
\label{sec: result}

\textbf{Comparison with Baselines}
We compare our method with existing methods using different choices of $T$ and $\mu$, and results are shown in Fig. \ref{table:vary mu} 
As expected, our method consistently outperforms all baselines, primarily because it is built upon more appropriate assumptions.
In contrast, the baseline methods suffer from substantial biases resulting from incorrect assumptions.
Furthermore, we observe that our estimator closely matches the idealized estimator in both magnitude and trend, highlighting the effectiveness of the proposed method.

\textbf{Optimal Choice of $T$ and $\mu$}
We conduct experiments using different choices of $T$ and $\mu$. 
Specifically, to predict long-term effects, we consider the following subsets of $\S$: 
\begin{wrapfigure}{r}{0.5\textwidth} 
    \vspace{-.3cm}
    \centering
    \subfigure[Heterogeneous effect estimation error.]
    {\includegraphics[width=.23\textwidth]{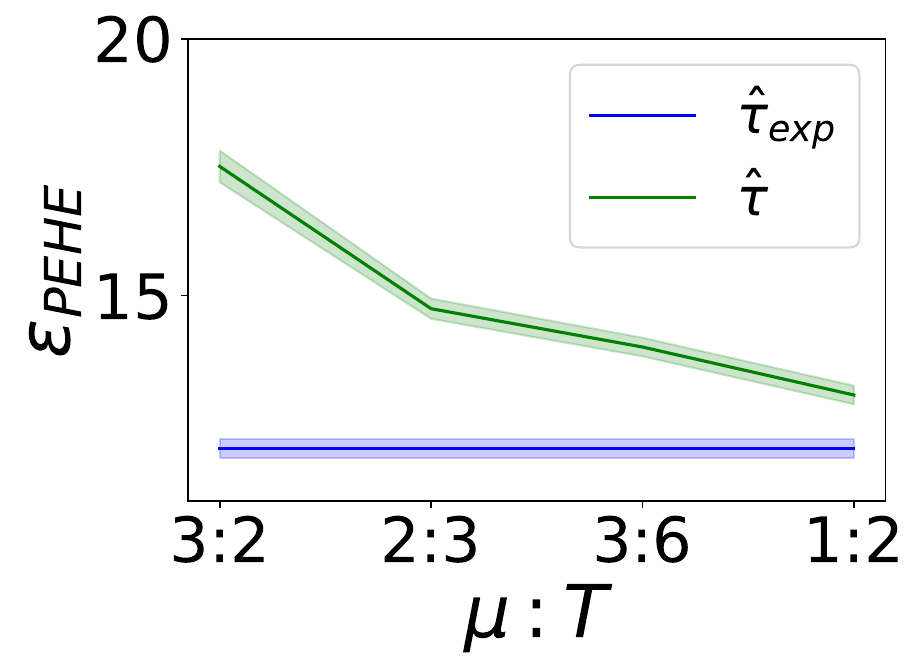}
     \label{fig: PEHE_OPTIMAL}
    }
    \hspace{1px}
    \subfigure[Average effect estimation error.]
    {\includegraphics[width=.23\textwidth]{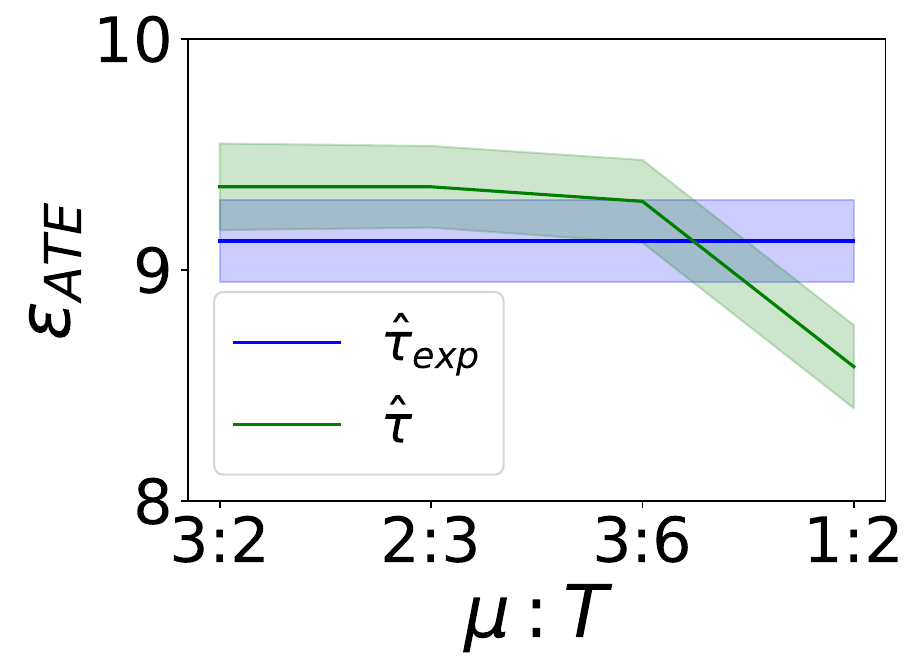}
        \label{fig: MAE_OPTIMAL}
    }
    \vspace{-.2cm}
    \caption{Results of the experiments in terms of different choice of $\mu$ and $T$.}
    \label{fig: optimal}
    \vspace{-.4cm}
\end{wrapfigure}
1. $\S=[S_1,S_3]$ with $T=2, \mu=3$; 
2. $\S=[S_1,S_3,S_5]$ with $T=3, \mu=2$; 
3. $\S=[S_1,S_2,S_3,S_4,S_5,S_6]$ with $T=6, \mu=3$; 
and 4. $\S=[S_1,S_5]$ with $T=2, \mu=1$.
The results, shown in Figure \ref{fig: optimal}, are sorted by the values of $\frac{\mu}{\sqrt{T-1}}$, which $3, 1.41, 1.34$, and $1$, respectively.
As shown in Figure \ref{fig: optimal}, as  $\frac{\mu}{\sqrt{T-1}}$ decrease, both the estimation errors $\varepsilon_{PEHE}$ and $\varepsilon_{ATE}$ decrease. This aligns with our theoretical findings in Theorem \ref{theo: convergence}, as a smaller $\frac{\mu}{\sqrt{T-1}}$ leads to a faster rate.
Interestingly, as shown in Figure \ref{fig: MAE_OPTIMAL}, we found that when the $\frac{\mu}{\sqrt{T-1}}$ is equal to $1$, our estimator $\hat \tau_(\x)$ achieves lower $\varepsilon_{ATE}$ than the idealized estimator $\hat \tau$. This may be attributed to the fact that our estimator $\hat \tau(\x)$ leverages observational data, leading to higher data efficiency.

\begin{figure}[!h]
    \vspace{-.2cm}
    \centering
    \subfigure[Heterogeneous effect estimation error with fixed $T$ and varying $\mu$.]
    {\includegraphics[width=.23\textwidth]{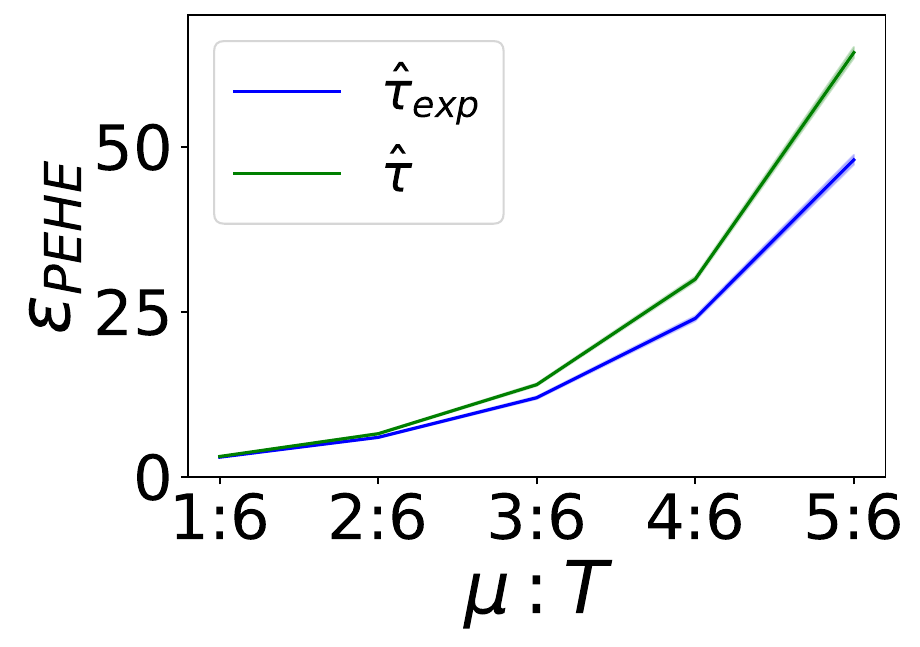}
     \label{fig: PEHE varyT}
    }
    \hspace{2pt}
    \subfigure[Average effect estimation error  with fixed $T$ and varying $\mu$.]
    {\includegraphics[width=.23\textwidth]{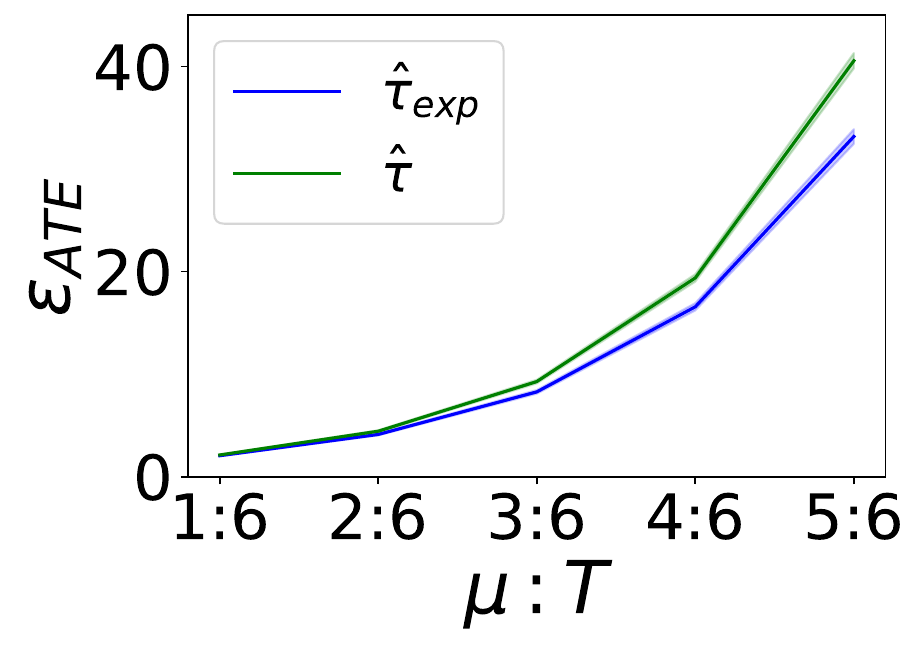}
        \label{fig: MAE varyT}
    }
    \hspace{3pt}
    \subfigure[Heterogeneous effect estimation error with fixed $\mu$ and varying $T$.]
    {\includegraphics[width=.23\textwidth]{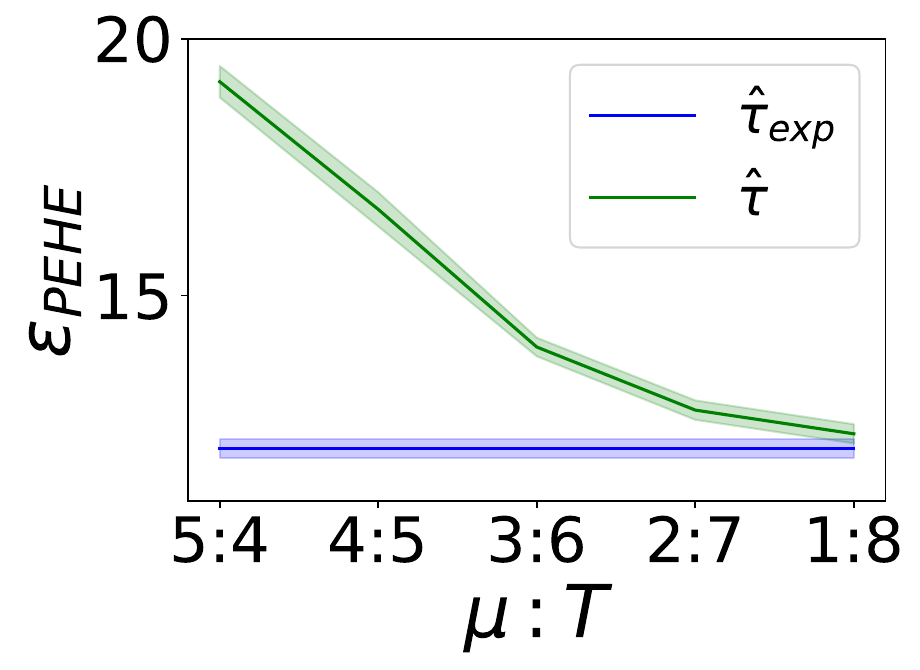}
     \label{fig: PEHE varymu}
    }
    \hspace{2pt}
    \subfigure[Average effect estimation error with fixed $\mu$ and varying $T$.]
    {\includegraphics[width=.23\textwidth]{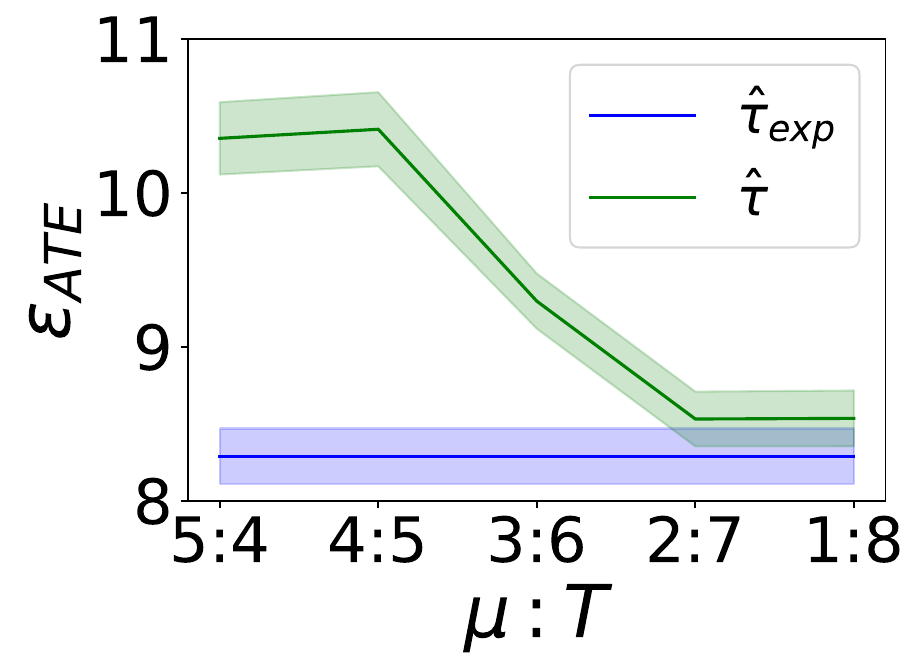}
        \label{fig: MAE varymu}
    }
    \vspace{-.1cm}
    \caption{Results of control experiments with fixed $T$ and varying $\mu$ (Figures \ref{fig: PEHE varyT} and \ref{fig: MAE varyT}), and with fixed $\mu$ and varying $T$ (Figures \ref{fig: PEHE varymu} and \ref{fig: MAE varymu}).}
    \label{fig: varying t mu}
    \vspace{-.3cm}
\end{figure}

\textbf{Vary $T$ and $\mu$}
We conduct control experiments with varying $T$ and varying $\mu$ separately.
First, Figures \ref{fig: PEHE varyT} and \ref{fig: MAE varyT}) show the results for the experiments with fixed $T$ and varying $\mu$. 
As expected, the estimation errors on both heterogeneous and average effect estimations increase as the long-term horizon $\mu$ grows. 
This is primarily due to the increasing estimation errors of $\hat f^{\mu}(\x)$, as established in Lemma \ref{lemma: convergence of f}.
Second, Figures \ref{fig: PEHE varymu} and \ref{fig: MAE varymu} show the results for the experiments with fixed $\mu$ and varying $T$.
As the observed duration $T$ increases, the estimation errors decrease rapidly.
It is reasonable since the larger $T$ results in a faster rate of $\hat \tau(\x)$.
Both findings further support our Lemma \ref{lemma: convergence of f} and Theorem \ref{theo: convergence} again.
Moreover, we observe that, with smaller $\mu$, our estimator closely matches the idealized estimator $\hat \tau_{exp}(\x)$, showing the effectiveness of our estimator.

\begin{wrapfigure}{r}{0.5\textwidth} 
    \vspace{-.5cm}
    \centering
    \subfigure[Heterogeneous effect estimation error.]
    {\includegraphics[width=.23\textwidth]{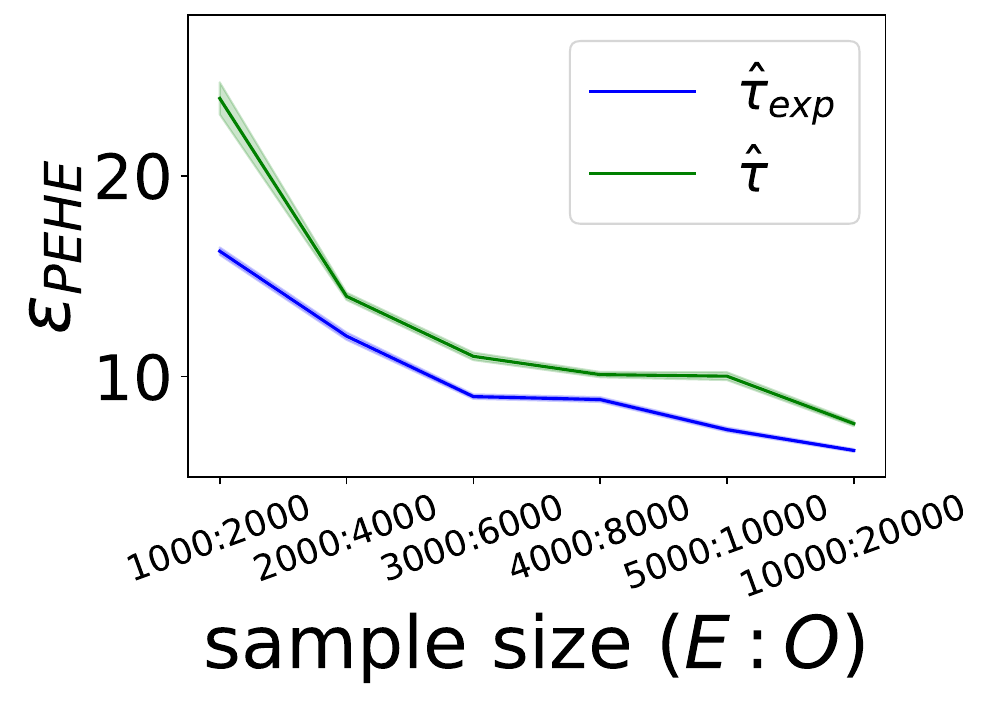}
     \label{fig: PEHE_size}
    }
    \hspace{2pt}
    \subfigure[Average effect estimation error.]
    {\includegraphics[width=.23\textwidth]{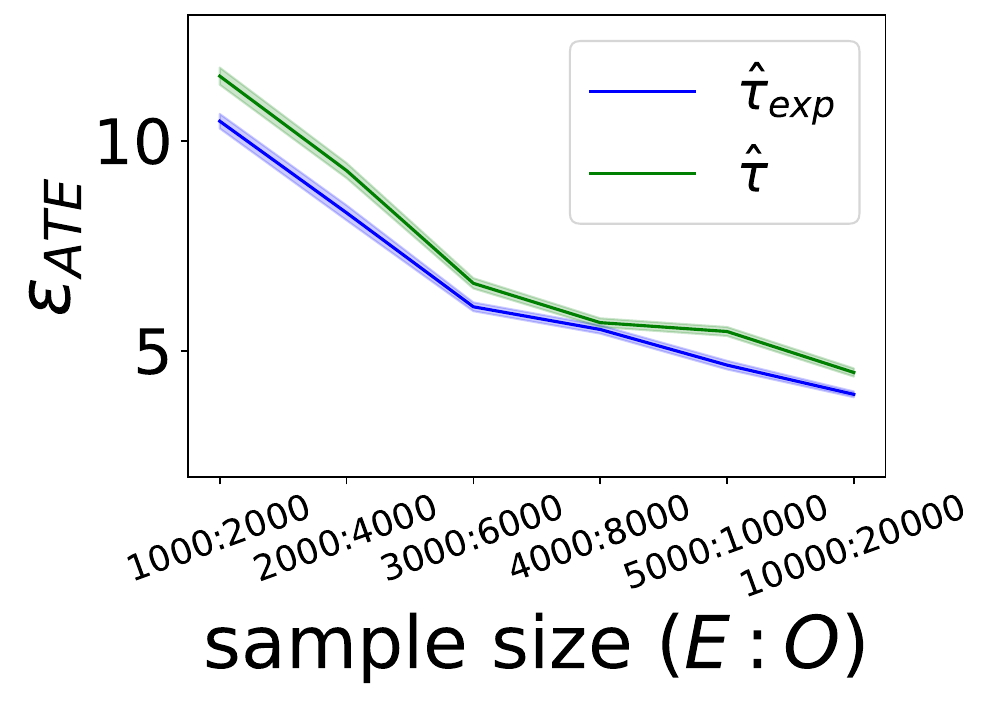}
        \label{fig: MAE_size}
    }
        \vspace{-.2cm}
    \caption{Results of control experiments with varying sample sizes $n_e$ and $n_o$.}
    \label{fig: sample sensitivity}
    \vspace{-.3cm}
\end{wrapfigure}
\textbf{Sample size sensitivity}
We conduct control experiments by varying the sample sizes of both experimental and observational data.
The results are presented in Figure \ref{fig: sample sensitivity}.
As expected, the estimator's performance improves as the sample sizes $n_o$ and $n_e$ increase.
Notably, when $n_o$ and $n_e$ become larger, our estimator $\hat \tau(\x)$ closely approaches the idealized estimator $\hat \tau_{exp}(\x)$, which again confirms the effectiveness of our estimator.
Furthermore, when the sample size $n_e \geq 3000$ ($n_o \geq 6000$), the estimation errors become stable, indicating an appropriate sample size threshold.

\section{Conclusion}
\label{Conclusion}

In this paper, we extend the existing CAECB assumption \cite{ghassami2022combining} for long-term causal inference.
The original assumption is restricted to settings where short- and long-term outcomes share the same scale. 
To address this limitation, we introduce a more general assumption, the functional CAECB assumption, which accommodates temporal short-term outcomes.
Under this new assumption, we theoretically establish identification of heterogeneous long-term causal effects and practically develop an estimator by modeling sequential latent confounding.
We further provide a comprehensive theoretical analysis of the estimator.
Experimental results support both the theoretical findings and the estimator’s effectiveness.

\section{Acknowledgments}

This research was supported in part by National Science and
Technology Major Project (2021ZD0111501), National Science Fund for Excellent Young Scholars (62122022), Natural Science Foundation of China (U24A20233, 62206064,
62206061, 62476163, 62406078), Guangdong Basic and Applied Basic Research Foundation (2023B1515120020), and
CCF-DiDi GAIA Collaborative Research Funds (CCF-DiDi GAIA 202403).



\bibliography{neurips_2025}
\bibliographystyle{plainnat}








\appendix

\newpage

  \textit{\large Supplement to}\\ \ \\
      {\Large \bf ``Long-term Causal Inference via Modeling Sequential
Latent Confounding''}\
    \newcommand{\beginsupplement}{%
}

\beginsupplement
{\large Appendix organization:}

\DoToC

\section{Assumption Discussion:
Latent Unconfoundedness \cite{athey2020combining}, 
Sequential Outcomes \cite{imbens2022long},
CAECB \cite{ghassami2022combining}, FCAECB}
\label{app: discuss assums}

In this section, we compare different key assumptions in \cite{athey2020combining,imbens2022long,ghassami2022combining} and our FCAECB assumption, and show the differences in terms of causal graphs.

\begin{table}[!h]
    \centering
    \resizebox{1.\textwidth}{!}{ 
    \begin{tabular}{c|c|c}
        \toprule
        \Large  Assumptions & \Large Statements  & \Large  Causal graphs \\
          \midrule
        \Large Latent Unconfoundedness \cite{athey2020combining} 
         &  \makecell{ \Large$Y(a) \Vbar A \mid \X, \S(a), G=O$  \\
        }
        &  \makecell{\includegraphics[width=.7\linewidth]{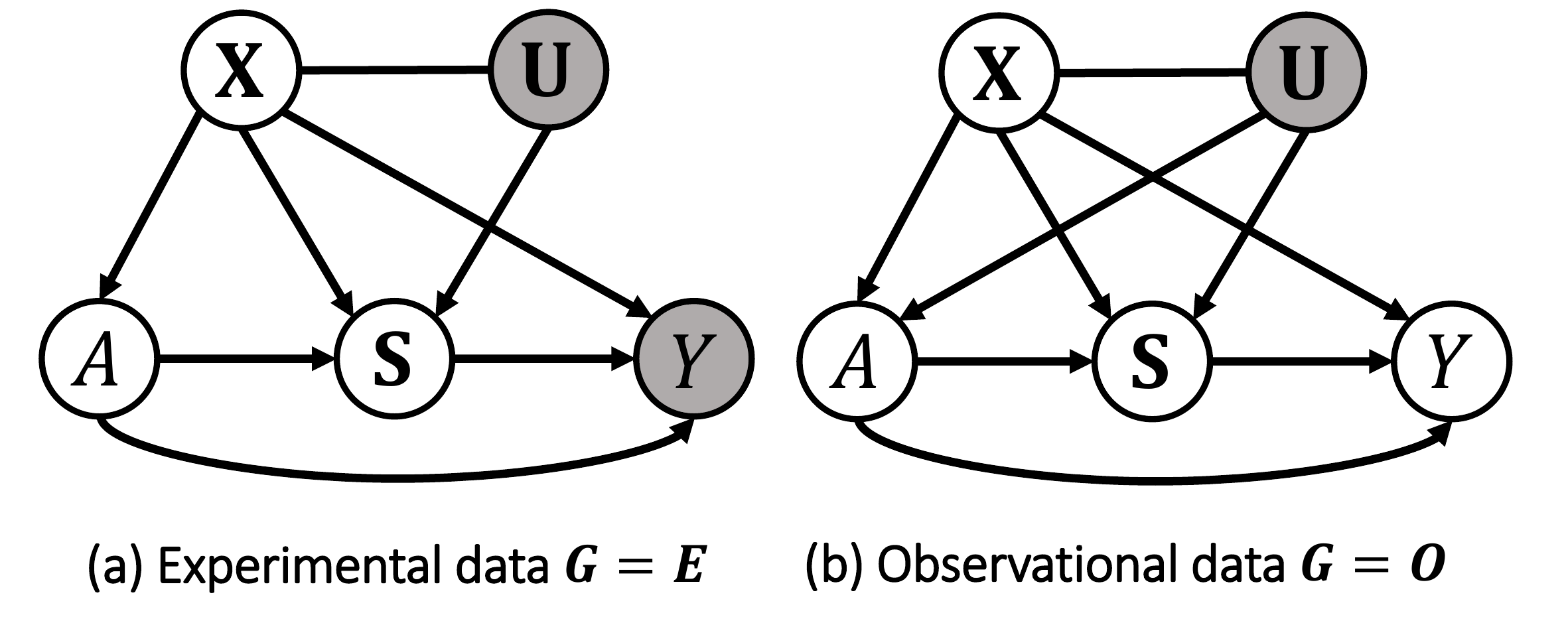}}
         \\ \midrule
        \Large Sequential Outcomes \cite{imbens2022long} 
        &  \makecell{ \Large$(Y(a),\S_3(a)) \Vbar \S_1(a) \mid \S_2(a), \U,\X, G=O$ \\}
        &  \makecell{\includegraphics[width=.7\linewidth]{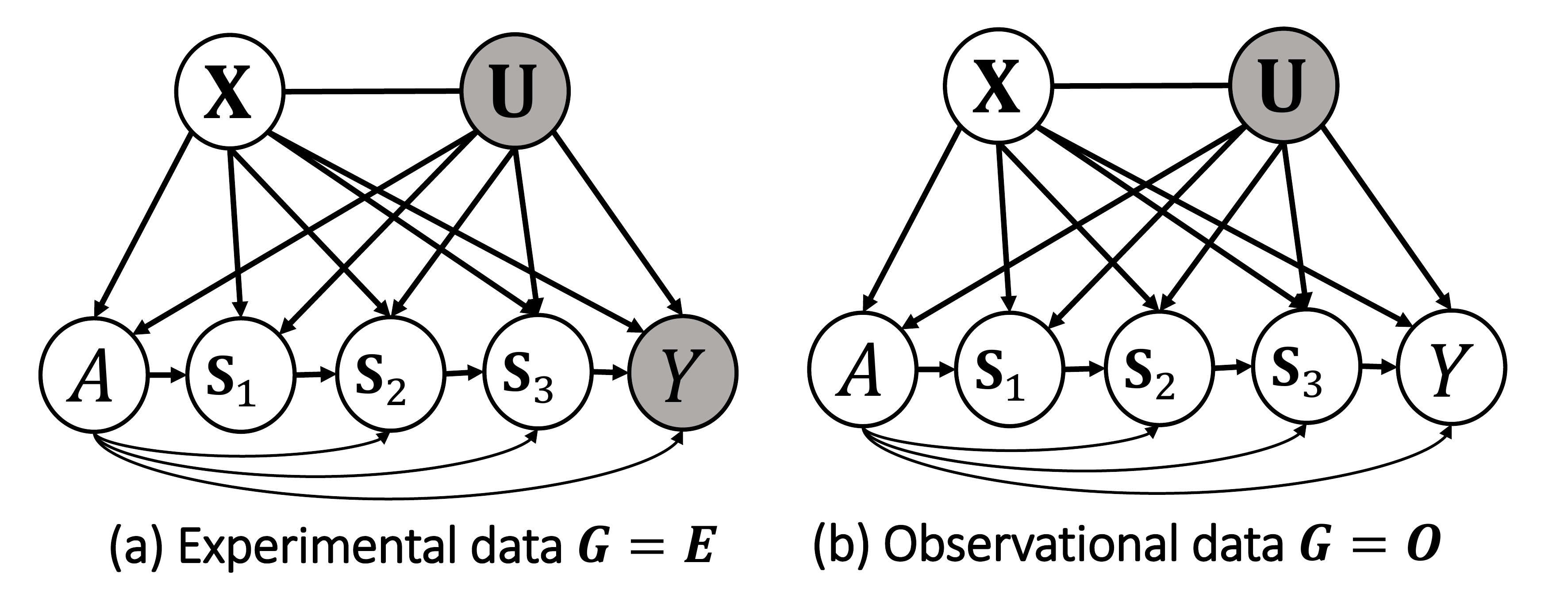}}
        \\ \midrule
        \Large CAECB \cite{ghassami2022combining} 
        & \makecell{  \Large$\bE [S(a)\mid \X,A=0,G=O]-\bE[S(a)\mid \X,A=1,G=O] $
        \\ \\  \Large$ = \bE [Y(a)\mid \X,A=0,G=O]-\bE[Y(a)\mid \X,A=1,G=O] $
        \\}
        &  \makecell{\includegraphics[width=.7\linewidth]{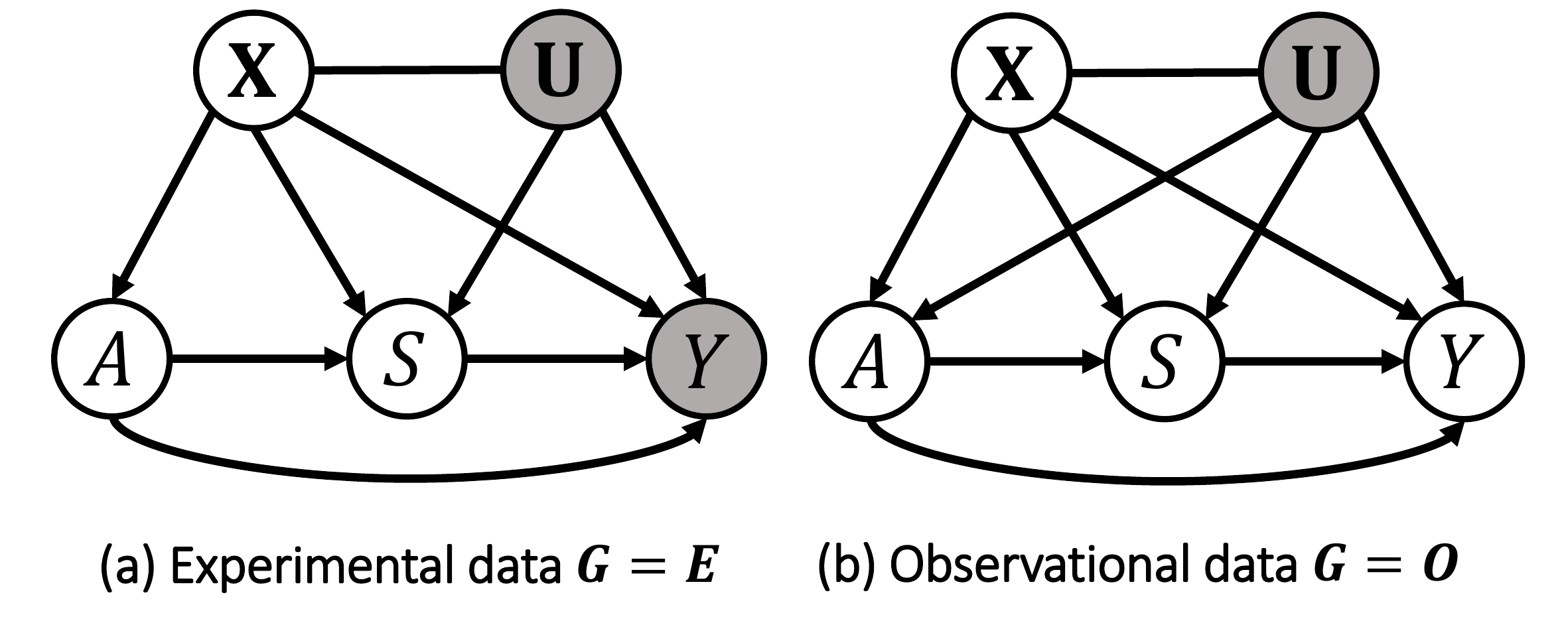}}
        \\ \midrule
        \Large Ours FCAECB& 
        \makecell{ \Large$b_{t+1}(a,\X) = f(\X) b_t(a,\X)$ 
        \\ \\
        where \Large $ b_t(a,\X) = \bE [S_t(a)\mid \X,A=0,G=O]$
        \\ \\
       \hspace{3.5cm}
       \Large  $-\bE[S_t(a)\mid \X,A=1,G=O]$
        }
        &  \makecell{\includegraphics[width=.8\linewidth]{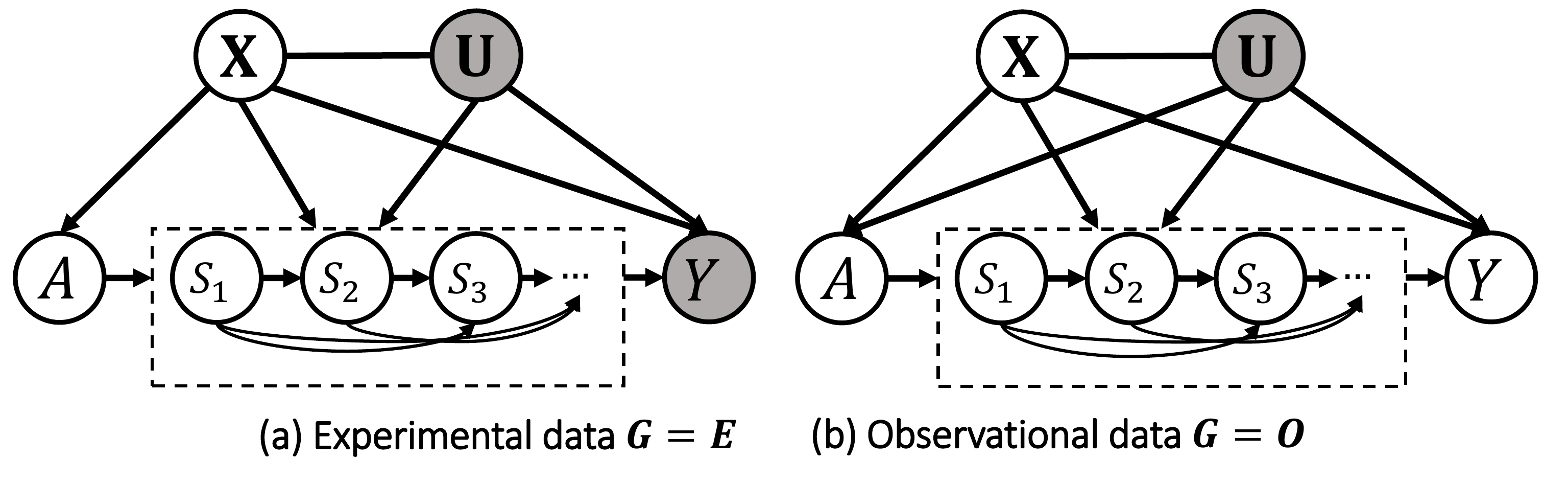}
        }
        \\ \midrule
          \multicolumn{3}{c}{
         \includegraphics[width=1.\linewidth]{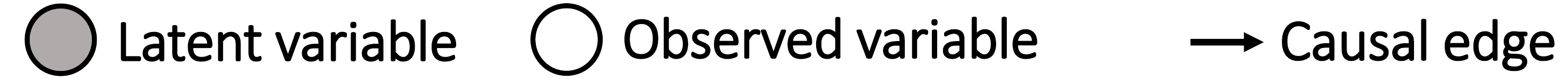}
         }
        \\  \bottomrule
    \end{tabular}
    }
    \caption{Various assumptions and corresponding causal graphs. 
    }
    \label{tab:my_label}
\end{table}

The Latent Unconfoundedness assumption states that unobserved confounders affect the long-term outcome only through short-term outcomes. 
In terms of causal graphs, $\S(a)$ fully mediates the path from $A$ to $Y(a)$, which results in ruling out the causal edge from $\U$ to $Y$ (Also see the discussion in \cite{ghassami2022combining}).
This assumption ensures that $\S$ can serve a role of $\U$, making the unconfoundedness assumption hold in a certain way and thus identifying long-term effects.
Instead, the Sequential Outcomes assumption and CAECB assumption allow for such a causal edge $\U \rightarrow Y$, while they posit different requirements.
The Sequential Outcomes assumption requires that the short- and long-term outcomes can only be directly affected by the outcomes immediately preceding them, but not outcomes further in the past. 
In other words, $\S_{t+1}$ can only be affected by $\S_{t}$, but not by $\S_{ t-1}$ or earlier.
This assumption ensures that $\S_1$ and $\S_3$ act similarly as negative controls in proximal causal learning \cite{miao2018ident,tchetgen2024introduction}, making effects identifiable.
The CAECB assumption does not posit any requirement on the causal graph, but requires that the short- and long-term outcomes are measured on the same scale.
Thus, the short-term outcome becomes a scale variable and may be easily violated when many short-term outcomes can be observed.
This assumption states that the confounding biases in short and long term are the same, making long-term confounding bias can be identified using the observed short-term outcomes.
Our FCAECB assumption likewise places no graph restriction, and further, we allow for temporal short-term outcomes instead of a single $S$.
The FCAECB assumption posits that the sequence of short‑term confounding biases follows a learnable temporal trend, which enables us to infer the long‑term bias from observed $\S_{1:T}$.

Compared with the Latent Unconfoundedness and Sequential Outcomes assumptions, our proposed FCAECB assumption is neither logically stronger nor weaker. 
While those two assumptions impose graphical constraints, FCAECB instead posits a particular form of data-generating process.
And compared to the CAECB assumption, the FCAECB assumption is weaker, as CAECB corresponds to a special case of FCAECB where $f(\X)=1$.

Our FCAECB assumption can be further relaxed by allowing it to vary over time. In other words, we can replace $f(\X)$ with $f(\X,t)$ without any other modification, and the identification result also holds with almost the same proofs.
However, this generalization introduces a new challenge: how to extrapolate $f(\X,t)$, learned with $t \in \{1,...,T\}$, to $f(\X,T+\mu)$.
Addressing this challenge requires additional assumptions on $f$, such as temporal smoothness, autoregressive dependencies, \textit{etc}.
We leave this extension for future work.

\section{Broader impacts}
\label{app: broader impact}
Our work focuses on long-term causal inference, whose goal is to estimate long-term causal effects by combining long-term observational data and short-term experimental data. Our proposed method could be applied to a wide range of applications, such as decision-making in marketing and preventive measure studies in epidemiology. There are no specific possible harms or negative societal impacts that we should claim.

\section{Limitations}
\label{app: limitation}
While our work relaxes some of the existing assumptions, it still relies on relatively strong ones.
Regarding data requirements, similar to \cite{athey2020combining,ghassami2022combining,imbens2022long}, our work depends on access to short-term experimental data, which may not be feasible in certain contexts, such as studies on smoking where interventions would be unethical. Additionally, the core Assumption \ref{assum: time series equ bias} proposed in this paper may not hold in practice and remains untestable due to the absence of long-term outcomes in experimental data, although we propose a partial validation using short-term outcomes.

\section{Additional Experimental Details and Results}
\label{app: add exp}

\textbf{Data Generation Process}
Our data generation process partly follows \cite{kallus2018removing}.
Specifically, we first generate the treatments as follows: $A\mid G=O \sim \text{Bernoulli}(0.6)$ and $A\mid G=E \sim \text{Bernoulli}(0.4)$. Then we generate the observed $\X$ and the unobserved $\U$ as follows:
   \begin{equation} 
    \begin{aligned}
       & (\X,\U)\mid A,G=E \sim \mathcal N ([\frac{2A-1}{2},0], 
       \begin{bmatrix} 1 & 0 \\ 0 & 1
       \end{bmatrix})
       \\  
       & (\X,\U)\mid A,G=O \sim \mathcal N ([\frac{1-2A}{2},0], 
       \begin{bmatrix} 1 & A-0.5 \\ A-0.5 & 1
       \end{bmatrix}).
    \end{aligned}
    \end{equation}
Finally, we generate the $T$-step short-term outcomes $\S$ and the long-term outcome $Y=S_{T+\mu}$ satisfying  Assumption \ref{assum: time series equ bias} as follows:
   \begin{equation} 
    \begin{aligned}
       S_t(A) = A + 0.1A\X + \X + \U + \sum_{k=0}^t S_k(A) + \epsilon_{S_t(A)}
    \end{aligned}
    \end{equation}
where $\epsilon_{S_t(A)}$ are  Gaussian noises. This will result in non-equal confounding bias in different time steps $t$, i.e., $\omega_t(\x)=-2^t \x$. 

\textbf{Baselines and implementation details}
We compare our method with the following baselines:
\begin{itemize}
    \item LTEE \cite{cheng2021long}. LTEE assumes the unconfoundedness assumption, and it uses a dual-headed RNN to capture the relationships of time-dependent surrogates and then estimates the effects using the predicted long-term outcomes. We use the code available at \url{https://github.com/GitHubLuCheng/LTEE}.
    \item Latent Unconfoundedness \cite{athey2020combining}. \citet{athey2020combining} propose a long-term effect estimator under the latent unconfoundedness assumption. We implement this method following Sec. 4.1 in \cite{athey2020combining} using correctly specified models.
    \item Sequential Outcomes \cite{imbens2022long}. \citet{imbens2022long} propose a long-term effect estimator by treating temporal short-term outcomes as proxies. 
    we test three grouping strategies: (1) $S_{\lceil T/2 \rceil -1}, S_{\lceil T/2 \rceil}, S_{\lceil T/2 \rceil +1}$, (2) $S_1, S_{\lceil T/2 \rceil}, S_T$, and (3) $S_{1:\lceil T/2 \rceil-1}, S_{\lceil T/2 \rceil}, S_{\lceil T/2 \rceil+1:T}$.
    We use the code available at \url{https://github.com/CausalML/LongTermCausalInference/tree/main}.
    \item NNPIV \cite{meza2021nested}. \citet{meza2021nested} propose an estimator for nested nonparametric instrumental variable regression, which is then applied to estimate the long-term heterogeneous effect under latent unconfoundedness and surrogacy assumption. We denote the estimators as NNPIV\_LU and NNPIV\_SInd under the latent unconfoundedness assumption \cite{athey2020combining} and surrogacy assumption \cite{athey2019surrogate}, respectively. We use the code available at \url{https://github.com/isaacmeza/NNPIV}.
    \item CAECB \cite{ghassami2022combining}. We implement CAECB \cite{ghassami2022combining} following the identification result in Eq. \ref{eq: identi}. Since this estimator requires only using one short-term outcome, we implement four versions, including using the first, middle, last, and random one short-term outcome among all short-term outcomes.
    \item T-learner \cite{kunzel2019metalearners} trained using observational $Y$. This estimator is biased due to the latent confounding. We implement it using linear regression.
    \item T-learner \cite{kunzel2019metalearners} trained using experimental $Y$. We denote this estimator as $\hat{\tau}_{exp}$. This estimator is idealized and infeasible in practice since $Y$ is missing in the experimental data and serves as an idealized benchmark. We implement it using linear regression.
    \item Ours $\hat \tau$. We implement our method according to the algorithm described in Sec. \ref{sec: alg} using correctly specified models. Unless otherwise specified, we train the model using all short-term outcomes and omit step S0.
\end{itemize}

All experiments are conducted on a machine equipped with an Intel(R) Xeon(R) E5-2620 v4 @ 2.10GHz CPU and a GeForce RTX 2080 Ti GPU. Each reported result is averaged over 50 runs.

\textbf{Full Results of Table \ref{table:vary mu}}.
We present the full result of Table \ref{table:vary mu} in Tables \ref{table:vary mu whole} and \ref{table:vary T whole}. The results are expected and consistent with the analysis presented in the main text. Our estimator outperforms all baselines and closely aligns with the idealized estimator due to the correct assumption.

\begin{table}[t]
\caption{The results on synthetic data are presented as the mean and standard deviation. Lower values indicate better performance. The best-performing methods are bolded, with the exception of the idealized estimator $\hat{\tau}_{exp}$, which is excluded from the comparison.}
\label{sample-table}
\begin{center}
\resizebox{\textwidth}{!}{ 
\begin{tabular}{l|cc|cc|cc|cc|cc}
\toprule
Data 
& \multicolumn{2}{|c|}{$\mu=1$, $T=6$} 
& \multicolumn{2}{|c|}{$\mu=2$, $T=6$} 
& \multicolumn{2}{|c|}{$\mu=3$, $T=6$} 
& \multicolumn{2}{|c|}{$\mu=4$, $T=6$} 
& \multicolumn{2}{|c}{$\mu=5$, $T=6$} 
\\
\midrule
Metrics
& \multicolumn{1}{|c}{${\varepsilon_{PEHE}}$} 
& \multicolumn{1}{c|}{$\varepsilon_{ATE}$} 
& \multicolumn{1}{|c}{${\varepsilon_{PEHE}}$} 
& \multicolumn{1}{c|}{$\varepsilon_{ATE}$} 
& \multicolumn{1}{|c}{${\varepsilon_{PEHE}}$} 
& \multicolumn{1}{c}{$\varepsilon_{ATE}$} 
& \multicolumn{1}{|c}{${\varepsilon_{PEHE}}$} 
& \multicolumn{1}{c|}{$\varepsilon_{ATE}$} 
& \multicolumn{1}{|c}{${\varepsilon_{PEHE}}$} 
& \multicolumn{1}{c|}{$\varepsilon_{ATE}$} 
\\
\midrule
$\hat{\tau}_{exp}$ (idealized)
& 3.0033 $\pm$ 1.8226 & 2.0722 $\pm$ 1.7681
& 6.0076 $\pm$ 3.6446 & 4.1457 $\pm$ 3.5353
& 12.0142 $\pm$ 7.2895 & 8.2901 $\pm$ 7.0709
& 24.0297 $\pm$ 14.5790 & 16.5806 $\pm$ 14.1423
& 48.0590 $\pm$ 29.1580 & 33.1620 $\pm$ 28.2845  \\
\hline
T-learner (using obs. $Y$)
& 71.8431 $\pm$ 3.1612 & 6.8211 $\pm$ 2.8357
& 143.6844 $\pm$ 6.3217 & 13.6400 $\pm$ 5.6710
& 287.3693 $\pm$ 12.6434 & 27.2795 $\pm$ 11.3424
& 574.7384 $\pm$ 25.2877 & 54.5614 $\pm$ 22.6837
& 1149.4769 $\pm$ 50.5746 & 109.1205 $\pm$ 45.3685   \\
LTEE
& 73.0421 $\pm$ 7.1531 & 10.0977 $\pm$ 6.1071
& 142.1183 $\pm$ 10.9446 & 17.3357 $\pm$ 10.0487
& 287.3264 $\pm$ 28.5202 & 32.4417 $\pm$ 19.5739
& 589.0638 $\pm$ 56.6554 & 79.3176 $\pm$ 41.2797
& 1168.7612 $\pm$ 122.7687 & 137.0583 $\pm$ 107.5567   \\
Latent Unconfoundedness
& - & 2.6322 $\pm$ 2.0120
& - & 5.2648 $\pm$ 4.0238
& - & 10.5293 $\pm$ 8.0471
& - & 21.0585 $\pm$ 16.0946
& - & 42.1179 $\pm$ 32.1874   \\
Sequential Outcomes ($S_{\lceil T/2 \rceil -1}, S_{\lceil T/2 \rceil }, S_{\lceil T/2 \rceil +1}$)
& - & 67.5881 $\pm$ 3.0107
& - & 135.1792 $\pm$ 6.0220
& - & 270.3607 $\pm$ 12.0428
& - & 540.7164 $\pm$ 24.0884
& - & 1081.4361 $\pm$ 48.1759    \\
Sequential Outcomes ($S_1, S_{\lceil T/2 \rceil }, S_T$)
& - & 67.5790 $\pm$ 3.0138
& - & 135.1608 $\pm$ 6.0282
& - & 270.3237 $\pm$ 12.0553
& - & 540.6429 $\pm$ 24.1133
& - & 1081.2884 $\pm$ 48.2252
\\
Sequential Outcomes ($S_{1:\lceil T/2 \rceil -1}, S_{\lceil T/2 \rceil }, S_{\lceil T/2 \rceil +1:T}$)
& - & 67.5797 $\pm$ 3.0151
& - & 135.1622 $\pm$ 6.0309
& - & 270.3266 $\pm$ 12.0607
& - & 540.6486 $\pm$ 24.1242
& - & 1081.3001 $\pm$ 48.2472
\\
NNPIV\_LU
& 70.7124 $\pm$ 2.5320 & 6.1302 $\pm$ 2.8888
& 141.4309 $\pm$ 5.0655 & 12.2646 $\pm$ 5.7776
& 282.8595 $\pm$ 10.1320 & 24.5293 $\pm$ 11.5558
& 565.7158 $\pm$ 20.2586 & 49.0542 $\pm$ 23.1130
& 1131.4331 $\pm$ 40.5209 & 98.1118 $\pm$ 46.2263   \\
NNPIV\_SInd
& 70.7155 $\pm$ 2.5329 & 6.1306 $\pm$ 2.8884
& 141.4320 $\pm$ 5.0665 & 12.2626 $\pm$ 5.7782
& 282.8655 $\pm$ 10.1314 & 24.5240 $\pm$ 11.5558
& 565.7307 $\pm$ 20.2627 & 49.0492 $\pm$ 23.1115
& 1131.4602 $\pm$ 40.5252 & 98.0989 $\pm$ 46.2232   \\
CAECB (using first $S$)
& 70.7257 $\pm$ 3.1153 & 6.7160 $\pm$ 2.7984
& 142.5669 $\pm$ 6.2757 & 13.5348 $\pm$ 5.6335
& 286.2518 $\pm$ 12.5973 & 27.1744 $\pm$ 11.3048
& 573.6209 $\pm$ 25.2417 & 54.4563 $\pm$ 22.6461
& 1148.3595 $\pm$ 50.5286 & 109.0154 $\pm$ 45.3308   \\
CAECB (using middle $S$)
& 67.3754 $\pm$ 2.9827 & 6.3982 $\pm$ 2.6884
& 139.2165 $\pm$ 6.1417 & 13.2171 $\pm$ 5.5210
& 282.9014 $\pm$ 12.4626 & 26.8566 $\pm$ 11.1910
& 570.2704 $\pm$ 25.1065 & 54.1385 $\pm$ 22.5317
& 1145.0089 $\pm$ 50.3933 & 108.6976 $\pm$ 45.2162  \\
CAECB (using last $S$)
& 36.1202 $\pm$ 2.0211 & 3.5778 $\pm$ 1.8603
& 107.9443 $\pm$ 4.9841 & 10.2576 $\pm$ 4.6541
& 251.6244 $\pm$ 11.2383 & 23.8972 $\pm$ 10.2153
& 538.9915 $\pm$ 23.8540 & 51.1791 $\pm$ 21.5097
& 1113.7291 $\pm$ 49.1287 & 105.7381 $\pm$ 44.1729  \\
CAECB (using random $S$)
& 59.8772 $\pm$ 11.9929 & 5.7194 $\pm$ 2.6871
& 131.7174 $\pm$ 12.7309 & 12.5383 $\pm$ 5.3841
& 275.4020 $\pm$ 16.0775 & 26.1778 $\pm$ 10.9897
& 562.7700 $\pm$ 26.2888 & 53.4597 $\pm$ 22.2995
& 1137.5094 $\pm$ 50.1542 & 108.0188 $\pm$ 44.9689  \\
Ours  $\hat \tau$
&\textbf{ 3.1110 $\pm$ 1.8230 } & \textbf{ 2.1470 $\pm$ 1.7826} 
& \textbf{ 6.5594 $\pm$ 3.5589 } & \textbf{ 4.4610 $\pm$ 3.5302 }
&\textbf{ 13.9945 $\pm$ 7.1682 } & \textbf{ 9.2961 $\pm$ 7.0820} 
& \textbf{ 29.9866 $\pm$ 15.0884 } & \textbf{ 19.3927 $\pm$ 14.4300 }
&\textbf{ 64.3076 $\pm$ 32.9868 } & \textbf{ 40.5425 $\pm$ 29.7264 }  \\
\bottomrule
\end{tabular}
}
\end{center}
\label{table:vary T whole}
\end{table}

\begin{table}[t]
\caption{The results on synthetic data are presented as the mean and standard deviation. Lower values indicate better performance. The best-performing methods are bolded, with the exception of the idealized estimator $\hat{\tau}_{exp}$, which is excluded from the comparison.}
\label{sample-table}
\begin{center}
\resizebox{\textwidth}{!}{ 
\begin{tabular}{l|cc|cc|cc|cc|cc}
\toprule
Data 
& \multicolumn{2}{|c|}{$\mu=5$, $T=4$} 
& \multicolumn{2}{|c|}{$\mu=4$, $T=5$} 
& \multicolumn{2}{|c|}{$\mu=3$, $T=6$} 
& \multicolumn{2}{|c|}{$\mu=2$, $T=7$} 
& \multicolumn{2}{|c}{$\mu=1$, $T=8$} 
\\
\midrule
Metrics
& \multicolumn{1}{|c}{${\varepsilon_{PEHE}}$} 
& \multicolumn{1}{c|}{$\varepsilon_{ATE}$} 
& \multicolumn{1}{|c}{${\varepsilon_{PEHE}}$} 
& \multicolumn{1}{c|}{$\varepsilon_{ATE}$} 
& \multicolumn{1}{|c}{${\varepsilon_{PEHE}}$} 
& \multicolumn{1}{c}{$\varepsilon_{ATE}$} 
& \multicolumn{1}{|c}{${\varepsilon_{PEHE}}$} 
& \multicolumn{1}{c|}{$\varepsilon_{ATE}$} 
& \multicolumn{1}{|c}{${\varepsilon_{PEHE}}$} 
& \multicolumn{1}{c|}{$\varepsilon_{ATE}$} 
\\
\midrule
$\hat{\tau}_{exp}$ (idealized)
& 12.0142 $\pm$ 7.2895 & 8.2901 $\pm$ 7.0709
& 12.0142 $\pm$ 7.2895 & 8.2901 $\pm$ 7.0709
& 12.0142 $\pm$ 7.2895 & 8.2901 $\pm$ 7.0709
& 12.0142 $\pm$ 7.2895 & 8.2901 $\pm$ 7.0709
& 12.0142 $\pm$ 7.2895 & 8.2901 $\pm$ 7.0709 \\
\hline
T-learner (using obs. $Y$)
& 287.3693 $\pm$ 12.6434 & 27.2795 $\pm$ 11.3423
& 287.3693 $\pm$ 12.6434 & 27.2795 $\pm$ 11.3423
& 287.3693 $\pm$ 12.6434 & 27.2795 $\pm$ 11.3423
& 287.3693 $\pm$ 12.6434 & 27.2795 $\pm$ 11.3423
& 287.3693 $\pm$ 12.6434 & 27.2795 $\pm$ 11.3423  \\
LTEE
& 296.1849 $\pm$ 27.7377 & 32.9559 $\pm$ 19.0871
& 288.1059 $\pm$ 31.1373 & 34.7556 $\pm$ 20.5223
& 287.3264 $\pm$ 28.5202 & 32.4417 $\pm$ 19.5739
& 289.8522 $\pm$ 25.2498 & 41.9068 $\pm$ 25.6233
& 284.0581 $\pm$ 26.0439 & 29.2638 $\pm$ 20.0548 \\
Latent Unconfoundedness
& - & 10.5248 $\pm$ 8.0405
& - & 10.5302 $\pm$ 8.0423
& - & 10.5293 $\pm$ 8.0471
& - & 10.5270 $\pm$ 8.0476
& - & 10.5270 $\pm$ 8.0465  \\
Sequential Outcomes ($S_{\lceil T/2 \rceil -1}, S_{\lceil T/2 \rceil }, S_{\lceil T/2 \rceil +1}$)
& - & 270.3555 $\pm$ 12.0474
& - & 270.3607 $\pm$ 12.0428
& - & 270.3607 $\pm$ 12.0428
& - & 270.3272 $\pm$ 12.0610
& - & 270.3272 $\pm$ 12.0610  \\
Sequential Outcomes ($S_1, S_{\lceil T/2 \rceil }, S_T$)
& - & 270.3658 $\pm$ 12.0486
& - & 270.3213 $\pm$ 12.0654
& - & 270.3237 $\pm$ 12.0553
& - & 270.3323 $\pm$ 12.0565
& - & 270.3247 $\pm$ 12.0560
\\
Sequential Outcomes ($S_{1:\lceil T/2 \rceil -1}, S_{\lceil T/2 \rceil }, S_{\lceil T/2 \rceil +1:T}$)
& - & 270.3554 $\pm$ 12.0473
& - & 270.3266 $\pm$ 12.0607
& - & 270.3266 $\pm$ 12.0607
& - & 270.3295 $\pm$ 12.0557
& - & 270.3295 $\pm$ 12.0557
\\
NNPIV\_LU
& 282.8435 $\pm$ 10.1334 & 24.6196 $\pm$ 11.5648
& 282.8424 $\pm$ 10.1261 & 24.5424 $\pm$ 11.5495
& 282.8595 $\pm$ 10.1320 & 24.5293 $\pm$ 11.5558
& 282.8690 $\pm$ 10.1315 & 24.5386 $\pm$ 11.5653
& 282.8671 $\pm$ 10.1264 & 24.5355 $\pm$ 11.5659  \\
NNPIV\_SInd
& 282.8679 $\pm$ 10.1239 & 24.5522 $\pm$ 11.5413
& 282.8408 $\pm$ 10.1337 & 24.5238 $\pm$ 11.5430
& 282.8655 $\pm$ 10.1314 & 24.5240 $\pm$ 11.5558
& 282.8683 $\pm$ 10.1288 & 24.5331 $\pm$ 11.5663
& 282.8679 $\pm$ 10.1240 & 24.5326 $\pm$ 11.5657  \\
CAECB (using first $S$)
& 286.2518 $\pm$ 12.5973 & 27.1744 $\pm$ 11.3048
& 286.2518 $\pm$ 12.5973 & 27.1744 $\pm$ 11.3048
& 286.2518 $\pm$ 12.5973 & 27.1744 $\pm$ 11.3048
& 286.2519 $\pm$ 12.5973 & 27.1744 $\pm$ 11.3048
& 286.2519 $\pm$ 12.5973 & 27.1744 $\pm$ 11.3048 \\
CAECB (using middle $S$)
& 285.1353 $\pm$ 12.5525 & 27.0677 $\pm$ 11.2662
& 282.9014 $\pm$ 12.4625 & 26.8566 $\pm$ 11.1910
& 282.9014 $\pm$ 12.4626 & 26.8566 $\pm$ 11.1910
& 278.4314 $\pm$ 12.2812 & 26.4329 $\pm$ 11.0441
& 278.4314 $\pm$ 12.2812 & 26.4330 $\pm$ 11.0441  \\
CAECB (using last $S$)
& 278.4314 $\pm$ 12.2812 & 26.4330 $\pm$ 11.0441
& 269.4957 $\pm$ 11.9268 & 25.5886 $\pm$ 10.7549
& 251.6244 $\pm$ 11.2383 & 23.8972 $\pm$ 10.2153
& 215.8874 $\pm$ 9.9675 & 20.5128 $\pm$ 9.3073
& 144.4755 $\pm$ 8.0850 & 14.3059 $\pm$ 7.4382 \\
CAECB (using random $S$)
& 283.3244 $\pm$ 13.2920 & 26.8862 $\pm$ 11.1734
& 280.1459 $\pm$ 13.3821 & 26.5476 $\pm$ 11.0112
& 275.4020 $\pm$ 16.0775 & 26.1778 $\pm$ 10.9897
& 270.3812 $\pm$ 22.4630 & 25.5679 $\pm$ 10.7551
& 254.5652 $\pm$ 45.7734 & 24.0325 $\pm$ 10.6828 \\
Ours  $\hat \tau$
&\textbf{ 19.1631 $\pm$ 12.1030 } &\textbf{ 10.3532 $\pm$ 9.2908} 
&\textbf{ 16.6761 $\pm$ 13.1391 } &\textbf{ 10.4123 $\pm$ 9.5082} 
&\textbf{ 13.9945 $\pm$ 7.1682 } &\textbf{ 9.2961 $\pm$ 7.0820} 
&\textbf{ 12.7653 $\pm$ 7.4767 } &\textbf{ 8.5308 $\pm$ 7.0285} 
&\textbf{ 12.3040 $\pm$ 7.3958 } &\textbf{ 8.5348 $\pm$ 7.1581}   \\
\bottomrule
\end{tabular}
}
\end{center}
\label{table:vary mu whole}
\end{table}

\section{Proof of Theorem \ref{theo: identifi}}\label{app: proof of iden}
We first restate Theorem \ref{theo: identifi} as follows:
\begin{theorem} 
Suppose Assumptions \ref{assum: consist}, \ref{assum: positi}, \ref{assum: internal validity of obs}, \ref{assum: internal validity of exp}, \ref{assum: external validity of exp} and \ref{assum: equ bias} hold, then $\tau(\x)$ can be identified as follows:
    \begin{equation}
      \begin{aligned}
       & \tau(\x) \\
            = & \mathbb E[Y(1) - Y(0)\mid \X=\x] \\
            = & \mu_Y^O(1,\x) - \mu_Y^O(0,\x)  
             +  \mu_S^E(1,\x) -  \mu_S^E(0,\x)
             +  \mu_S^O(0,\x) - \mu_S^O(1,\x) .
      \end{aligned}
    \end{equation}
\end{theorem}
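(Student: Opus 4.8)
The plan is to reduce the identification to two parallel decompositions — one for the long-term potential outcome $Y(a)$ and one for the short-term potential outcome $S(a)$ — and then glue them together with the CAECB assumption, observing that all propensity-dependent terms cancel.

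First I would write $\tau(\x)=\bE[Y(1)\mid\X=\x]-\bE[Y(0)\mid\X=\x]$ and invoke the data-combination Assumption~\ref{assum: external validity of exp} to replace each $\bE[Y(a)\mid\X=\x]$ by $\bE[Y(a)\mid\X=\x,G=O]$. Writing $e(\x):=P(A=1\mid\X=\x,G=O)$, which lies in $(0,1)$ by positivity (Assumption~\ref{assum: positi}), I would condition on $A$ inside the observational group: the ``observed'' arm collapses by consistency (Assumption~\ref{assum: consist}) to $\mu_Y^O(a,\x)$, while the ``counterfactual'' arm is recorded through the arm-specific bias
\begin{equation*}
\delta_Y(a,\x):=\bE[Y(a)\mid\X=\x,A=0,G=O]-\bE[Y(a)\mid\X=\x,A=1,G=O].
\end{equation*}
Carrying this out for $a=1$ and $a=0$ separately yields
\begin{equation*}
\tau(\x)=\mu_Y^O(1,\x)-\mu_Y^O(0,\x)+\bigl(1-e(\x)\bigr)\delta_Y(1,\x)+e(\x)\,\delta_Y(0,\x).
\end{equation*}

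Next I would run the same computation on the short-term outcome, with $\delta_S(a,\x)$ defined analogously to $\delta_Y(a,\x)$ with $S$ in place of $Y$. Using experimental internal validity (Assumption~\ref{assum: internal validity of exp}) together with consistency and data combination, $\mu_S^E(1,\x)-\mu_S^E(0,\x)=\bE[S(1)-S(0)\mid\X=\x]$, the true short-term effect. Expanding $\bE[S(a)\mid\X=\x]=\bE[S(a)\mid\X=\x,G=O]$ over $A$ exactly as above and then adding $\mu_S^O(0,\x)-\mu_S^O(1,\x)$, the $\mu_S^O$ terms cancel and one is left with
\begin{equation*}
\omega(\x)=\bigl(1-e(\x)\bigr)\delta_S(1,\x)+e(\x)\,\delta_S(0,\x);
\end{equation*}
that is, $\omega(\x)$ is precisely the propensity-weighted average of the two arm-specific short-term biases — the identical weighted combination that multiplies the $\delta_Y$'s in the expansion of $\tau(\x)$.

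Finally, the CAECB assumption (Assumption~\ref{assum: equ bias}) is exactly the statement $\delta_Y(a,\x)=\delta_S(a,\x)$ for each $a$, so substituting into the displayed formula for $\tau(\x)$ gives $\tau(\x)=\mu_Y^O(1,\x)-\mu_Y^O(0,\x)+\omega(\x)$, as claimed. The step I expect to require the most care is the second decomposition: one must notice that the seemingly $e(\x)$-dependent observational confounding bias $\omega(\x)$ collapses to the mixture $(1-e(\x))\delta_S(1,\x)+e(\x)\delta_S(0,\x)$, which is the same combination appearing in the $\tau(\x)$ expansion; once this matching is in place, CAECB closes the argument and $e(\x)$ never needs to be identified.
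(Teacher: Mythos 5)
Your proposal is correct and follows essentially the same route as the paper's proof: the paper likewise conditions on $A$ within $G=O$ to obtain $\tau(\x)=\mu_Y^O(1,\x)-\mu_Y^O(0,\x)+p(A=0\mid\X,G=O)\,\delta_Y(1,\x)+p(A=1\mid\X,G=O)\,\delta_Y(0,\x)$, runs the parallel decomposition for the short-term outcome, applies CAECB to swap $\delta_Y$ for $\delta_S$, and uses experimental internal validity plus data combination to turn the short-term effect into $\mu_S^E(1,\x)-\mu_S^E(0,\x)$, so that the propensity-weighted bias terms cancel without ever identifying $e(\x)$. Your explicit $e(\x)$-weighted mixture notation is just a cleaner rewriting of the same cancellation.
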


\begin{proof}
    \begin{equation} \label{eq: tau decom} 
        \begin{aligned}
            & \tau(\X)  \\
            = & \mathbb E[Y(1)\mid \X,G=O] - \mathbb E[Y(0)\mid \X,G=O] \\
            = & \mathbb E[Y(1)\mid \X,G=O,A=1]p(A=1\mid \X,G=O) 
                + \mathbb E[Y(1)\mid \X,G=O,A=0]p(A=0\mid \X,G=O) \\
              & - \mathbb E[Y(0)\mid \X,G=O,A=1]p(A=1\mid \X,G=O) 
                 + \mathbb E[Y(0)\mid \X,G=O,A=0]p(A=0\mid \X,G=O) \\
            = & \mathbb E[Y(1)\mid \X,G=O,A=1]p(A=1\mid \X,G=O) 
                + \mathbb E[Y(1)\mid \X,G=O,A=0]p(A=0\mid \X,G=O) \\
              & - \mathbb E[Y(0)\mid \X,G=O,A=1]p(A=1\mid \X,G=O) 
                + \mathbb E[Y(0)\mid \X,G=O,A=0]p(A=0\mid \X,G=O) \\
              & + \mathbb E[Y(1)\mid \X,G=O,A=1]p(A=0\mid \X,G=O)
                 - \mathbb E[Y(1)\mid \X,G=O,A=1]p(A=0\mid \X,G=O) \\
              & + \mathbb E[Y(0)\mid \X,G=O,A=0]p(A=1\mid \X,G=O)
                - \mathbb E[Y(0)\mid \X,G=O,A=0]p(A=1\mid \X,G=O) \\
            = & \mathbb E[Y(1)\mid \X,G=O,A=1] - \mathbb E[Y(0)\mid \X,G=O,A=0] \\
            & + \{ \mathbb E[Y(1)\mid \X,G=O,A=0] - \mathbb E[Y(1)\mid \X,G=O,A=1]\}  \times p(A=0\mid \X,G=O) \\
              & + \{ \mathbb E[Y(0)\mid \X,G=O,A=0] - \mathbb E[Y(0)\mid \X,G=O,A=1]\} \times p(A=1\mid \X,G=O) \\
            = & \mathbb E[Y\mid \X,G=O,A=1] - \mathbb E[Y\mid \X,G=O,A=0] \\
              & + \{ \mathbb E\left[ S(1) \mid \X,G=O,A=0\right] - \mathbb E\left[ S(1)  \mid \X,G=O,A=1\right] \} \times p(A=0\mid \X,G=O) \\
              & + \{ \mathbb E\left[ S(0) \mid \X,G=O,A=0\right] - \mathbb E\left[ S(0) \mid \X,G=O,A=1\right]  \} \times p(A=1\mid \X,G=O) , 
        \end{aligned}
    \end{equation}
    where the first equality is based on Assumption \ref{assum: external validity of exp} and the last equality is based on Assumption \ref{assum: equ bias}.
    Similarly, for short-term conditional causal effects, we have:
    \begin{equation} \label{eq: stce decom}
        \begin{aligned}
            & \mathbb E[S(1)\mid \X,G=O] - \mathbb E[S(0)\mid \X,G=O] \\
            = & \mathbb E[S\mid \X,G=O,A=1] - \mathbb E[S\mid \X,G=O,A=0] \\
              & + \{ \mathbb E[S(1)\mid \X,G=O,A=0] - \mathbb E[S(1)\mid \X,G=O,A=1]\} \times p(A=0\mid \X,G=O) \\
              & + \{ \mathbb E[S(0)\mid \X,G=O,A=0] - \mathbb E[S(0)\mid \X,G=O,A=1]\} \times p(A=1\mid \X,G=O) \\
        \end{aligned}
    \end{equation}
    Then, combining Eq. \eqref{eq: tau decom} and \eqref{eq: stce decom}, we have
    \begin{equation} 
        \begin{aligned}
            & \tau(\X) \\
            = & \mathbb E[Y(1)\mid \X,G=O] - \mathbb E[Y(0)\mid \X,G=O] \\
            = & \mathbb E[Y\mid \X,G=O,A=1] - \mathbb E[Y\mid \X,G=O,A=0] \\
              & + \mathbb E[S(1)\mid \X,G=O] - \mathbb E[S(0)\mid \X,G=O] 
              - \mathbb E[S\mid \X,G=O,A=1] + \mathbb E[S\mid \X,G=O,A=0] \\
            = & \mathbb E[Y\mid \X,G=O,A=1] - \mathbb E[Y\mid \X,G=O,A=0] \\
              & + \mathbb E[S(1)\mid \X,G=E] - \mathbb E[S(0)\mid \X,G=E] 
              - \mathbb E[S\mid \X,G=O,A=1] + \mathbb E[S\mid \X,G=O,A=0] \\
            = & \mathbb E[Y\mid \X,G=O,A=1] - \mathbb E[Y\mid \X,G=O,A=0] \\
              & + \mathbb E[S\mid \X,G=E,A=1] - \mathbb E[S\mid \X,G=E,A=0] 
               - \mathbb E[S\mid \X,G=O,A=1] + \mathbb E[S\mid \X,G=O,A=0]
                \\
            = & \mu_Y^O(1,\X) - \mu_Y^O(0,\X)  
             +  \mu_S^E(1,\X) -  \mu_S^E(0,\X)
             +  \mu_S^O(0,\X) - \mu_S^O(1,\X) ,
        \end{aligned}
    \end{equation}
    where the second equality is based on Assumption \ref{assum: external validity of exp} and the last equality is based on Assumption \ref{assum: internal validity of exp}. 
    This finishes our proofs.
\end{proof}

\section{Proof of Proposition \ref{propo: confounding bias function}}
\label{app: proof propo}

We first restate the proposition as follow:
\begin{proposition}
    Under Assumption \ref{assum: time series equ bias},  $\forall t$ the confounding biases between times $t$ and $t+1$ follow
\begin{equation}
\begin{aligned}
    \omega_{t+1}(\X) = f(\X) \omega_t(\X) ,
\end{aligned}
\end{equation}
where $\omega_t(\X)$ is the confounding bias at time step $t$, defined as $\omega_t(\X) =  \mu_{S_t}^E(1,\X) - \mu_{S_t}^E(0,\X) + 
      \mu_{S_t}^O(0,\X) - \mu_{S_t}^O(1,\X)$.
\end{proposition}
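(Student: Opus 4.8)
The plan is to first derive a closed‑form expression for the confounding bias $\omega_t(\X)$ in terms of the potential‑outcome biases $b_t(0,\X)$ and $b_t(1,\X)$ — valid for \emph{every} $t$ under Assumptions \ref{assum: consist}, \ref{assum: positi}, \ref{assum: internal validity of obs}, \ref{assum: internal validity of exp}, \ref{assum: external validity of exp} alone — and then to apply the FCAECB recursion to each of the two terms separately, factoring $f(\X)$ out. Concretely, fix $t$ and decompose the observational short‑term causal effect by conditioning on $A$ and invoking the law of total expectation, exactly as in Eq.~\eqref{eq: stce decom} of the proof of Theorem~\ref{theo: identifi} but with $S_t$ in place of $S$; using Assumption~\ref{assum: consist} to replace $\bE[S_t\mid\X,G=O,A=a]$ by $\mu_{S_t}^O(a,\X)$ and collecting the remaining terms into the definition of $b_t(\cdot,\X)$ gives
\begin{equation*}
\bE[S_t(1)\mid\X,G=O]-\bE[S_t(0)\mid\X,G=O]
= \mu_{S_t}^O(1,\X)-\mu_{S_t}^O(0,\X) + b_t(1,\X)\,p(A=0\mid\X,G=O) + b_t(0,\X)\,p(A=1\mid\X,G=O).
\end{equation*}

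Next I would rewrite the left‑hand side using the experimental validity and data‑combination assumptions: by Assumptions~\ref{assum: external validity of exp} and \ref{assum: internal validity of exp}, $\bE[S_t(a)\mid\X,G=O]=\bE[S_t(a)\mid\X,G=E]=\bE[S_t(a)\mid\X,A=a,G=E]=\mu_{S_t}^E(a,\X)$ for each $a$. Substituting this on the left and rearranging the display yields the key identity
\begin{equation*}
\omega_t(\X)=\mu_{S_t}^E(1,\X)-\mu_{S_t}^E(0,\X)+\mu_{S_t}^O(0,\X)-\mu_{S_t}^O(1,\X)
= b_t(1,\X)\,p(A=0\mid\X,G=O)+b_t(0,\X)\,p(A=1\mid\X,G=O),
\end{equation*}
which holds for all $t$, and in particular at time $t+1$.

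Finally, I would invoke Assumption~\ref{assum: time series equ bias}, which holds for all $a$, to write $b_{t+1}(0,\X)=f(\X)\,b_t(0,\X)$ and $b_{t+1}(1,\X)=f(\X)\,b_t(1,\X)$; plugging these into the identity above evaluated at $t+1$ and pulling $f(\X)$ out of both summands gives $\omega_{t+1}(\X)=f(\X)\bigl[\,b_t(1,\X)\,p(A=0\mid\X,G=O)+b_t(0,\X)\,p(A=1\mid\X,G=O)\,\bigr]=f(\X)\,\omega_t(\X)$, which is the claim. The only mildly delicate step is the first one: one must check that the bias terms produced by the total‑expectation decomposition coincide \emph{exactly} with the potential‑outcome quantities $b_t(a,\X)$ as defined in Assumption~\ref{assum: time series equ bias} (correct conditioning set and sign convention), and that Assumption~\ref{assum: positi} is what guarantees the conditional expectations and the weights $p(A=a\mid\X,G=O)$ are well defined. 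Past that bookkeeping, the argument is a direct termwise application of the FCAECB recursion and introduces no ideas beyond those already used in the proof of Theorem~\ref{theo: identifi}.
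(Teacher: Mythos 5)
Your proposal is correct and takes essentially the same route as the paper's own proof: you derive the identity $\omega_t(\X)=b_t(1,\X)\,p(A=0\mid\X,G=O)+b_t(0,\X)\,p(A=1\mid\X,G=O)$ by decomposing the short-term contrast over $A$ in the observational group and transferring across groups via the experimental-validity and data-combination assumptions together with consistency, and then apply the FCAECB recursion to each $b_t(a,\X)$ and factor out $f(\X)$, exactly as the paper does. Incidentally, your sign in front of $b_t(0,\X)\,p(A=1\mid\X,G=O)$ is the correct one; the paper's appendix writes a minus there, a harmless slip since $f(\X)$ factors out either way.
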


\begin{proof}
    We start from the long-term causal effects at time step $t$ in experimental data $G=E$:
    \begin{equation}
        \begin{aligned}
            & \mathbb E[S_t(1)\mid \X,G=E] - \mathbb E[S_t(0)\mid \X,G=E] \\
            \overset{(a)}{=} & \mathbb E[S_t(1)\mid \X,G=O] - \mathbb E[S_t(0)\mid \X,G=O] \\
            = & \mathbb E[S_t(1)\mid \X,G=O,A=1]p(A=1\mid \X,G=O) 
                + \mathbb E[S_t(1)\mid \X,G=O,A=0]p(A=0\mid \X,G=O) \\
              & - \mathbb E[S_t(0)\mid \X,G=O,A=1]p(A=1\mid \X,G=O) 
                + \mathbb E[S_t(0)\mid \X,G=O,A=0]p(A=0\mid \X,G=O) \\
            = & \mathbb E[S_t(1)\mid \X,G=O,A=1]p(A=1\mid \X,G=O) 
                + \mathbb E[S_t(1)\mid \X,G=O,A=0]p(A=0\mid \X,G=O) \\
              & - \mathbb E[S_t(0)\mid \X,G=O,A=1]p(A=1\mid \X,G=O) 
                + \mathbb E[S_t(0)\mid \X,G=O,A=0]p(A=0\mid \X,G=O) \\
              & + \mathbb E[S_t(1)\mid \X,G=O,A=1]p(A=0\mid \X,G=O)
                - \mathbb E[S_t(1)\mid \X,G=O,A=1]p(A=0\mid \X,G=O) \\
              & + \mathbb E[S_t(0)\mid \X,G=O,A=0]p(A=1\mid \X,G=O)
                - \mathbb E[S_t(0)\mid \X,G=O,A=0]p(A=1\mid \X,G=O) \\
            = & \mathbb E[S_t(1)\mid \X,G=O,A=1] - \mathbb E[S_t(0)\mid \X,G=O,A=0] \\
              & + \{ \mathbb E[S_t(1)\mid \X,G=O,A=0] - \mathbb E[S_t(1)\mid \X,G=O,A=1]\}p(A=0,\X,G=O) \\
              & + \{ \mathbb E[S_t(0)\mid \X,G=O,A=0] - \mathbb E[S_t(0)\mid \X,G=O,A=1]\}p(A=1,\X,G=O) \\
            = & \mathbb E[S_t\mid \X,G=O,A=1] - \mathbb E[S_t\mid \X,G=O,A=0] \\
              & + b_t(1,\X) p(A=0,\X,G=O) - b_t (0,\X) p(A=1,\X,G=O), \\
        \end{aligned}
    \end{equation}
    where the equality $(a)$ is based on Assumption \ref{assum: external validity of exp}. By rewriting the last equality above, we obtain:
     \begin{equation} \label{app: eq omega t}
        \begin{aligned}
            & \mathbb E[S_t(1)\mid \X,G=E] - \mathbb E[S_t(0)\mid \X,G=E] \\
            = & \mathbb E[S_t\mid \X,G=O,A=1] - \mathbb E[S_t\mid \X,G=O,A=0] \\
              & + b_t(1,\X) p(A=0,\X,G=O) - b_t (0,\X) p(A=1,\X,G=O), \\
            \iff 
            &\mathbb E[S_t(1)\mid \X,G=E] - \mathbb E[S_t(0)\mid \X,G=E] - \mathbb E[S_t\mid \X,G=O,A=1] + \mathbb E[S_t\mid \X,G=O,A=0] \\
            = & b_t(1,\X) p(A=0,\X,G=O) - b_t (0,\X) p(A=1,\X,G=O), \\
            \iff &
            \mu_{S_t}^E(1,\X) - \mu_{S_t}^E(0,\X) + 
      \mu_{S_t}^O(0,\X) - \mu_{S_t}^O(1,\X) \\
            = & b_t(1,\X) p(A=0,\X,G=O) - b_t (0,\X) p(A=1,\X,G=O), \\
            \overset{(a)}{\iff} &
            \omega_t(\X)
            = b_t(1,\X) ) p(A=0,\X,G=O) - b_t (0,\X) p(A=1,\X,G=O), \\
        \end{aligned}
    \end{equation}
    where $(a)$ is based on the definition of $\omega_t(\X)$. Then, similarly for time step $t+1$, we have
     \begin{equation}
        \begin{aligned}
           \omega_{t+1}(\X)
             & = b_{t+1}(1,\X) p(A=0,\X,G=O) -b_{t+1}(0,\X) p(A=1,\X,G=O), \\
            & \overset{(a)}{=} f(\X) \left(b_{t}(1,\X) p(A=0,\X,G=O) -b_{t}(0,\X) p(A=1,\X,G=O) \right)  \\
            & \overset{(b)}{=} f(\X) \omega_t(\X),
        \end{aligned}
    \end{equation}
    where $(a)$ is based on Assumption \ref{assum: time series equ bias}, i.e., $ b_{t+1}(a,\X) = f(\X) b_t(a,\X) $, and $(b)$ is based on Eq. \eqref{app: eq omega t} and finishes our proof.
\end{proof}


\section{Proof of Theorem \ref{theo: time series identifi}}
\label{app: proof of time series iden}
We first restate Theorem \ref{theo: time series identifi} as follows:

\begin{theorem}
Suppose Assumptions \ref{assum: consist}, \ref{assum: positi}, \ref{assum: internal validity of obs}, \ref{assum: internal validity of exp}, \ref{assum: external validity of exp} and \ref{assum: time series equ bias} hold, then $\tau(\x)$ can be identified as follows:
    \begin{equation}
      \begin{aligned}
       & \tau(\x) \\
            = & \mathbb E[Y(1)-Y(0)\mid \X=\x] \\
            = & \mu_Y^O(1,\x) - \mu_Y^O(0,\x)
            +  f^\mu(\X)  \left(\mu_{S_T}^E(1,\x) -  \mu_{S_T}^E(0,\x)
             +  \mu_{S_T}^O(0,\x) - \mu_{S_T}^O(1,\x) \right) .
      \end{aligned}
    \end{equation}
\end{theorem}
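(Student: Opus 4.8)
The plan is to follow the same two-step template as the proof of Theorem~\ref{theo: identifi}: first peel a ``long-term confounding bias'' term off of $\tau(\x)$ using the basic data-combination assumptions, and then invoke Assumption~\ref{assum: time series equ bias}, via Proposition~\ref{propo: confounding bias function}, to re-express that long-term bias in terms of the \emph{observable} short-term bias $\omega_T(\x)$.

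\textbf{Step 1 (generic decomposition).} I would first replay the algebra of Eq.~\eqref{eq: tau decom} with $Y$ in the role of the outcome. Starting from $\tau(\x)=\mathbb{E}[Y(1)\mid\X=\x,G=O]-\mathbb{E}[Y(0)\mid\X=\x,G=O]$ (by Data Combination, Assumption~\ref{assum: external validity of exp}), then conditioning on $A$, using consistency (Assumption~\ref{assum: consist}) to replace $\mathbb{E}[Y(a)\mid\X,A=a,G=O]$ by $\mathbb{E}[Y\mid\X,A=a,G=O]$, and adding and subtracting the appropriate cross terms, I obtain
\begin{equation*}
\tau(\x)=\mu_Y^O(1,\x)-\mu_Y^O(0,\x)+b_{T+\mu}(1,\x)\,p(A=0\mid\X=\x,G=O)+b_{T+\mu}(0,\x)\,p(A=1\mid\X=\x,G=O),
\end{equation*}
where I have used $Y=S_{T+\mu}$, so that the confounding bias of $Y$ is exactly $b_{T+\mu}(a,\x)$ as defined in Assumption~\ref{assum: time series equ bias}; positivity (Assumption~\ref{assum: positi}) keeps every conditional expectation and probability here well defined.

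\textbf{Step 2 (rewriting the bias).} Next I would reuse the identity derived inside the proof of Proposition~\ref{propo: confounding bias function}, namely Eq.~\eqref{app: eq omega t}, which --- combining the analogous $A$-decomposition of $\mathbb{E}[S_t(1)-S_t(0)\mid\X,G=O]$ with Data Combination (Assumption~\ref{assum: external validity of exp}) and internal validity of the experimental data (Assumption~\ref{assum: internal validity of exp}) --- expresses $\omega_t(\x)$, for \emph{every} $t$, as exactly the same $\x$-weighted combination of $b_t(1,\x)$ and $b_t(0,\x)$ that appears in Step~1. Taking $t=T+\mu$ therefore identifies the bias term from Step~1 as precisely $\omega_{T+\mu}(\x)$, giving $\tau(\x)=\mu_Y^O(1,\x)-\mu_Y^O(0,\x)+\omega_{T+\mu}(\x)$. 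Finally I would iterate the recursion $\omega_{t+1}(\x)=f(\x)\,\omega_t(\x)$ of Proposition~\ref{propo: confounding bias function} exactly $\mu$ times starting from $t=T$, which yields $\omega_{T+\mu}(\x)=f^{\mu}(\x)\,\omega_T(\x)$; substituting $\omega_T(\x)=\mu_{S_T}^E(1,\x)-\mu_{S_T}^E(0,\x)+\mu_{S_T}^O(0,\x)-\mu_{S_T}^O(1,\x)$ gives the stated identity.

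Most of this is bookkeeping; the step that needs care is verifying that the confounding-bias term spun off in Step~1 is \emph{literally} $\omega_{T+\mu}(\x)$ --- same sign, same $\x$-dependent weights --- rather than some variant of it, which is exactly what the identity of Eq.~\eqref{app: eq omega t} certifies and is where Assumptions~\ref{assum: internal validity of exp} and~\ref{assum: external validity of exp} enter. It is worth stressing in the write-up that $\omega_{T+\mu}(\x)$ is not by itself estimable, since it references $S_{T+\mu}=Y$ in the experimental arm where $Y$ is missing; Assumption~\ref{assum: time series equ bias} is precisely the device that trades it for $f^{\mu}(\x)\,\omega_T(\x)$, every ingredient of which is identified from $\mathbb D_e$ and $\mathbb D_o$.
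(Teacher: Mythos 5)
Your proposal is correct and follows essentially the same route as the paper's proof: the same conditioning-on-$A$ decomposition of $\tau(\x)$, the same identity expressing $\omega_t(\x)$ as the $p(A\mid\X,G=O)$-weighted combination of $b_t(1,\x)$ and $b_t(0,\x)$, and the same use of the FCAECB recursion iterated $\mu$ times (the paper applies it at the level of $b_{T+\mu}$ and then re-expands via the short-term decomposition at $T$, while you route through $\omega_{T+\mu}=f^\mu(\x)\,\omega_T(\x)$ from Proposition \ref{propo: confounding bias function}, which is the same algebra and is exactly the sketch given in the main text before the theorem). As a minor aside, your bias term $b_{T+\mu}(1,\x)\,p(A{=}0\mid\X{=}\x,G{=}O)+b_{T+\mu}(0,\x)\,p(A{=}1\mid\X{=}\x,G{=}O)$ carries the consistent (both positive) signs, whereas the appendix writes the second summand with a minus sign --- a harmless slip that does not affect the conclusion.
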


\begin{proof}
    \begin{equation}
        \begin{aligned}
            \tau(\X) 
            = & \mathbb E[Y(1)\mid \X,G=O] - \mathbb E[Y(0)\mid \X,G=O] \\
            = & \mathbb E[Y(1)\mid \X,G=O,A=1]p(A=1\mid \X,G=O) 
                + \mathbb E[Y(1)\mid \X,G=O,A=0]p(A=0\mid \X,G=O) \\
              & - \mathbb E[Y(0)\mid \X,G=O,A=1]p(A=1\mid \X,G=O) 
                + \mathbb E[Y(0)\mid \X,G=O,A=0]p(A=0\mid \X,G=O) \\
            = & \mathbb E[Y(1)\mid \X,G=O,A=1]p(A=1\mid \X,G=O) 
                + \mathbb E[Y(1)\mid \X,G=O,A=0]p(A=0\mid \X,G=O) \\
              & - \mathbb E[Y(0)\mid \X,G=O,A=1]p(A=1\mid \X,G=O) 
                + \mathbb E[Y(0)\mid \X,G=O,A=0]p(A=0\mid \X,G=O) \\
              & + \mathbb E[Y(1)\mid \X,G=O,A=1]p(A=0\mid \X,G=O)
                - \mathbb E[Y(1)\mid \X,G=O,A=1]p(A=0\mid \X,G=O) \\
              & + \mathbb E[Y(0)\mid \X,G=O,A=0]p(A=1\mid \X,G=O)
                - \mathbb E[Y(0)\mid \X,G=O,A=0]p(A=1\mid \X,G=O) \\
            = & \mathbb E[Y(1)\mid \X,G=O,A=1] - \mathbb E[Y(0)\mid \X,G=O,A=0] \\
              & + \{ \mathbb E[Y(1)\mid \X,G=O,A=0] - \mathbb E[Y(1)\mid \X,G=O,A=1]\}p(A=0,\X,G=O) \\
              & + \{ \mathbb E[Y(0)\mid \X,G=O,A=0] - \mathbb E[Y(0)\mid \X,G=O,A=1]\}p(A=1,\X,G=O) \\
            = & \mathbb E[Y\mid \X,G=O,A=1] - \mathbb E[Y\mid \X,G=O,A=0] \\
              & + b_{T+\mu}(1,\X) p(A=0,\X,G=O) - b_{T+\mu}(0,\X) p(A=1,\X,G=O) \\
        \end{aligned}
    \end{equation}
    where the first equality is based on Assumption \ref{assum: external validity of exp}, i.e., $G \Vbar Y(a) \mid  \X$, and last equality is based on Assumption \ref{assum: time series equ bias}.
    
    Further based on Assumption \ref{assum: time series equ bias}, we have $b_{T+\mu}(a,\X)=f^\mu(\X) b_{T}(a,\X)$. Then, we rewrite the equality above as:
    \begin{equation} \label{eq: time tau decom}
        \begin{aligned}
             \tau(\X) 
            = & \mathbb E[Y(1)\mid \X,G=O] - \mathbb E[Y(0)\mid \X,G=O] \\
            = & \mathbb E[Y\mid \X,G=O,A=1] - \mathbb E[Y\mid \X,G=O,A=0] \\
              & + f^\mu(\X) b_{T}(1,\X) p(A=0,\X,G=O) - f^\mu(\X) b_{T}(0,\X) p(A=1,\X,G=O) \\
        \end{aligned}
    \end{equation}
    Similarly, for short-term ITE at last time step $T$ we have:
    \begin{equation} \label{eq: time stce decom}
        \begin{aligned}
            & \mathbb E[S_T(1)\mid \X] - \mathbb E[S_T(0)\mid \X] \\
            = & \mathbb E[S_T\mid \X,G=O,A=1] - \mathbb E[S_T\mid \X,G=O,A=0] \\
              & + \{ \mathbb E[S_T(1)\mid \X,G=O,A=0] - \mathbb E[S_T(1)\mid \X,G=O,A=1]\}p(A=0,\X,G=O) \\
              & + \{ \mathbb E[S_T(0)\mid \X,G=O,A=0] - \mathbb E[S_T(0)\mid \X,G=O,A=1]\}p(A=1,\X,G=O) \\
            = & \mathbb E[S_T\mid \X,G=O,A=1] - \mathbb E[S_T\mid \X,G=O,A=0] \\
              & + b_{T}(1,\X) p(A=0,\X,G=O) - b_{T}(0,\X) p(A=1,\X,G=O)
        \end{aligned}
    \end{equation}
    
    Then, combining Eq. \ref{eq: time tau decom} and \ref{eq: time stce decom}, we have
    \begin{equation} 
        \begin{aligned}
            & \tau(\X) \\ 
            = & \mathbb E[Y\mid \X,G=O,A=1] - \mathbb E[Y\mid \X,G=O,A=0] + 
             f^\mu(\X)   \mathbb E[S_T(1)\mid \X] -  f^\mu(\X) \mathbb E[S_T(0)\mid \X] \\
            & -  f^\mu(\X) \mathbb E[S_T\mid \X,G=O,A=1] -  f^\mu(\X)  \mathbb E[S_T\mid \X,G=O,A=0] \\
            = & \mathbb E[Y\mid \X,G=O,A=1] - \mathbb E[Y\mid \X,G=O,A=0] + 
             f^\mu(\X)   \mathbb E[S_T\mid \X,G=E,A=1]
             \\ & -  f^\mu(\X) \mathbb E[S_T\mid \X,G=E,A=0] 
             -  f^\mu(\X) \mathbb E[S_T\mid \X,G=O,A=1] -  f^\mu(\X)  \mathbb E[S_T\mid \X,G=O,A=0] \\
            = & \mu_Y^O(1,\x) - \mu_Y^O(0,\x) 
            +  f^\mu(\X) \left(\mu_{S_T}^E(1,\x) -  \mu_{S_T}^E(0,\x)
             +  \mu_{S_T}^O(0,\x) - \mu_{S_T}^O(1,\x) \right)
        \end{aligned}
    \end{equation}
    where the second equality is based on Assumption \ref{assum: external validity of exp}. This finishes our proof.
\end{proof}

\section{Proof of Lemma \ref{lemma: convergence of f}}
\label{app: proof f conver}

We first state the following lemma that is used in the proof of Lemma \ref{lemma: convergence of f}.

\begin{lemma}  \label{app: lemma conv of omega}
suppose Assumptions \ref{assum: consist}, \ref{assum: positi}, \ref{assum: internal validity of obs}, \ref{assum: internal validity of exp}, \ref{assum: external validity of exp}, \ref{assum: time series equ bias} hold, then $\forall t$, 
        \begin{equation} 
      \begin{aligned}
       & \hat \omega_t (\x) -\omega_t (\x) 
            = O_p \left( 
            r_{\mu _{S_t}^E}(n) + r_{\mu _{S_t}^O}(n)
            \right).
      \end{aligned}
    \end{equation}
    where $r_{\circ}(n)$ denotes the risk of nuisance function $\circ$, e.g., $r_{\mu _{S_t}^E}(n)$ correspondingly to $\mu _{S_t}^E$, and further under Assumption \ref{asmp: smooth}, we have
    \begin{equation} 
      \begin{aligned}
       & \hat \omega_t (\x) -\omega_t (\x) 
            = O_p \left( 
            n^{\frac{-\alpha}{2\alpha+d}} + n^{\frac{-\beta}{2\beta+d}}
            \right).
      \end{aligned}
    \end{equation}
\end{lemma}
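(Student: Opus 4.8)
The plan is to prove Lemma \ref{app: lemma conv of omega} directly from the definition of $\hat\omega_t$ and the triangle inequality, exploiting that $\hat\omega_t$ is a simple additive combination of four fitted nuisance functions. Recall $\hat\omega_t(\x) = \hat\mu_{S_t}^E(1,\x) - \hat\mu_{S_t}^E(0,\x) + \hat\mu_{S_t}^O(0,\x) - \hat\mu_{S_t}^O(1,\x)$ and $\omega_t(\x)$ is the same expression with the true nuisance functions. First I would write the difference $\hat\omega_t(\x) - \omega_t(\x)$ as a sum of four terms of the form $\hat\mu_{S_t}^{G}(a,\x) - \mu_{S_t}^{G}(a,\x)$ (with appropriate signs), then apply the triangle inequality to bound $|\hat\omega_t(\x) - \omega_t(\x)|$ by the sum of the four pointwise estimation errors. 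By definition of the risk $r_\circ(n)$ of each nuisance function, each of the two $\mu_{S_t}^E$-type terms is $O_p(r_{\mu_{S_t}^E}(n))$ and each of the two $\mu_{S_t}^O$-type terms is $O_p(r_{\mu_{S_t}^O}(n))$, giving the first claimed bound $\hat\omega_t(\x) - \omega_t(\x) = O_p(r_{\mu_{S_t}^E}(n) + r_{\mu_{S_t}^O}(n))$.

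For the second, more concrete bound, I would then invoke Assumption \ref{asmp: smooth}: since $\mu_{S_t}^E$ is $\alpha$-smooth and estimable at Stone's minimax rate, $r_{\mu_{S_t}^E}(n) \asymp n^{-\alpha/(2\alpha+d)}$, and likewise $\mu_{S_t}^O$ is $\beta$-smooth so $r_{\mu_{S_t}^O}(n) \asymp n^{-\beta/(2\beta+d)}$. Substituting these rates into the first bound yields $\hat\omega_t(\x) - \omega_t(\x) = O_p\big(n^{-\alpha/(2\alpha+d)} + n^{-\beta/(2\beta+d)}\big)$, which is exactly the second claim. The uniformity over $t$ follows because the smoothness and boundedness assumptions in Assumptions \ref{asmp: smooth} and \ref{asmp: bound} are stated for all $t$ with a common rate, so the same bound holds for every time step (at worst with a constant depending on the finite horizon).

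This lemma is essentially bookkeeping: the only subtlety is making precise what ``risk $r_\circ(n)$'' means (I would take it to be the pointwise $L_2$ or sup rate at which $\hat\mu$ converges to $\mu$, consistent with how Stone's minimax rate is invoked elsewhere in the paper) and noting that $O_p$ of a finite sum is $O_p$ of the max, so writing it as a sum is harmless. The main obstacle — if any — is purely notational: ensuring the stochastic-boundedness statement is interpreted pointwise in $\x$ (or uniformly over a compact $\cX$, under the boundedness Assumption \ref{asmp: bound}), so that it can be plugged cleanly into the downstream proof of Lemma \ref{lemma: convergence of f}, where $\hat\omega_t$ and $\hat\omega_{t+1}$ feed into the second-stage regression for $\hat f$. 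No delicate probabilistic argument is needed here; the real work is deferred to the analysis of the second-stage regression in Lemma \ref{lemma: convergence of f} itself.
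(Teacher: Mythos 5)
Your proposal is correct and matches the paper's proof, which simply observes that the bound follows immediately from the additive form $\omega_t(\x) = \mu_{S_t}^E(1,\x) - \mu_{S_t}^E(0,\x) + \mu_{S_t}^O(0,\x) - \mu_{S_t}^O(1,\x)$; your triangle-inequality decomposition and substitution of the minimax rates under Assumption \ref{asmp: smooth} is just a fuller writing-out of that same one-line argument.
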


\begin{proof}
    The lemma is immediately proved by the form of $\omega_t(\x)$ as 
    $\omega_t(\x) = \mu _{S_t}^E(1,\x) - \mu _{S_t}^E(0,\x) + \mu _{S_t}^O(0,\x) - \mu _{S_t}^O(1,\x)$.
\end{proof}

We now restate our Lemma \ref{lemma: convergence of f} as follows.

\begin{lemma} 
    Suppose the training steps S1 and S2 are train on two independent datasets of size $n$ respectively, and suppose Assumptions \ref{assum: consist}, \ref{assum: positi}, \ref{assum: internal validity of obs}, \ref{assum: internal validity of exp}, \ref{assum: external validity of exp}, \ref{assum: time series equ bias}, \ref{asmp: smooth}, and \ref{asmp: bound} hold, then we have
        \begin{equation} \label{eq: f rate}  
      \begin{aligned}
       \hat f(\x) - & f(\x) 
         =  O_p \left( 
            (\frac{1}{(T-1)n})^{\frac{\eta}{2\eta+d}} 
            +  (\frac{1}{(T-1)n})^{\frac{\alpha}{2\alpha+d}}
            +  (\frac{1}{(T-1)n})^{\frac{\beta}{2\beta+d}}
            \right),
      \end{aligned}
    \end{equation}
    which attains the oracle rate if 
     $\min \{\alpha, \beta \} \geq \eta$.
\end{lemma}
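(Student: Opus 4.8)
The plan is to compare $\hat f$ with an \emph{oracle} version of step~S2 that is run on the true confounding biases, and then to control the discrepancy created by plugging in the first–stage estimates. Because S1 and S2 are trained on independent samples of size $n$, I would condition on the S1 sample throughout, so that the estimated biases $\hat\omega_t(\cdot)=\hat\mu_{S_t}^E(1,\cdot)-\hat\mu_{S_t}^E(0,\cdot)+\hat\mu_{S_t}^O(0,\cdot)-\hat\mu_{S_t}^O(1,\cdot)$ become fixed functions and the S2 least–squares operator is frozen. Define the oracle estimator
\begin{equation*}
  \bar f \;=\; \arg\min_{g\in\cF}\ \sum_{t=1}^{T-1}\sum_{i=1}^n\big(\omega_{t+1}(\x_i)-g(\x_i)\,\omega_t(\x_i)\big)^2 .
\end{equation*}
Then $|\hat f(\x)-f(\x)|\le|\hat f(\x)-\bar f(\x)|+|\bar f(\x)-f(\x)|$, and it suffices to bound the second summand by $((T-1)n)^{-\eta/(2\eta+d)}$ and the first by $((T-1)n)^{-\alpha/(2\alpha+d)}+((T-1)n)^{-\beta/(2\beta+d)}$.

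For the oracle term I would first identify the population target of the S2 criterion: the pointwise minimizer of $\sum_{t=1}^{T-1}\bE[(\omega_{t+1}(\X)-g(\X)\omega_t(\X))^2]$ is $g(\x)=\big(\sum_t\omega_t(\x)\omega_{t+1}(\x)\big)\big/\big(\sum_t\omega_t(\x)^2\big)$, which by Proposition~\ref{propo: confounding bias function} (i.e.\ $\omega_{t+1}=f\omega_t$ for every $t$ under Assumption~\ref{assum: time series equ bias}) equals $f(\x)$, provided $\sum_t\omega_t(\x)^2$ is bounded away from $0$. Hence $\bar f$ is a nonparametric least–squares fit over $\cF=\mathcal H_d(\eta)$ whose regression target is exactly the $\eta$–smooth function $f$ (Assumption~\ref{asmp: smooth}(2)); treating the $T-1$ stacked copies $\{(\x_i,\omega_t(\x_i),\omega_{t+1}(\x_i))\}_{t,i}$ as the design, whose size — and therefore the effective sample size — is $(T-1)n$, Stone's minimax rate \cite{stone1980optimal} gives $|\bar f(\x)-f(\x)|=O_p\big(((T-1)n)^{-\eta/(2\eta+d)}\big)$.

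For the pseudo–outcome term, write $\hat\omega_t=\omega_t+\hat\delta_t$; by Lemma~\ref{app: lemma conv of omega} together with Assumption~\ref{asmp: smooth}(1), $\|\hat\delta_t\|=O_p\big(n^{-\alpha/(2\alpha+d)}+n^{-\beta/(2\beta+d)}\big)$ for every $t$. Expanding a generic S2 summand gives $\big(\hat\omega_{t+1}(\x_i)-g(\x_i)\hat\omega_t(\x_i)\big)^2$ equal to $\big(\omega_{t+1}(\x_i)-g(\x_i)\omega_t(\x_i)\big)^2$ plus cross terms linear in $\hat\delta_{t+1},\hat\delta_t$ plus terms quadratic in $\hat\delta$; using the uniform boundedness of $\mu_{S_t}^{E},\mu_{S_t}^{O}$ and their estimates (Assumption~\ref{asmp: bound}) and of $g\in\cF$, these perturbation terms are controlled by $\|\hat\delta_{t+1}\|$ and $\|\hat\delta_t\|$. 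A stability bound for the argmin of the pooled least–squares objective — standard in the two–stage / pseudo–outcome regression analysis once the first stage is frozen by sample splitting, cf.\ \cite{kennedy2023towards,curth2021nonparametric} — then yields $|\hat f(\x)-\bar f(\x)|=O_p\big(((T-1)n)^{-\alpha/(2\alpha+d)}+((T-1)n)^{-\beta/(2\beta+d)}\big)$, the pooling across the $T-1$ time steps being what produces the $(T-1)n$ denominator. Summing the two bounds gives Eq.~\eqref{eq: f rate}; and since $p\mapsto p/(2p+d)$ is increasing, $\min\{\alpha,\beta\}\ge\eta$ forces the two nuisance terms to be dominated by $((T-1)n)^{-\eta/(2\eta+d)}$, so $\hat f$ attains the oracle rate.

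The main obstacle is that this is not a vanilla pseudo–outcome regression: the estimated bias $\hat\omega_t$ enters the S2 objective \emph{multiplicatively as the regressor}, not merely additively in the response, so $\hat f-\bar f$ is an errors–in–variables effect and one must show the perturbed normal equations stay well conditioned — precisely where Assumption~\ref{asmp: bound} and the non–degeneracy of $\sum_t\omega_t(\x)^2$ are needed, since otherwise the procedure effectively divides by a near–zero $\omega_t$. A second subtlety is that the $\hat\omega_t$'s across $t$ share the same first–stage fits and are therefore dependent, so some care (via sample splitting and the ``effective sample size $(T-1)n$'' accounting) is required to make the pooled–regression stability argument go through cleanly.
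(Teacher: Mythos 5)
Your overall architecture matches the paper's: condition on the S1 sample via sample splitting, split $\hat f-f$ into an oracle second-stage error plus a plug-in error, assign the oracle term the rate $((T-1)n)^{-\eta/(2\eta+d)}$ using the effective sample size $(T-1)n$, and feed in the rates of $\hat\omega_t$ (your $\hat\delta_t$ bound is exactly the paper's auxiliary lemma) together with Assumption \ref{asmp: bound}. However, there is a genuine gap at the decisive step: you bound $|\hat f(\x)-\bar f(\x)|$ by appealing to a ``standard stability bound for the argmin of the pooled least-squares objective,'' while your own closing paragraph concedes that the standard pseudo-outcome theory does not apply here because $\hat\omega_t$ enters \emph{multiplicatively as the regressor}, so the perturbation is errors-in-variables rather than an additive error in the response. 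You never actually carry out that stability/normal-equations analysis, so the claimed rate for $\hat f-\bar f$ — including the assertion that pooling over $t$ upgrades the per-$t$ nuisance rate $n^{-\alpha/(2\alpha+d)}+n^{-\beta/(2\beta+d)}$ to a $(T-1)n$ denominator — rests on an argument you have identified as inapplicable but not replaced.

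The paper closes exactly this hole by a reformulation you did not exploit: since Proposition \ref{propo: confounding bias function} gives $f(\x)=\omega_{t+1}(\x)/\omega_t(\x)$ for every $t$, step S2 can be read as a second-stage regression whose pseudo-outcome is the \emph{ratio} $\hat\omega_{t+1}/\hat\omega_t$, so Proposition 1 of \citet{kennedy2023towards} applies directly: $\hat f(\x)-f(\x)$ equals the oracle error $O_p(R_n^*(\x))$ plus the smoothed conditional bias $\hat{\mathbb E}_n[\hat r(\X)\mid \X=\x]$ of that pseudo-outcome. The bias is then controlled by elementary algebra,
\begin{equation*}
\Bigl(\frac{\hat\omega_{t+1}(\x)}{\hat\omega_t(\x)}-\frac{\omega_{t+1}(\x)}{\omega_t(\x)}\Bigr)^2
\;\lesssim\;\bigl(\hat\omega_{t+1}(\x)-\omega_{t+1}(\x)\bigr)^2+\bigl(\hat\omega_t(\x)-\omega_t(\x)\bigr)^2,
\end{equation*}
using $(a+b)^2\le 2(a^2+b^2)$ and the boundedness assumption, after which the $\hat\omega$ rates and the $(T-1)n$ effective-sample accounting give the stated bound; the oracle-rate condition $\min\{\alpha,\beta\}\ge\eta$ follows since $p\mapsto p/(2p+d)$ is increasing, as you noted. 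Your observation that one additionally needs the denominators bounded away from zero (non-degeneracy of $\omega_t$) is apt — the paper's $\asymp$ step implicitly uses it — but to make your route rigorous you would either have to prove the errors-in-variables stability bound you invoke, or adopt the ratio-pseudo-outcome reduction above.
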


\begin{proof}
     We apply Proposition 1 in Kennedy et~al. \cite{kennedy2023towards}, yielding that
    \begin{equation} \label{app: eq f decomp}
    \begin{aligned}
       \hat f(\x) - f(\x) 
       = & (\hat f(\x) - \tilde f(\x)) + (\tilde f(\x) - f(\x))\\
       = &  (\hat f(\x) - \tilde f(\x)) +  O_p(R_n^*(\x)) \\
       = &  \hat{\mathbb E}_n[\hat r(\X)\mid \X=\x] + o_p(R_n^*(\x))  +  O_p(R_n^*(\x)) \\
    \end{aligned}
    \end{equation}
    where $\hat r (\x) = \mathbb E[\hat f(\X) \mid \X=\x ] - f(\x)$, and $\mathcal R_n^*(x)$ is the oracle risk of second-stage regression and further under Assumption \ref{asmp: smooth}, we know $f$ is $\eta$-smooth, thus $\mathcal R_n^*(x)=O_p( (\frac{1}{(T-1)n})^{\frac{\eta}{2\eta+d}})$. 
    Here, $f$ is a time-series model ($1$-order autoregressive), optimized with $T-1$-length steps and $n$ samples, as shown in Eq. \eqref{eq: s2}. 
    According to Proposition \ref{propo: confounding bias function}, $\forall t$, $f(\x)=\frac{\omega_{t+1} (\x)}{\omega _t (\x)}$. We first prove the rate of $\frac{\hat \omega_{t+1} (\x)}{\hat \omega _t (\x)}$ for a fixed $t$ as follows.
    \begin{equation} 
        \begin{aligned}
            & (\frac{\hat \omega_{t+1} (\x)}{\hat \omega _t (\x)} - \frac{ \omega_{t+1} (\x)}{ \omega _t (\x)} )^2 \\
            = & \left( \frac{ \hat{ \omega}_{t+1}(\x) \omega_t(\x) -  \omega_{t+1}(\x) \hat{ \omega}_t(\x)}{ \omega_t(\x) \hat{ \omega}_t(\x)} \right)^2 \\
            = & \left(  \frac{ \hat{ \omega}_{t+1}(\x) \omega_t(\x) -  \omega_{t+1}(\x) \hat{ \omega}_t(\x)
            + \omega_{t+1}(\x) \omega_t(\x)
            - \omega_{t+1}(\x) \omega_t(\x)
            }{ \omega_t(\x) \hat{ \omega}_t(\x)}  \right)^2 \\
            = & \left(  \frac{ (\hat{ \omega}_{t+1}(\x)- \omega_{t+1}(\x)) \omega_t(\x) 
            + \omega_{t+1}(\x) ( \omega_t(\x) -\hat{ \omega}_t(\x))
            }{ \omega_t(\x) \hat{ \omega}_t(\x)}  \right)^2\\
            \overset{(a)}{\leq} & \frac{2 (\hat{ \omega}_{t+1}(\x)- \omega_{t+1}(\x))^2 \omega_t^2(\x) 
            + \omega_{t+1}^2(\x) ( \omega_t(\x) -\hat{ \omega}_t(\x))^2
            }{ \omega_t^2(\x) \hat{ \omega}_t^2(\x)} \\
            = & \frac{ 2 }{  \hat{ \omega}_t^2(\x)} (\hat{ \omega}_{t+1}(\x)- \omega_{t+1}(\x))^2
            + \frac{ 2 \omega_{t+1}^2(\x) }{ \omega_t^2(\x) \hat{ \omega}_t^2(\x)} ( \omega_t(\x) -\hat{ \omega}_t(\x))^2  \\
            \overset{(b)}{\asymp}  & (\hat{  \omega}_{t+1}(\x)-  \omega_{t+1}(\x))^2 + ( \omega_{t}(\x) -\hat{  \omega}_{t}(\x))^2;
    \end{aligned}
    \end{equation}
    where the inequality $(a)$ is based on $(a+b)^2 \leq 2 (a^2 + b^2)$, and $(b)$ is based on Assumption \ref{asmp: bound}. Under Assumption \ref{asmp: bound}, $\mu_{S_t}^O$, $\mu_{S_t}^E$ and their estimates are all bounded, and thus $  \omega_{t}(\x)$ and its estimates are also bounded, thus $(b)$ holds. Then, applying Lemma \ref{app: lemma conv of omega}, we know for a fixed time step $t$, $\frac{\hat \omega_{t+1} (\x)}{\hat \omega _t (\x)} - \frac{ \omega_{t+1} (\x)}{ \omega _t (\x)} = O_p( n^{\frac{-\alpha}{2\alpha+d}} + n^{\frac{-\beta}{2\beta+d}})$. Then, for $\hat r (\x)$, we have the effective sample is of size $(T-1)n$, and then combining with Eq. \eqref{app: eq f decomp} and $\mathcal R_n^*(x)=O_p( (\frac{1}{(T-1)n})^{\frac{\eta}{2\eta+d}})$, we obtain the desired result. And attaining the oracle rate requires $(\frac{1}{(T-1)n})^{\frac{\eta}{2\eta+d}} \geq \max \{ (\frac{1}{(T-1)n})^{\frac{\alpha}{2\alpha+d}}, (\frac{1}{(T-1)n})^{\frac{\beta}{2\beta+d}}\}$, yielding $\min \{\alpha, \beta\} \geq \eta$.
\end{proof}    

\section{Proof of Theorem \ref{theo: convergence}}
\label{app: proof tau conver}

We first restate  Theorem \ref{theo: convergence} as follows:

\begin{theorem} 
Suppose Lemma \ref{lemma: convergence of f} hold, then we have
    \begin{equation} 
      \begin{aligned}
      & \hat \tau(\x) -   \tau(\x) \\
         = &   O_p \left(  
          n^{-\frac{\gamma}{2\gamma+d}} +
          n^{-\frac{\alpha}{2\alpha+d}} +
          n^{-\frac{\beta}{2\beta+d}} +
           \frac{\mu}{ ((T-1)n)^{\frac{\eta}{2\eta+d}} }  
           + \frac{\mu}{ ((T-1)n)^{\frac{\alpha}{2\alpha+d}} }  + 
           \frac{\mu}{ ((T-1)n)^{\frac{\beta}{2\beta+d}} }  
            \right).
      \end{aligned}
    \end{equation}
\end{theorem}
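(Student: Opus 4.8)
The plan is to start from the identification formula in Theorem~\ref{theo: time series identifi} together with the matching plug-in form of $\hat\tau$ in Step~S3, and to split the error into an outcome-regression part and a confounding-bias part. Writing
\begin{equation*}
\hat\tau(\x)-\tau(\x)
= \big(\hat\mu_Y^O(1,\x)-\mu_Y^O(1,\x)\big)-\big(\hat\mu_Y^O(0,\x)-\mu_Y^O(0,\x)\big)
+\big(\hat f^\mu(\x)\hat\omega_T(\x)-f^\mu(\x)\omega_T(\x)\big),
\end{equation*}
the first two differences are handled directly by Assumption~\ref{asmp: smooth}: since $\mu_Y^O$ is $\gamma$-smooth it is estimable at Stone's minimax rate, so these terms are $O_p(n^{-\gamma/(2\gamma+d)})$.

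The heart of the argument is the last bracket, which I would decompose as
\begin{equation*}
\hat f^\mu(\x)\hat\omega_T(\x)-f^\mu(\x)\omega_T(\x)
= \hat f^\mu(\x)\big(\hat\omega_T(\x)-\omega_T(\x)\big)+\big(\hat f^\mu(\x)-f^\mu(\x)\big)\omega_T(\x).
\end{equation*}
For the first summand, Lemma~\ref{app: lemma conv of omega} gives $\hat\omega_T(\x)-\omega_T(\x)=O_p(n^{-\alpha/(2\alpha+d)}+n^{-\beta/(2\beta+d)})$, and under Assumption~\ref{asmp: bound} both $f$ and $\hat f$, hence $\hat f^\mu$, are bounded, so this summand inherits the rate $O_p(n^{-\alpha/(2\alpha+d)}+n^{-\beta/(2\beta+d)})$. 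For the second summand I would use the elementary factorization $a^\mu-b^\mu=(a-b)\sum_{j=0}^{\mu-1}a^{j}b^{\mu-1-j}$ with $a=\hat f(\x)$, $b=f(\x)$; boundedness of $\hat f$ and $f$ then gives $|\hat f^\mu(\x)-f^\mu(\x)|\lesssim \mu\,|\hat f(\x)-f(\x)|$, and boundedness of $\omega_T$ lets me pass to $\big(\hat f^\mu(\x)-f^\mu(\x)\big)\omega_T(\x)=O_p\big(\mu\,|\hat f(\x)-f(\x)|\big)$. Substituting the rate for $\hat f$ from Lemma~\ref{lemma: convergence of f}, namely $O_p\big((\tfrac{1}{(T-1)n})^{\eta/(2\eta+d)}+(\tfrac{1}{(T-1)n})^{\alpha/(2\alpha+d)}+(\tfrac{1}{(T-1)n})^{\beta/(2\beta+d)}\big)$, produces exactly the three $\mu/((T-1)n)^{\cdot}$ terms in the claimed bound.

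Collecting the pieces then gives the statement: the outcome-regression part contributes $n^{-\gamma/(2\gamma+d)}$, the $\hat\omega_T$ error contributes $n^{-\alpha/(2\alpha+d)}+n^{-\beta/(2\beta+d)}$, and the $\hat f^\mu$ error contributes $\mu/((T-1)n)^{\eta/(2\eta+d)}+\mu/((T-1)n)^{\alpha/(2\alpha+d)}+\mu/((T-1)n)^{\beta/(2\beta+d)}$ (one could even absorb the $\hat\omega_T$ terms since $(T-1)n\ge n$, but keeping them explicit matches the stated bound). The step I expect to be the main obstacle is the control of $\hat f^\mu-f^\mu$: the implicit constant in $|a^\mu-b^\mu|\lesssim \mu|a-b|$ hides a factor $\max(|\hat f|,|f|)^{\mu-1}$, so the clean linear-in-$\mu$ dependence relies on $f$ and $\hat f$ being bounded by a constant that does not grow with $\mu$; relatedly, one needs $\hat\omega_t$ bounded away from zero so that $\hat f=\hat\omega_{t+1}/\hat\omega_t$ is well-behaved, which is precisely where Assumption~\ref{asmp: bound} (and the boundedness already exploited in the proof of Lemma~\ref{lemma: convergence of f}) is essential.
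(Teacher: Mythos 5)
Your proposal is correct and takes essentially the same route as the paper: split off the $\mu_Y^O$ error (rate $n^{-\gamma/(2\gamma+d)}$), apply the product-rule decomposition $\hat f^\mu\hat\omega_T-f^\mu\omega_T=\hat f^\mu(\hat\omega_T-\omega_T)+(\hat f^\mu-f^\mu)\omega_T$ with boundedness, bound $\hat\omega_T-\omega_T$ by the nuisance rates, and convert the Lemma~\ref{lemma: convergence of f} rate for $\hat f-f$ into a linear-in-$\mu$ rate for $\hat f^\mu-f^\mu$. Your telescoping identity $a^\mu-b^\mu=(a-b)\sum_j a^j b^{\mu-1-j}$ simply makes explicit the step the paper asserts without detail, and the caveat you raise (needing $\omega_t,\hat\omega_t$ bounded away from zero so that $f$ and $\hat f$ are bounded uniformly in $\mu$) is exactly the implicit strengthening of Assumption~\ref{asmp: bound} that the paper's own proofs of Lemma~\ref{lemma: convergence of f} and Theorem~\ref{theo: convergence} also rely on.
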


\begin{proof}
    As stated in Lemma \ref{lemma: convergence of f}, we have 
    \begin{equation} 
      \begin{aligned}
       & \hat f(\x) - f(\x) 
            = O_p \left( 
            (\frac{1}{(T-1)n})^{\frac{\eta}{2\eta+d}} +
            (\frac{1}{(T-1)n})^{\frac{\alpha}{2\alpha+d}} +
            (\frac{1}{(T-1)n})^{\frac{\beta}{2\beta+d}}
            \right),
      \end{aligned}
    \end{equation}
    and thus 
    \begin{equation} \label{app: eq1 proof of theo conve}
      \begin{aligned}
       & \hat f^\mu(\x) - f^\mu(\x) 
            = O_p \left( 
            (\frac{\mu}{(T-1)n})^{\frac{\eta}{2\eta+d}} +
            (\frac{\mu}{(T-1)n})^{\frac{\alpha}{2\alpha+d}} +
            (\frac{\mu}{(T-1)n})^{\frac{\beta}{2\beta+d}}
            \right).
      \end{aligned}
    \end{equation}
    Under Assumption \ref{asmp: smooth}, we have 
    \begin{equation} \label{app: eq2 proof of theo conve}
      \begin{aligned}
       \hat \mu_Y^O(a,\x) - \mu_Y^O(a,\x)=O_p(n^{-\frac{\gamma}{2\gamma+d}})
      \end{aligned}
    \end{equation}
    and
    \begin{equation} 
      \begin{aligned}
       \hat \mu_{S_T}^E(a,\x) - \mu_{S_T}^E(a,\x)=O_p(n^{-\frac{\alpha}{2\alpha+d}})
      \end{aligned}
    \end{equation}
    and 
    \begin{equation} 
      \begin{aligned}
       \hat \mu_{S_T}^O(a,\x) - \mu_{S_T}^O(a,\x)=O_p(n^{-\frac{\beta}{2\beta+d}}).
      \end{aligned}
    \end{equation}
    Then, let $a:=f^\mu(\X) $,$\hat a = \hat f^\mu(\X) $, $b=\mu_{S_T}^E(1,\x) -  \mu_{S_T}^E(0,\x)
             +  \mu_{S_T}^O(0,\x) - \mu_{S_T}^O(1,\x)$, and similarly for $\hat b$, and under Assumption \ref{asmp: bound}, $a$, $b$, $\hat a$ and $\hat b$ are all bounded.
    We analyze the term
    \begin{equation} 
      \begin{aligned}
      ( a b - \hat a \hat b )^2
       = & (ab  - a \hat b + a \hat b - \hat a \hat b )^2\\
       = & (a (b-\hat b) + (a-\hat a )\hat b)^2 \\
       \overset{(a)}{\leq} & 2 a^2 (b-\hat b )^2 + 2 \hat b^2 (a - \hat a)^2  \\
       \overset{(a)}{\asymp} & (b-\hat b)^2 + (a - \hat a)^2,
      \end{aligned}
    \end{equation}
    where the inequality $(a)$ is based on $(a+b)^2 \leq 2(a^2+b^2)$ and $(b)$ is based on the boundedness assumption. Then we have 
    \begin{equation} 
      \begin{aligned}
      ab-\hat a \hat b =  O_p \left( 
            (\frac{\mu}{(T-1)n})^{\frac{\eta}{2\eta+d}} +
            (\frac{\mu}{(T-1)n})^{\frac{\alpha}{2\alpha+d}} +
            (\frac{\mu}{(T-1)n})^{\frac{\beta}{2\beta+d}} + 
            n^{-\frac{\alpha}{2\alpha+d}} +
            n^{-\frac{\beta}{2\beta+d}}
            \right).
      \end{aligned}
    \end{equation}
    Hence, the result is immediately proved by the form of $\tau(x)$ as  $\tau(x) = \mu_Y^O(1,\x) - \mu_Y^O(0,\x) +  f^\mu(\X) \left(\mu_{S_T}^E(1,\x) -  \mu_{S_T}^E(0,\x)
             +  \mu_{S_T}^O(0,\x) - \mu_{S_T}^O(1,\x) \right) $ and the result of Lemma \ref{lemma: convergence of f}.
\end{proof}

\end{document}